\definecolor{br}{HTML}{f46f1c}    %
\definecolor{ss}{HTML}{009cb0}    %
\definecolor{bh}{HTML}{bbb900}    %
\definecolor{ge}{HTML}{7b61ff}    %
\definecolor{bho}{HTML}{64a100}   %
\definecolor{ls}{HTML}{ff5eb1}    %
\definecolor{fs}{HTML}{30e0a1}    %
\definecolor{sc}{HTML}{cc00ff}    %
\definecolor{bg}{HTML}{ff5a8b}    %
\definecolor{bd}{HTML}{6b3e90}    %
\definecolor{lt}{HTML}{a8007c}    %
\definecolor{st}{HTML}{ffd000}    %
\definecolor{highlight}{HTML}{FFF3CD} %
\theoremstyle{plain}
\newtheorem{theorem}{Theorem}[section]
\newtheorem{proposition}[theorem]{Proposition}
\newtheorem{lemma}[theorem]{Lemma}
\newtheorem{corollary}[theorem]{Corollary}
\theoremstyle{definition}
\newtheorem{definition}[theorem]{Definition}
\newtheorem{assumption}[theorem]{Assumption}
\theoremstyle{remark}
\newcommand{\E}[2][]{\mathbb{E}_{#1}\!\left[#2\right]}
\DeclareMathOperator*{\esssup}{ess\,sup}
\newcommand{\norm}[1]{\left\lVert #1 \right\rVert^{2}}
\newcommand{\normtv}[1]{\left\lVert #1 \right\rVert_{\mathrm{TV}}}
\title{Streaming Federated Learning with Markovian Data}
\author{%
  Tan-Khiem Huynh\thanks{Inria, INSA Lyon, CITI, UR3720, 69621 Villeurbanne, France}\\
  Inria,\\INSA Lyon
  \And 
  Malcolm Egan\\
  Inria,\\INSA Lyon
  \And
  Giovanni Neglia\\ 
  Inria,\\Université Côte d'Azur\\
  \AND
  Jean-Marie Gorce\\
  Inria,\\INSA Lyon
}
\begin{document}

\maketitle

\begin{abstract}
    Federated learning (FL) is now recognized as a key framework for communication-efficient collaborative learning. Most theoretical and empirical studies, however, rely on the assumption that clients have access to pre-collected data sets, with limited investigation into scenarios where clients continuously collect data. 
    In many real-world applications, particularly when data is generated by physical or biological processes, client data streams are often modeled by non-stationary Markov processes.
    Unlike standard i.i.d. sampling, the performance of FL with Markovian data streams remains poorly understood due to the statistical dependencies between client samples over time. 
    In this paper, we investigate whether FL can still support collaborative learning with Markovian data streams. Specifically, we analyze the performance of Minibatch SGD, Local SGD, and a variant of Local SGD with momentum. We answer affirmatively under standard assumptions and smooth non-convex client objectives: the sample complexity is proportional to the inverse of the number of clients, with a communication complexity comparable to the i.i.d. scenario. 
    However, the sample complexity for Markovian data streams remains higher than for i.i.d. sampling. Our analysis is validated via experiments with real pollution monitoring time series data.
\end{abstract}

\section{Introduction}\label{sec:intro}

As edge networks grow in scale, clients are increasingly diverse \citep{Lim2020federated}. Many of these clients are capable of continuously collecting and learning from data, instead of simply storing large quantities of historical data \citep{pmlr-v162-khodadadian22a}. This \textit{streaming data} arises in applications ranging from health~\citep{greco2019,guo2024} 
and environmental monitoring \citep{Hill2010anomaly}, to control of robots \citep{Kober2013reinforcement}. 

Each data stream provides detailed information about individual clients. However, applications such as environmental resource management \citep{Sirimewan2024deep} or public health monitoring \citep{Chae2018predicting} often aim to derive population-level insights, leveraging data streams from multiple individuals to learn a global model.
As this data may be sensitive or costly to communicate, centralized training solutions are undesirable. Federated Learning (FL) has emerged as a key strategy for communication-efficient collaborative learning \citep{federated_learning_open_problems}. In conventional FL scenarios, clients have access to local datasets used to train models locally. These models are then transmitted to a central server, where aggregation facilitates collaborative learning. FL minimizes communication overhead while learning models with high accuracy and ensuring client data are not shared.

Clients with streaming data must also cope with memory constraints \citep{Marfoq2023federated, Liu2023dynamite}. This means that clients have limited control over data sampling: local updates can only be computed using the current data samples stored in memory. Moreover, data in health and environmental monitoring applications is often generated by non-linear physical and biological systems. 
The data generated by these systems is frequently modeled as non-stationary Markovian data streams \citep{Anderson2015stochastic,VanKampen1992stochastic}. When client memory caches are continually refreshed by Markovian streaming data, the samples available to compute updates are statistically dependent and non-stationary \textit{in time}.

In this paper, we address the following question: \textit{Can FL with Markovian data streams support collaborative learning?} While Markovian sampling has been explored in the centralized setting~\citep{pmlr-v202-even23a}, existing work in the federated setting has focused on Federated Reinforcement Learning \citep{federated_rl,pmlr-v162-khodadadian22a,mangold2024scafflsa} with convex objectives. However,
classification and regression tasks in systems with non-linear dynamics often rely on deep neural networks or non-convex regularization~\citep{Yuan2020deep, Lucia2020deep}, resulting in non-convex minimization problems during training. We therefore consider a family of training tasks beyond the work in \cite{federated_rl,pmlr-v162-khodadadian22a,mangold2024scafflsa} characterized by \textit{smooth non-convex loss functions}. Our main contributions can be summarized as follows:
\begin{enumerate}
    \item[(i)] \textit{Impact of Markovian sampling:} We rigorously characterize the impact of Markovian sampling on the convergence rate of Minibatch and Local Stochastic Gradient Descent (SGD), the two most common baselines in FL. A key conclusion is that linear speed-up is achieved under standard assumptions on the objectives and heterogeneity: the sample complexity is proportional to the inverse of the number of clients.
    \item[(ii)] \textit{Mitigating client heterogeneity:}
    We show that a momentum-based variant of Local SGD introduced in the i.i.d. setting \citep{cheng2024momentum,yan2025problemparameter} can effectively mitigate the client drift with Markovian data streams. In particular, unlike Local SGD, Local SGD with Momentum matches the performance of Minibatch SGD up to a constant without requiring any bound on client data heterogeneity.
    \item[(iii)] \textit{Validation on Environmental Monitoring Data:} The key difference between i.i.d. and Markovian sampling is statistical dependence between consecutive samples. We validate our analysis using multi-site pollution time series data \citep{beijing_multi-site_air_quality_501}, which shows the benefit of collaboration on real data with dependent samples.
\end{enumerate}

\section{Related Works}\label{sec:related_works}

Edge devices often encounter the challenge of processing continuous data streams while operating under memory constraints \citep{Marfoq2023federated,Liu2023dynamite}. Learning from streaming data presents numerous challenges, with solutions dependent on the data-generating process. One line of research addresses scenarios where the data distribution evolves over time, with continual learning methods designed to adapt the model to the changing data distribution \citep{Schwarz2018scalable}. Another line focuses on adversarial settings, where the data stream is generated by an adversary \citep{Shalev-Shwartz2012online}, often addressed using online learning algorithms with regret guarantees. %

In this paper, we focus on learning from data generated by Markovian processes. Markovian processes have been successfully applied to model various physical and biological systems~\citep{Anderson2015stochastic,VanKampen1992stochastic}, and encompass the special case where data samples are drawn i.i.d. from an unknown distribution.

In the centralized setting, this learning problem can be framed as a stochastic approximation problem, typically addressed by variants of stochastic gradient descent (SGD). While SGD with i.i.d. data is well-understood \citep{robbins_monro, lan_first_order_stochastic_opt}, the Markovian nature of the data stream introduces challenges due to the temporal correlation between samples. 

For uniformly ergodic Markov chains, \cite{Duchi2011ErgodicMD} first demonstrated that mirror descent achieves optimal convergence rates for Lipschitz, general convex, and non-convex problems. Recently, \cite{pmlr-v162-dorfman22a} proposed a random batch size algorithm that adapts to the mixing time of the Markov chain, while \cite{NEURIPS2023_first_order_method} adopted the same technique to develop accelerated methods and established lower bounds for strongly convex objectives.

For general Markov chains, existing analysis of SGD-type algorithms \citep{sun2018markov, doan2020finite, doan2020convergence} have shown sub-optimal dependence on the mixing time and on the variance of stochastic gradient estimates relative to the lower bounds in \cite{NEURIPS2023_first_order_method, Duchi2011ErgodicMD}. We refer to \cite[Table 1]{NEURIPS2023_first_order_method} for an exhaustive review of the literature on SGD with Markovian noise.

In the case of SGD with Markovian noise on a finite state space---a common scenario in distributed optimization \citep{johanson_token_algorithm}---\cite{pmlr-v202-even23a, pmlr-v206-hendrikx23a} developed convergence theories and variance reduction techniques for first-order methods, demonstrating improvements over the gossip algorithm \citep{boyd_gossip}.

In the context of FL, a few papers have considered learning from data streams.
\cite{odeyomi2021differentially} and \cite{yoon2021federated} framed the problem in an adversarial and in a continual learning setting, respectively. \cite{Chen2020asynchronous} imposed strong homogeneity assumptions, such as requiring all clients to share a common optimal model.
\cite{Marfoq2023federated} assumed that each client's data stream consists of i.i.d.~samples, while allowing for heterogeneous data distributions across clients. The study focused on temporal dependencies arising from the use of local memory and analyzed the impact of heterogeneous memory constraints.
\cite{Liu2023dynamite} studied how to jointly tune batch sizes and number of local updates, but did not have convergence results in the streaming setting.

In the case of Markovian data streams, recent work has primarily focused on Federated Reinforcement Learning (FRL) with \textit{linear function approximation}. A key challenge is demonstrating that collaboration is beneficial in this setting; namely, that the per-client sample complexity of FRL algorithms decreases inversely with the number of clients. While this \textit{linear speed-up} is well-understood for i.i.d.~data, the impact of collaboration remains an open question 
in the presence of statistical dependence, as is the case with Markovian data streams.
\cite{pmlr-v162-khodadadian22a, Fabbro2023FederatedTL} demonstrated the benefits of cooperation under a strong homogeneity assumption on the client environment. Heterogeneous settings have been considered in \cite{pmlr-v97-doan19a, pmlr-v151-jin22a} %
with a linear speed-up established in \cite{wang2024federated} subject to heterogeneity assumptions, which were then relaxed in
\cite{mangold2024scafflsa} by incorporating control variates \citep{karimireddy_scaffold_2021}. %
Unfortunately, these results do not readily extend beyond the FRL setting.

Beyond Markovian data streams, temporal correlations can arise from other factors in the FL setting. For instance, memory update strategies, as explored in \cite{Marfoq2023federated}, can introduce dependencies between subsequent updates. Similarly, the availability of clients can exhibit Markovian behavior, as discussed in \cite{ribero2022federated, rodio2024federated, sun2025debiasing}. While these sources of correlation are distinct from the Markovian data stream focus of this paper, they highlight the broader significance of understanding and mitigating temporal dependencies in the FL context.

\section{Problem Setup}\label{sec:setup}

In this section, we introduce our problem setting. We denote by $[M] \coloneqq [1, M]$ the set of positive integers up to $M$, and by $[K]_0 \coloneqq [0, K-1]$ the set of non-negative integers less than $K$.

\subsection{Streaming Federated Learning}\label{sec:streaming_fl}

Consider $M$ clients, each with an objective given by 
\begin{equation}
	F_m(w) = \mathbb{E}_{x \sim \pi_m}[f_m(w; x)],
\end{equation}
where $f_m: \mathbb{R}^d \times \Omega_m \rightarrow \mathbb{R}$ and $\pi_m$ is the target data distribution.
for client~$m$. In the context of supervised learning, the objective $F_m$ corresponds to the true risk %
of the loss function $f_m$, parameterized by the parameter $w$, on a data sample $x$ drawn from $\pi_m$. %

In FL, the $M$ clients collaborate to solve
\begin{equation}\label{eq:fed_opt}
	\min_{w \in \mathbb{R}^d} \frac{1}{M} \sum_{m=1}^M F_m(w),
\end{equation}
by iterating a two-phase procedure over multiple \textit{communication rounds}. Two popular examples of this procedure are Minibatch SGD (Algorithm~\ref{alg:minibatch_sgd} in Appendix~\ref{appendix:preliminaries}) and Local SGD or FedAvg (Algorithm~\ref{alg:local_sgd} in Appendix~\ref{appendix:preliminaries}) \citep{federated_learning_open_problems}. In the first phase of the $t$-th communication round, client $m$ has access to $K$ data samples.
These samples are used to compute a local update: either a gradient estimate $g_t^{(m)}$ in the case of Minibatch SGD, or a model iterate $w_t^{(m,K)}$ obtained after performing $K$ local gradient steps in the case of Local SGD. In the second phase, the server aggregates the local updates from all clients—typically via averaging—to update the global model.
In the second phase, the updates are aggregated via averaging.

In the standard FL scenario, the samples available at client $m$ in any communication round $t$ are fixed, corresponding to pre-collected data \citep{federated_learning_open_problems}. In contrast, in streaming FL, the data available to client $m$  can change in every communication round. Streaming FL models scenarios where clients have a limited memory cache of $K$ samples and new data is collected over time \citep{Marfoq2023federated,Liu2023dynamite}. 

Streaming FL raises a key challenge: the data sampling process is not completely controlled by the clients. As a consequence, i.i.d. data samples from the target distribution $\pi_m$ may not be available. For example, patient data in health monitoring is continuously collected from wearable devices. Statistical dependence between data collected over time naturally arises due to latent factors such as daily routines, sleep patterns, and weather conditions \citep{Li2017digital}. Moreover, the initial data collection or warm-up period \citep{Baghdadi2021monitoring} might capture atypical behavior, such as adjustments to new wearable devices. As a consequence, the data stream of a patient $m$ will, in general, be non-stationary and not reflect the long-term target distribution $\pi_m$.

\subsection{Markovian Data Streams}\label{sec:markovian_data_streams}

Data streams arising in health and environmental monitoring often exhibit specific statistical structures. Indeed, many biological, chemical, and physical systems are governed by non-stationary Markov processes \citep{Anderson2015stochastic,VanKampen1992stochastic}. Data streams arising in federated reinforcement learning also form Markov processes \citep{pmlr-v162-khodadadian22a,Qi2021FederatedRL}; however, the use of stationary exploration policies leads to stationary data streams.

We model the data stream $X_m = \left( x^m_t \right)_{t \in \mathbb{N}}$ of client $m$ by a time-homogeneous Markov chain evolving on the state space $\Omega_m \subseteq \mathbb{R}^d$ with the corresponding Borel $\sigma$-field $\mathcal{B}_m$ \citep{rosenthal_general_markov_chain,Meyn_Tweedie_Glynn_2009}. As an abuse of notation, for all $t \in \mathbb{N}$, and for all $k \in [K]_0$, we denote by $x_t^{(m, k)} \coloneqq x^m_{Kt+k} $ the $k$-th sample available to client $m$ at the $t$-th communication round. We may also use $x_t^m \coloneqq \left( x_t^{(m, k)} \right)_{k \in [K]_0} $ to denote all $K$ data samples available to client $m$ in communication round $t$. The evolution of the time-homogeneous Markov chain for client $m$ is characterized by the initial distribution $x_0^{(m, 0)} \sim \mu_m$ and by the transition kernel $P_m$.

We focus on the case of independent clients, where $X_m$ is independent of $X_{m'}$ for $m \neq m'$. 
This scenario occurs when client data streams are generated by non-interacting processes. Key examples include patient health monitoring and environmental monitoring across spatially separated regions, where observations from different clients are often assumed to be independent \citep{Sainani2010importance}.\footnote{In the case of  
nearby or overlapping regions, statistical dependence between the data samples may arise \citep{Montero2015spatial}.}

The Markov chain $X_m$ admits a stationary distribution $\pi_m$ on $\Omega_m$ if
\begin{align}
    \int_{x \in \Omega_m} \mathrm{d} \pi_m(x)P_m(x,\mathrm{d}x_{t}^{(m,k)}) = \mathrm{d} \pi_m(x_t^{(m,k)}).
\end{align}
We say that $X_m$ is stationary if $x_0^{(m, 0)} \sim \pi_m$; otherwise, $X_m$ is non-stationary. 

The stationary distributions $\pi_m,~m \in [M]$ correspond to the target distributions in \eqref{eq:fed_opt}, which capture the long-term statistics of the data samples. Since long-term statistics, rather than transient behavior of the samples,  are of interest for learning in health and environmental monitoring scenarios, the stationary distribution is a natural choice for the target distribution. Hence, we make the following assumption, which guarantees that the Markov process associated with each client admits a unique stationary distribution.

\begin{assumption}\label{assump:client_chain}
    The data samples of client $m \in [M]$  are drawn from an independent time-homogeneous Markov chain $X_m$ defined on $\left( \Omega_m, \mathcal{B}_m \right)$ with transition kernel $P_m$ and initial distribution $\mu_m$, converging to the unique stationary distribution $\pi_m$. 
\end{assumption}

Assumption~\ref{assump:client_chain} ensures that samples $x_t^m$ from the data stream of client $m$ will be approximately drawn from $\pi_m$ as $t$ diverges.

We also define by $ X = \left( X_m \right) _{m \in [M]} $ the \textit{system-level Markov process} defined on $ \left( \Omega, \mathcal{B} \right)$, where $ \Omega \coloneqq \bigtimes_{m=1}^M \Omega_m$ and $ \mathcal{B} \coloneqq \bigotimes_{m=1}^M \mathcal{B}_m $. This Markov chain, at each time step, evolves independently on each of $M$ coordinates according to the corresponding transition kernel $P_m$. We denote by $P$ its transition matrix.  Similarly, for the ease of notation, we write $x_t^k \coloneqq \left( x_t^{(m, k)} \right)_{m \in [M]} $ and $x_t \coloneqq \left( x_t^k \right)_{k \in [K]_0} $.

In Proposition~\ref{propo:product_chain}, we provide a formal characterization of $P$ and we also prove that $X$ is indeed a well-defined Markov chain. Furthermore, under Assumption~\ref{assump:client_chain}, $X$ admits a stationary distribution $\pi = \bigotimes_{m=1}^M \pi_m$. We denote by $\nu_{ps}$ the \textit{pseudo spectral gap} of $P$ \citep[Section 3.1]{pauline_concentration_ineq}.

We also make the following assumption on the absolute continuity of the transition kernel $P$ with respect to the stationary distribution $\pi$. 

\begin{assumption}\label{assump:continuity}
    For every $x \in \Omega$, the probability measure $P(x, .)$ is absolutely continuous with respect to the stationary measure $\pi$, and its Radon-Nikodym derivative $ \frac{dP(x, .)}{d\pi}$ is uniformly bounded on sets with non-zero measure with respect to $\pi$. In particular,
    \begin{align*}
        C_{\infty} = \underset{x \in \Omega}{\sup} \esssup \left\lvert \frac{dP(x, .)}{d\pi} \right\rvert < \infty
    \end{align*}
\end{assumption}

We note that Assumption~\ref{assump:continuity} is generally mild, in the sense that it only requires, for every $x \in \Omega$, $P(x, .)$ is absolutely continuous with respect to $\pi$, and every $\pi$-integrable function is also $P(x,.)$-integrable. This assumption is commonly used in the analysis of other stochastic approximation algorithms with Markovian noise, for example, Markov chain Monte Carlo \citep[Theorem 12]{fan_hoeffdings}. In the case where $\Omega$ is finite, $C_{\infty} = \max_{x,y \in \Omega} \frac{P(x,y)}{\pi(y)}$. We provide further discussion of the dependence of $C_{\infty}$ on the transition kernel in Appendix~\ref{appendix:c_inf}.

In contrast to most existing studies on stochastic optimization (approximation) with Markovian data, we do not impose any assumption on the speed of convergence to the stationary measure of the underlying Markov processes, such as the commonly used uniform geometric ergodicity assumption (see, e.g., \citetext{\citealp[Assumption A3]{NEURIPS2023_first_order_method}; \citealp[Assumption A3]{mangold2024scafflsa}; \citealp[Assumption 3]{wang2024federated}; \citealp[Assumption 3]{zhu2025achievingtighterfinitetimerates}}), which requires exponentially fast convergence to the stationary measure. Prior works rely on this fast mixing property to handle the non-stationarity of Markovian data when taking conditional expectations, which is ubiquitous in the analysis of SGD-type algorithms. However, our analysis shows that, under the mild Assumption~\ref{assump:continuity}, a simple change of measures suffices. As uniform geometric ergodicity is typically difficult to verify in real-world scenarios, our results are therefore more broadly applicable. For completeness, in Appendix~\ref{appendix:extended_proof}, we provide an extension of our analysis under the stronger uniform geometric ergodicity assumption.

\section{Convergence Analysis}\label{sec:results}
\subsection{Assumptions}\label{sec:assumptions}
To conduct the convergence analysis, we first impose the following assumptions on the objective functions and the stochastic gradients.

 \begin{assumption}\label{assump:smoothness}
	The global objective function $F$ is $L$-smooth; that is, for all $w_1, w_2 \in \mathbb{R}^d$:
	\begin{align}
		F(w_2) &\leq F(w_1) + \langle \nabla F(w_1), w_2 - w_1 \rangle + \frac{L}{2} \left\lVert w_1 - w_2 \right\rVert^2.
	\end{align}
\end{assumption}

In some cases, we will need the following assumption that requires the sample-wise local objective functions to be $L$-smooth:
\begin{assumption}\label{assump:sample_wise_smooth}
	For every $m \in [M]$, the sample-wise objective functions $f_m(w; x)$ are $L$-smooth; that is, for all $w_1, w_2 \in \mathbb{R}^d, x \in \Omega_m$,
	\begin{align}
		f_m(w_2; x) &\leq f_m(w_1; x) + \langle \nabla f_m (w_1; x), w_2 - w_1 \rangle + \frac{L}{2} \left\lVert w_1 - w_2 \right\rVert^2.
	\end{align}
\end{assumption}
We note that Assumption~\ref{assump:sample_wise_smooth} implies Assumption~\ref{assump:smoothness}.

Next, we state an assumption on the noise of the stochastic gradients. %
\begin{assumption}\label{assump:bounded_gradient_noise}
	For all clients $m \in [M]$, for all $x \in \Omega_m$, and $w \in \mathbb{R}^d$, there exists $\sigma > 0$ such that
	\begin{equation}
		\left\lVert \nabla f_m(w; x) - \nabla F_m(w) \right\rVert^2 \leq \sigma^2.
	\end{equation}
\end{assumption}
Assumption~\ref{assump:bounded_gradient_noise} uniformly bounds the gradient estimation error for each client. While stronger than the bounded variance assumption, this assumption is standard for stochastic optimization with Markovian noise in both the centralized \citep{NEURIPS2023_first_order_method} and federated \citep{mangold2024scafflsa} settings.

Heterogeneity in client objective functions and data distributions
is a well-known challenge for FL algorithms based on Local SGD, as it leads to client drift \citep{karimireddy_scaffold_2021}. In order to provide convergence guarantees in the i.i.d. setting, it is necessary to impose constraints on the local gradient norms \citep{NEURIPS2020_woodworth, yun2022reshuffling}. Client heterogeneity poses the same difficulty in the Markovian setting. As such, we impose the following bounded gradient dissimilarity (BGD) assumption.
\begin{assumption}\label{assump:heterogeneity}
	There exist $\theta \geq 0$ and $\delta \geq 1$ such that, for all $w \in \mathbb{R}^d$,
	\begin{equation}
		\frac{1}{M} \sum _{m=1}^M \left\lVert \nabla F_m(w) \right\rVert^2 \leq \theta^2 + \delta^2 \left\lVert \nabla F(w) \right\rVert^2.
	\end{equation}
\end{assumption}

This assumption was first introduced in \citep{karimireddy_scaffold_2021} and has since become ubiquitous in the analysis of Local SGD \citep{yun2022reshuffling}. We highlight that the BGD assumption includes the assumptions in \cite{NEURIPS2020_woodworth,Li2020On}. While weaker heterogeneity assumptions exist in the literature, these are not straightforward to apply for non-convex objectives where the BGD assumption is standard \citep{pmlr-v119-koloskova20a}. In Section~\ref{sec:results}, we establish iteration and communication complexity bounds of Local SGD under these assumptions. 

Finally, we define the following three classes of problems for the analysis of Minibatch SGD, Local SGD, and Local SGD with Momentum (\mbox{SGD-M}). Recall from Section~\ref{sec:markovian_data_streams} that the system-level Markov chain is denoted by $X$. We assume throughout that all objective functions are in general non-convex and $F$ is bounded from below by $F^* > -\infty$. 
\begin{align}
    \mathcal{F}_1(L, \sigma, \nu_{ps}, C_{\infty})
    \coloneqq& \{(F,X):\textrm{Assumptions~\ref{assump:client_chain}, \ref{assump:continuity}, \ref{assump:smoothness},}
    \textrm{and \ref{assump:bounded_gradient_noise}}~\mathrm{hold}\}.\\
    \mathcal{F}_2(L, \sigma, \nu_{ps}, C_{\infty})\notag
    & \coloneqq \{(F,X):\textrm{Assumptions~\ref{assump:client_chain}, \ref{assump:continuity},} 
    \textrm{ \ref{assump:sample_wise_smooth} and \ref{assump:bounded_gradient_noise}}~\mathrm{hold}\}.\\
    \mathcal{F}_3(L, \sigma, \theta, \delta, \nu_{ps}, C_{\infty})
    &\coloneqq\{(F,X):\textrm{Assumptions~\ref{assump:client_chain}, \ref{assump:continuity},}
    \textrm{and \ref{assump:sample_wise_smooth} to \ref{assump:heterogeneity}}~\mathrm{hold}\}.\\
\end{align}
We note that 
\begin{align}
	\mathcal{F}_3(L,\sigma,\theta,\delta, \nu_{ps}, C_{\infty}) \subset \mathcal{F}_2(L,\sigma, \nu_{ps}, C_{\infty}) \subset \mathcal{F}_1(L,\sigma,\nu_{ps}, C_{\infty}).
\end{align}

In the following sections, we analyze the convergence of Minibatch SGD, Local SGD, and Local SGD-M for the problem classes $\mathcal{F}_1,\mathcal{F}_3$, and $\mathcal{F}_2$, respectively. 
For any $\epsilon >0$, we derive conditions on the step sizes, the number of local steps $K$, and the number of communication rounds $T$ that ensure convergence to an $\epsilon$-accurate solution, i.e.,
\begin{align}
	\mathbb{E}[\|\nabla F(\hat{w}_T)\|^2] \leq \epsilon^2,
\end{align}
where $\hat{w}_T$ is the output of each algorithm, drawn uniformly at random from the iterates $w_0, \dots, w_{T-1}$.

We use the Landau big-O notation $\mathcal{O}(\cdot)$, $a \vee b$ for $\max\left\{a, b\right\}$, $a \wedge b$ for $\min\left\{ a, b \right\}$, $\Delta_0$ for $F(w_0) - F^*$ and $G_0$ for $\frac{1}{M} \sum_{m=1}^M \mathbb{E} \left[ 
\left\lVert \nabla F_m(w_0) \right\rVert^2 \right] $.
\subsection{Minibatch SGD}
We first establish the following upper bound for Minibatch SGD.
\begin{theorem}\label{theorem:minibatch_sgd}
    For the problem class $\mathcal{F}_1(L, \sigma, \nu_{ps}, C_{\infty})$, with global step size $\gamma \leq 1/L$, the iterates of Minibatch SGD satisfy:
    \begin{equation}\label{eq:minibatch_sgd_convergence}
    \begin{split}
        \mathbb{E} \left[ \left\lVert \nabla F(\hat{w}_T) \right\rVert^2 \right] 
        \leq \mathcal{O} \left( \frac{\Delta_0}{\gamma T} + \frac{C_{\infty}\sigma^2}{\nu_{ps}MK} \right).
    \end{split}
    \end{equation}
\end{theorem}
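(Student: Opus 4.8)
The plan is to combine the standard non-convex descent argument with a change-of-measure reduction to the stationary regime, where the pseudo-spectral gap $\nu_{ps}$ controls the variance of the minibatch gradient. Writing the Minibatch SGD update as $w_{t+1}=w_t-\gamma g_t$ with $g_t=\frac{1}{MK}\sum_{m=1}^M\sum_{k\in[K]_0}\nabla f_m(w_t;x_t^{(m,k)})$, I would first apply Assumption~\ref{assump:smoothness} and, rather than relying on the unbiasedness of $g_t$ (which fails under Markovian sampling), use the polarization identity $-\langle\nabla F(w_t),g_t\rangle=\tfrac12\|\nabla F(w_t)-g_t\|^2-\tfrac12\|\nabla F(w_t)\|^2-\tfrac12\|g_t\|^2$. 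Since $\gamma\le 1/L$ makes the resulting coefficient $\tfrac{\gamma}{2}(L\gamma-1)$ of $\|g_t\|^2$ nonpositive, the $\|g_t\|^2$ term drops, giving $F(w_{t+1})\le F(w_t)-\tfrac{\gamma}{2}\|\nabla F(w_t)\|^2+\tfrac{\gamma}{2}\|\nabla F(w_t)-g_t\|^2$. Telescoping over $t=0,\dots,T-1$, taking expectations, using $\sum_t(F(w_t)-F(w_{t+1}))\le\Delta_0$, and invoking the uniform-sampling definition of $\hat w_T$ yields $\mathbb{E}\|\nabla F(\hat w_T)\|^2\le\tfrac{2\Delta_0}{\gamma T}+\tfrac1T\sum_t\mathbb{E}\|\nabla F(w_t)-g_t\|^2$. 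This isolates the first term of the claimed bound and reduces everything to controlling the mean-square gradient error.

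The core is therefore a lemma bounding $\mathbb{E}\|\nabla F(w_t)-g_t\|^2$ by $\mathcal{O}(C_\infty\sigma^2/(\nu_{ps}MK))$. I would condition on $\mathcal{H}_{t}$, the $\sigma$-field generated by all samples of global index below $Kt$, so that $w_t$ is fixed and, because the chains $X_m$ are independent (Assumption~\ref{assump:client_chain}), the per-client errors $Y_m\coloneqq\frac1K\sum_k(\nabla f_m(w_t;x_t^{(m,k)})-\nabla F_m(w_t))$ are conditionally independent across $m$. The key step is a change of measure: given $\mathcal{H}_{t}$, the law of client $m$'s batch path differs from the stationary batch path (started at $\pi_m$) only through its first transition, whose Radon--Nikodym derivative is bounded by $C_\infty$ under Assumption~\ref{assump:continuity}; hence $\mathbb{E}[\|Y_m\|^2\mid\mathcal{H}_{t}]\le C_\infty\,\mathbb{E}_{\pi_m}\|Y_m\|^2$. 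Under the stationary law the summands are mean-zero, by the definition $\nabla F_m=\mathbb{E}_{\pi_m}\nabla f_m$, so $\mathbb{E}_{\pi_m}\|Y_m\|^2$ is exactly the variance of an empirical average along a stationary Markov chain, which the pseudo-spectral-gap variance inequality of \citep{pauline_concentration_ineq} bounds by $\mathcal{O}(\sigma^2/(\nu_{ps}K))$ after using Assumption~\ref{assump:bounded_gradient_noise} to control the per-sample variance by $\sigma^2$. Averaging the $M$ conditionally independent contributions then produces the factor $1/M$, giving the variance part $\mathcal{O}(C_\infty\sigma^2/(\nu_{ps}MK))$.

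The main obstacle is the non-stationarity bias. Because $g_t$ is not conditionally unbiased, the cross-client terms $\langle b_m,b_{m'}\rangle$ with $b_m\coloneqq\mathbb{E}[Y_m\mid\mathcal{H}_{t}]$ do not vanish, and a crude bound on $\|b_m\|$ fails to carry the $1/M$ factor. To handle this I would show the bias is higher order in $1/K$: writing $b_m=\frac1K\sum_k(P_m^{k+1}-\pi_m)h_m$ for $h_m=\nabla f_m(w_t;\cdot)-\nabla F_m(w_t)$ and using a Poisson/resolvent telescoping identity together with the pseudo-spectral gap to bound the solution, one obtains $\|b_m\|=\mathcal{O}(C_\infty\sigma/(\nu_{ps}K))$, so the aggregated bias contribution is $\mathcal{O}(C_\infty^2\sigma^2/(\nu_{ps}^2K^2))$ and is dominated by the variance term in the regime of interest. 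Collecting the $\tfrac{2\Delta_0}{\gamma T}$ term together with this mean-square gradient bound, uniformly over $t$, yields~\eqref{eq:minibatch_sgd_convergence}. I expect the change-of-measure reduction---replacing the true batch law by the stationary one at the cost of the single factor $C_\infty$---to be the conceptual crux, since it is precisely what removes the need for geometric ergodicity, while the pseudo-spectral-gap variance inequality supplies the $1/(\nu_{ps}K)$ decay and the per-client averaging supplies the $1/M$ speed-up.
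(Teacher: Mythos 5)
Your first half---smoothness plus the polarization identity, dropping the $\|g_t\|^2$ term under $\gamma \le 1/L$, telescoping, and reducing the theorem to a bound on $\mathbb{E}\|g_t-\nabla F(w_t)\|^2$---is exactly the paper's Lemma~\ref{lem:minibatch_descent_lem}. The divergence, and the gap, is in the gradient-error lemma. The paper (Lemma~\ref{lem:concentration_ineq}) performs the change of measure \emph{once, at the system level}: conditioning on $x_{t-1}^{K-1}$ and replacing the law of the entire round's trajectory by the stationary product law $\pi=\bigotimes_m\pi_m$ costs a single factor $C_\infty$ (Assumption~\ref{assump:continuity} is stated for the product kernel), and under $\pi$ the estimator is exactly unbiased, so \cite[Theorem~3.7]{pauline_concentration_ineq} applied to the $K$-step empirical average of $x\mapsto\frac1M\sum_m\nabla f_m(w_t;x^m)-\nabla F(w_t)$ gives $\frac{4}{\nu_{ps}K}$ times the one-sample stationary variance, which is $\sigma^2/M$ by coordinate independence under $\pi$. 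No residual bias term ever appears.

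By changing measure per client you create the cross-term problem that you then have to solve, and your solution has two holes. First, the bound $\|b_m\|=\mathcal{O}\left(C_\infty\sigma/(\nu_{ps}K)\right)$ does not follow from anything you have established: Jensen applied to your own conditional second-moment bound only yields $\|b_m\|=\mathcal{O}\left(\sqrt{C_\infty}\,\sigma/\sqrt{\nu_{ps}K}\right)$, and upgrading $1/\sqrt{K}$ to $1/K$ via a Poisson/resolvent identity needs summability of $\|P_m^{k}(x,\cdot)-\pi_m\|_{\mathrm{TV}}$ in $k$ or a resolvent operator-norm bound---machinery that the paper's assumptions (pseudo-spectral gap plus bounded Radon--Nikodym derivative, deliberately \emph{without} geometric ergodicity) do not obviously supply, and which the paper is structured to avoid. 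Second, even granting that bias bound, the cross-term contribution $\mathcal{O}\left(C_\infty^2\sigma^2/(\nu_{ps}K)^2\right)$ is dominated by $\mathcal{O}\left(C_\infty\sigma^2/(\nu_{ps}MK)\right)$ only when $K\gtrsim C_\infty M/\nu_{ps}$, whereas Theorem~\ref{theorem:minibatch_sgd} is asserted for every $K$ with no such restriction; your ``regime of interest'' caveat concedes exactly this. The repair is simple: perform the change of measure jointly over all $M$ clients before invoking the stationary variance inequality, after which the bias question is moot.
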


The proof of Theorem~\ref{theorem:minibatch_sgd} is given in Appendix~\ref{appendix:minibatch}. The first deterministic term depends on the initial iterate and decays with the same rate as in the i.i.d. setting. The last term arises from the dependence structure and non-stationarity of the Markov data process. In contrast to i.i.d. sampling \citep{NEURIPS2020_woodworth}, these terms cannot be controlled by the step size. In particular, Theorem~\ref{theorem:minibatch_sgd} reveals that the stochastic gradient noise is amplified by a factor inversely proportional to the spectral gap of the system-level Markov chain. 

The following corollary characterizes the communication and sample complexity of Minibatch SGD.
\begin{corollary}\label{cor:minibatch_sgd}
	Under the conditions of Theorem~\ref{theorem:minibatch_sgd}, 
    the required number of local steps and communication rounds for Minibatch SGD to achieve $\mathbb{E}[\|\nabla F(\hat{w}_T)\|^2] \leq \epsilon^2$ are:
    \begin{align*}
        K = \mathcal{O} \left( \frac{C_{\infty} \sigma^2}{\nu_{ps}M\epsilon^2} \right), \quad T = \mathcal{O} \left( \frac{L \Delta_0}{\epsilon^2} \right).
    \end{align*}
\end{corollary}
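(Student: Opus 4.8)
The plan is to substitute the admissible step size into the bound of Theorem~\ref{theorem:minibatch_sgd} and then balance its two summands so that each is at most $\epsilon^2/2$; their sum is then at most $\epsilon^2$, which certifies the $\epsilon$-accuracy requirement $\mathbb{E}[\|\nabla F(\hat{w}_T)\|^2] \leq \epsilon^2$.

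First I would fix the step size at its largest admissible value, $\gamma = 1/L$, which is permitted under the hypothesis $\gamma \leq 1/L$. This is the key choice: since the deterministic term $\Delta_0/(\gamma T)$ is decreasing in $\gamma$, taking $\gamma$ maximal minimizes the number of communication rounds needed to drive that term below a target level. Substituting $\gamma = 1/L$ into \eqref{eq:minibatch_sgd_convergence} yields
\begin{equation}
    \mathbb{E}\left[\|\nabla F(\hat{w}_T)\|^2\right] \leq \mathcal{O}\!\left(\frac{L\Delta_0}{T} + \frac{C_{\infty}\sigma^2}{\nu_{ps} M K}\right).
\end{equation}

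Next I would impose that each of the two terms be of order $\epsilon^2$. Requiring the stochastic term to satisfy $\frac{C_{\infty}\sigma^2}{\nu_{ps} M K} \leq \epsilon^2/2$ and solving for the per-round batch size gives $K = \mathcal{O}\!\left(\frac{C_{\infty}\sigma^2}{\nu_{ps} M \epsilon^2}\right)$; requiring the deterministic term to satisfy $\frac{L\Delta_0}{T} \leq \epsilon^2/2$ and solving for the number of rounds gives $T = \mathcal{O}\!\left(\frac{L\Delta_0}{\epsilon^2}\right)$. With these choices both summands are $\mathcal{O}(\epsilon^2)$, hence so is their sum, establishing the claim.

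There is no substantive obstacle here: the statement is a direct corollary obtained by term balancing. The only point meriting attention is that the two error sources \emph{decouple}. As noted after Theorem~\ref{theorem:minibatch_sgd}, the stochastic gradient noise cannot be reduced through the step size and is controlled solely by the batch size $K$, whereas the optimization error is controlled solely by $T$ through the choice $\gamma = 1/L$. Because $K$ and $T$ are therefore independent tuning knobs, each requirement can be met separately without the two constraints interacting, which is precisely what allows the clean product form of the complexities above.
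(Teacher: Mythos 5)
Your proof is correct and follows essentially the same route as the paper: the paper likewise chooses $K \geq \frac{8 C_{\infty}\sigma^2}{\nu_{ps}M\epsilon^2}$ and $T \geq \frac{4\Delta_0}{\gamma\epsilon^2} \geq \frac{4L\Delta_0}{\epsilon^2}$ so that each summand of the bound in Theorem~\ref{theorem:minibatch_sgd} is at most $\epsilon^2/2$, with $\gamma = \Theta(1/L)$ yielding the stated order for $T$. Your observation that $K$ and $T$ control the two error terms independently is exactly the mechanism the paper exploits.
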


The proof of Corollary~\ref{cor:minibatch_sgd} is deferred to Appendix~\ref{appendix:minibatch}.

\subsection{Local SGD}
In the analysis of Local SGD, bounded heterogeneity assumptions are required, even in the i.i.d. setting \citep{karimireddy_scaffold_2021}. We consider the problem class $\mathcal{F}_3(L, \sigma, \theta, \delta, \nu_{ps}, C_{\infty})$, which requires sample-wise smoothness (Assumption~\ref{assump:sample_wise_smooth}) and bounded heterogeneity (Assumption~\ref{assump:heterogeneity}).
\begin{theorem}\label{theorem:local_sgd}
    For any $F \in \mathcal{F}_3(L, \sigma, \theta, \delta, \nu_{ps}, C_{\infty})$, with local step size $\eta \leq \mathcal{O} \left( \frac{1}{LK\delta^2}\right)$, the iterates of Local SGD satisfy:
    \begin{equation}\label{eq:local_sgd_convergence}
        \begin{split}
            \mathbb{E} \left[ \left\lVert \nabla F(\hat{w}_T) \right\rVert^2 \right]
            \leq \mathcal{O} & \left( \frac{\Delta_0}{\eta K T}
            + \frac{C_{\infty} \sigma^2}{\nu_{ps}MK} \right.
             \left.+ \frac{L K \eta\left(\theta^2+\sigma^2\right)}{\delta^2}  \right).
        \end{split}
    \end{equation}
\end{theorem}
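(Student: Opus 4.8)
The plan is to follow the standard descent-lemma analysis of Local SGD, adapted to handle Markovian, non-stationary samples via the change-of-measure bound of Assumption~\ref{assump:continuity} together with the pseudo spectral gap $\nu_{ps}$. Let $w_t$ denote the global iterate entering round $t$, so that each client sets $w_t^{(m,0)}=w_t$, runs $K$ local steps $w_t^{(m,k+1)}=w_t^{(m,k)}-\eta\,\nabla f_m(w_t^{(m,k)};x_t^{(m,k)})$, and the server forms $w_{t+1}=w_t-\eta\sum_{k=0}^{K-1}\bar g_t^{(k)}$ with $\bar g_t^{(k)}\coloneqq\frac{1}{M}\sum_{m=1}^M\nabla f_m(w_t^{(m,k)};x_t^{(m,k)})$. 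Applying Assumption~\ref{assump:smoothness} to consecutive global iterates gives the one-step inequality
\begin{align}
\mathbb{E}[F(w_{t+1})] \leq \mathbb{E}[F(w_t)] - \eta\, \mathbb{E}\!\left[\Big\langle \nabla F(w_t), \sum_{k=0}^{K-1}\bar g_t^{(k)}\Big\rangle\right] + \frac{L\eta^2}{2}\, \mathbb{E}\!\left[\Big\lVert \sum_{k=0}^{K-1}\bar g_t^{(k)}\Big\rVert^2\right].
\end{align}
I would then decompose each local gradient as $\nabla f_m(w_t^{(m,k)};x_t^{(m,k)}) = \nabla F_m(w_t) + [\nabla F_m(w_t^{(m,k)})-\nabla F_m(w_t)] + [\nabla f_m(w_t^{(m,k)};x_t^{(m,k)})-\nabla F_m(w_t^{(m,k)})]$: the first piece averages over $m$ to $\nabla F(w_t)$ and yields the descent term $-\eta K\lVert\nabla F(w_t)\rVert^2$, the second is the client-drift error, and the third is the Markovian gradient noise.

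Second, I would bound the \emph{client drift} $\Xi_t\coloneqq\frac{1}{M}\sum_m\sum_k\mathbb{E}[\lVert w_t^{(m,k)}-w_t\rVert^2]$. Unrolling the local recursion and using sample-wise smoothness (Assumption~\ref{assump:sample_wise_smooth}), the bounded-noise Assumption~\ref{assump:bounded_gradient_noise}, and the BGD Assumption~\ref{assump:heterogeneity} yields a schematic bound $\Xi_t \lesssim \eta^2 K^2\big(\sigma^2+\theta^2+\delta^2\lVert\nabla F(w_t)\rVert^2\big)$, valid once $\eta\le\mathcal{O}(1/(LK\delta^2))$ so that the geometric growth of the recursion is tamed and the $\delta^2$-dependent piece can be reabsorbed. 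Substituting this into the inner-product error (controlled by smoothness and Young's inequality) isolates the residual drift term $\tfrac{LK\eta(\theta^2+\sigma^2)}{\delta^2}$ in the final bound and contributes only a small multiple of $\lVert\nabla F(w_t)\rVert^2$ that the descent term dominates.

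Third --- and this is the crux --- I would control the Markovian gradient noise. Because $x_t^{(m,k)}$ is neither stationary nor independent across $k$, the third piece above does not have zero conditional mean, so the usual i.i.d.\ variance argument fails. Instead I would (i) use Assumption~\ref{assump:continuity} to pass, via a single change of measure with Radon--Nikodym constant $C_{\infty}$, from the law of the transient samples to the stationary law $\pi_m$, thereby bounding the bias without any mixing-time hypothesis; and (ii) invoke the pseudo-spectral-gap concentration of \citep{pauline_concentration_ineq} to bound the variance of the sum over the $K$ temporally correlated local samples and the $M$ independent clients. Combining (i)--(ii) produces a noise contribution of order $\tfrac{C_{\infty}\sigma^2}{\nu_{ps}MK}$, exhibiting the same $MK$ reduction as Minibatch SGD (Theorem~\ref{theorem:minibatch_sgd}) but inflated by $1/\nu_{ps}$ to account for the temporal dependence. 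The delicate point is that these gradients are evaluated at the \emph{drifting} iterates $w_t^{(m,k)}$ rather than at the fixed anchor $w_t$; I would resolve this by separating the noise evaluated at $w_t$ (to which the change-of-measure and spectral argument apply directly) from a smoothness-controlled remainder that is folded back into $\Xi_t$.

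Finally, I would assemble the three contributions in the one-step inequality, enforce $\eta\le\mathcal{O}(1/(LK\delta^2))$ so the drift-induced multiple of $\lVert\nabla F(w_t)\rVert^2$ is at most a fraction of $\eta K$, telescope over $t=0,\dots,T-1$ using $F(w_0)-F^*=\Delta_0$, divide by $\eta K T$, and average uniformly over $w_0,\dots,w_{T-1}$ to obtain the claimed bound $\tfrac{\Delta_0}{\eta K T}+\tfrac{C_{\infty}\sigma^2}{\nu_{ps}MK}+\tfrac{LK\eta(\theta^2+\sigma^2)}{\delta^2}$. I expect the main obstacle to be the third step: in Local SGD the Markovian dependence (handled by $C_{\infty}$ and $\nu_{ps}$) and the client drift (handled by smoothness and BGD) interact at \emph{every} local step, so disentangling them cleanly is harder than in Minibatch SGD, where all stochastic gradients share the common anchor $w_t$.
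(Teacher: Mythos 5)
Your proposal is correct and follows essentially the same route as the paper: a smoothness-based descent lemma on the global iterate, a split of the aggregated gradient into the noise evaluated at the anchor $w_t$ (handled by the change of measure via $C_{\infty}$ plus the pseudo-spectral-gap concentration bound, yielding $C_{\infty}\sigma^2/(\nu_{ps}MK)$) and a sample-wise-smoothness remainder folded into the client drift $\xi_t$, followed by a recursive drift bound whose $\delta^2\lVert\nabla F(w_t)\rVert^2$ part is absorbed into the descent term under $\eta\le\mathcal{O}(1/(LK\delta^2))$ using the BGD assumption, and finally telescoping and uniform averaging. The "delicate point" you flag — that the stochastic gradients sit at the drifting iterates — is resolved exactly as the paper does, via the decomposition $g_t=\bar g_t+(g_t-\bar g_t)$ with $\mathbb{E}_t\lVert g_t-\bar g_t\rVert^2\le L^2\xi_t$.
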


The proof of Theorem~\ref{theorem:local_sgd} is given in Appendix~\ref{appendix:local_sgd}. The convergence behavior of Local SGD closely resembles that of Minibatch SGD, except for the additional third term introduced by the \textit{client drift}. We note that this term can be controlled by having a sufficiently small step size. 
Similar to Corollary~\ref{cor:minibatch_sgd}, the following corollary specifies the communication and sample complexity of Local SGD for both stationary and non-stationary Markov chains.
\begin{corollary}\label{cor:local_sgd}
    Under the conditions of Theorem~\ref{theorem:local_sgd}
    and 
    \begin{equation}
        \eta \leq \mathcal{O} \left( \frac{\delta^2\epsilon^2}{KL (\theta^2 + \sigma^2)} \wedge \frac{1}{KL\delta^2} \right),
    \end{equation}
    the required number of local steps and communication rounds for Local SGD to achieve $\mathbb{E}[\|\nabla F(\hat{w}_T)\|^2] \leq \epsilon^2$, are:
    \begin{align*}
        K = \mathcal{O} \left( \frac{C_{\infty} \sigma^2}{\nu_{ps}M\epsilon^2} \right), \quad T = \mathcal{O}\left( \frac{L \Delta_0}{\epsilon^2} \left( \delta^2 \vee \frac{\theta^2 + \sigma^2}{\delta^2\epsilon^2} \right) \right).
    \end{align*}
\end{corollary}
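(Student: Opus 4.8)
The plan is to impose the $\epsilon$-accuracy requirement term by term on the right-hand side of Theorem~\ref{theorem:local_sgd}, exploiting the fact that its three contributions are governed by three essentially independent knobs: the number of communication rounds $T$, the cache size $K$, and the local step size $\eta$. Concretely, I would require each of
$$\frac{\Delta_0}{\eta K T}, \qquad \frac{C_{\infty}\sigma^2}{\nu_{ps}MK}, \qquad \frac{LK\eta(\theta^2+\sigma^2)}{\delta^2}$$
to be at most a fixed constant fraction of $\epsilon^2$ (say $\epsilon^2/3$, folded into the hidden $\mathcal{O}$ constants); summing shows this suffices for $\mathbb{E}[\|\nabla F(\hat{w}_T)\|^2] \le \epsilon^2$.

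The middle term involves only $K$: setting it below $\epsilon^2$ immediately yields $K=\mathcal{O}(C_{\infty}\sigma^2/(\nu_{ps}M\epsilon^2))$, the claimed sample complexity, independently of the other choices. This is the point to emphasize conceptually — the Markovian-noise floor cannot be reduced by shrinking $\eta$, so $K$ alone must absorb it. Next, the third (client-drift) term is controlled through $\eta$: requiring $LK\eta(\theta^2+\sigma^2)/\delta^2 \lesssim \epsilon^2$ gives $\eta \lesssim \delta^2\epsilon^2/(KL(\theta^2+\sigma^2))$. Intersecting this with the smoothness constraint $\eta \le \mathcal{O}(1/(LK\delta^2))$ inherited from Theorem~\ref{theorem:local_sgd} reproduces exactly the stated admissible range $\eta \le \mathcal{O}(\delta^2\epsilon^2/(KL(\theta^2+\sigma^2)) \wedge 1/(KL\delta^2))$.

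Finally, for the first (initialization) term I would take $\eta$ as large as the two constraints above permit, so as to make $T$ as small as possible, and then solve $\Delta_0/(\eta K T) \lesssim \epsilon^2$ for $T$. Setting $\eta$ at its upper boundary turns $1/\eta$ into $\mathcal{O}(KL(\delta^2 \vee (\theta^2+\sigma^2)/(\delta^2\epsilon^2)))$ — the $\wedge$ becoming a $\vee$ under inversion — and substituting into $T=\mathcal{O}(\Delta_0/(\eta K\epsilon^2))$ cancels the factor $K$, yielding $T=\mathcal{O}((L\Delta_0/\epsilon^2)(\delta^2 \vee (\theta^2+\sigma^2)/(\delta^2\epsilon^2)))$, as claimed. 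The only real care required — and the closest thing to an obstacle — is this bookkeeping of the $\min$/$\max$ when inverting $\eta$, together with verifying that the $K$-dependence cancels so that $T$ is genuinely independent of the cache size; there is no analytic difficulty beyond what is already established in Theorem~\ref{theorem:local_sgd}.
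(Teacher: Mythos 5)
Your proposal is correct and matches the paper's own proof: the paper likewise budgets each of the three terms in Theorem~\ref{theorem:local_sgd} against a constant fraction of $\epsilon^2$, obtains $K$ from the Markovian-noise term, takes $\eta$ at the intersection of the drift constraint and the smoothness constraint, and then solves for $T$ with $\eta$ at its upper boundary so that the $\wedge$ inverts to a $\vee$ and the factor of $K$ cancels. The only difference is that the paper works with explicit numerical constants (e.g., $K \geq \tfrac{392}{15}\tfrac{C_{\infty}\sigma^2}{\nu_{ps}M\epsilon^2}$) rather than the generic $\epsilon^2/3$ split, which is immaterial for the $\mathcal{O}(\cdot)$ statement.
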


The proof of Corollary~\ref{cor:local_sgd} is also given in Appendix~\ref{appendix:local_sgd}.
\subsection{Local SGD with Momentum} \label{sec:momentum} %
In the case of i.i.d. sampling, several algorithms have been proposed to mitigate the client drift effect without the need of Assumption~\ref{assump:heterogeneity}. One line of work focuses on the control variate technique, first introduced by \cite{karimireddy_scaffold_2021}. Recently, %
\cite{cheng2024momentum,yan2025problemparameter} have demonstrated that momentum can also be used to mitigate client drift in the i.i.d. setting. %

In this section, we study the convergence of a momentum-based variant of Local SGD, named Local SGD-M, detailed in Algorithm~\ref{alg:momentum}. The key difference from Local SGD is that the local updates are computed via the convex combination of the gradient estimate for the local objective and the aggregated updates from the previous communication round. This allows us to build a recursive bound for the gradient estimate error, without relying on any heterogeneity assumption to bound the drift caused by local updates as in the case of Local SGD.

We show that, with Markovian data, this technique also mitigates the impact of client drift. In particular, the lower bound on communication complexity in the i.i.d. setting \citep{patel2022towards} is achieved up to a constant without any heterogeneity assumptions.
\begin{algorithm}
	\caption{Local SGD-M}
	\label{alg:momentum}
	\begin{algorithmic}
		\STATE {\bfseries Input:} initial model $w_0$ and gradient estimate $ v_0 $, local learning rate $ \eta $, global learning rate $ \gamma $ and momentum $ \beta $
		\FOR{$ t=0 $ {\bfseries to} $ T-1 $}
		\FOR{every client $ m \in [M] $ in parallel}
		\STATE Initialize local model $ w_t^{( m, 0 )} = w_t $
		\FOR{$ k=0 $ {\bfseries to} $ K-1 $}
		\STATE $ v_t^{( m, k )} = \beta \nabla f_m\left( w_t^{( m, k )};
		x_t^{( m, k )} \right) + \left( 1 - \beta \right) v_t $
		\STATE $ w_t^{( m, k+1 )} = w_t^{( m, k )} - \eta v_t^{( m, k )} $
		\ENDFOR
        \STATE Communicate $w_t^{m,K}$
		\ENDFOR
		\STATE Aggregate: $ v_{ t+1 } = \frac{1}{\eta MK} \sum_{m=1}^{M} \left( w_t - w_t^{( m, K )} \right) $
		\STATE Server update: $ w _{ t+1 } = w_t - \gamma v_{ t+1 } $
		\ENDFOR
		\STATE {\bfseries Output:} $\hat{w}_T$ sampled uniformly from $w_0, \dots, w _{T-1} $.
	\end{algorithmic}
\end{algorithm}
The convergence result for Local SGD with momentum holds for the broader function class $\mathcal{F}_2 (L, \sigma, \nu_{ps}, C_{\infty})$, which retains the sample-wise smoothness condition but relaxes the requirement for bounded heterogeneity.
\begin{theorem}\label{theorem:momentum}
    For the problem class $\mathcal{F}_2 (L, \sigma, \nu_{ps}, C_{\infty})$, with the following conditions on the step sizes:
    \begin{equation}\label{eq:momentum_step_size_conditions}
        \begin{split}
            \eta K L \leq \mathcal{O} & \left( \frac{1}{\beta} \wedge \sqrt{ \frac{C_{\infty}}{\nu_{ps} MK\beta^2} } \wedge \sqrt{\frac{L
            \Delta_0}{\beta^3T G_0}} \wedge \frac{1}{\left( 1 - \beta \right)} \right.
            \left. \wedge \frac{1}{\beta \gamma L T} \right),~~~
            \gamma \leq \mathcal{O} \left( \frac{\beta}{L} \right),
        \end{split}
    \end{equation}
    the iterates of Local SGD-M satisfy:
    \begin{equation}\label{eq:momentum_convergence}
        \begin{split}
            \mathbb{E} \left[ \left\lVert \nabla F(\hat{w}_T) \right\rVert^2 \right]
            \leq \mathcal{O} & \left( \frac{L \Delta_0}{\beta T} + \frac{C_{\infty} \sigma^2}{\nu_{ps}MK} \right).
        \end{split}
    \end{equation}
\end{theorem}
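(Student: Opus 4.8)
The plan is to analyze Local SGD-M through a momentum-error recursion coupled with a one-step descent inequality, never invoking the bounded-dissimilarity Assumption~\ref{assump:heterogeneity}. The starting point is to observe that the aggregation step collapses to the momentum recursion $v_{t+1} = \beta \bar g_t + (1-\beta) v_t$, where $\bar g_t = \frac{1}{MK}\sum_{m,k}\nabla f_m(w_t^{(m,k)}; x_t^{(m,k)})$ and the server update is $w_{t+1} = w_t - \gamma v_{t+1}$. I would track the gradient-estimate error $e_{t+1} = v_{t+1} - \nabla F(w_t)$ as the central quantity. Applying the descent lemma of Assumption~\ref{assump:smoothness} to $w_{t+1}=w_t-\gamma v_{t+1}$ and the polarization identity $\langle \nabla F(w_t), v_{t+1}\rangle = \tfrac12(\|\nabla F(w_t)\|^2 + \|v_{t+1}\|^2 - \|e_{t+1}\|^2)$ yields
\begin{align}
F(w_{t+1}) \le F(w_t) - \tfrac{\gamma}{2}\|\nabla F(w_t)\|^2 + \tfrac{\gamma}{2}\|e_{t+1}\|^2 - \tfrac{\gamma}{2}(1 - L\gamma)\|v_{t+1}\|^2 .
\end{align}
Since $\gamma \le \mathcal{O}(\beta/L)\le\mathcal{O}(1/L)$, the last term is negative and will later absorb update-magnitude terms.

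The heart of the argument is a contraction for $\mathbb{E}\|e_{t+1}\|^2$. Writing $e_{t+1} = (1-\beta)\bigl(e_t + \nabla F(w_{t-1}) - \nabla F(w_t)\bigr) + \beta\bigl(\bar g_t - \nabla F(w_t)\bigr)$, convexity of $\|\cdot\|^2$, Young's inequality tuned so that $(1-\beta)(1+\rho)\le 1-\beta/2$, and $\|\nabla F(w_{t-1}) - \nabla F(w_t)\|\le L\gamma\|v_t\|$ give
\begin{align}
\mathbb{E}\|e_{t+1}\|^2 \le \bigl(1 - \tfrac{\beta}{2}\bigr)\mathbb{E}\|e_t\|^2 + \tfrac{c L^2\gamma^2}{\beta}\mathbb{E}\|v_t\|^2 + \beta\,\mathbb{E}\bigl\|\bar g_t - \nabla F(w_t)\bigr\|^2 .
\end{align}
The last term splits as $\bar g_t - \nabla F(w_t) = \frac{1}{MK}\sum_{m,k}\bigl(\nabla f_m(w_t^{(m,k)};x) - \nabla f_m(w_t;x)\bigr) + \frac{1}{MK}\sum_{m,k}\bigl(\nabla f_m(w_t;x) - \nabla F_m(w_t)\bigr)$. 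The first, a client-drift term, is bounded by $\frac{L^2}{MK}\sum_{m,k}\|w_t^{(m,k)}-w_t\|^2$ via Assumption~\ref{assump:sample_wise_smooth}; the second, evaluated at the fixed point $w_t$ and automatically centered because $\frac1M\sum_m \nabla F_m(w_t)=\nabla F(w_t)$, is the pure Markovian noise, controlled by the same concentration estimate used for Minibatch SGD, giving $\mathcal{O}(C_\infty \sigma^2/(\nu_{ps}MK))$ through the change of measure of Assumption~\ref{assump:continuity} together with independence of the $M$ chains.

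I would then fold these into the Lyapunov function $\Phi_t = \mathbb{E}[F(w_t) - F^*] + \frac{\gamma}{\beta}\mathbb{E}\|e_t\|^2$, chosen so the $\|e_{t+1}\|^2$ generated by the descent step is matched by the error buffer, the factor $1-\beta/2$ consumes the old error, and the $\frac{L^2\gamma^2}{\beta}\|v_t\|^2$ terms are absorbed by the negative $\|v_{t+1}\|^2$ from descent exactly when $\gamma \le \mathcal{O}(\beta/L)$. Telescoping $\Phi_t$ over $t=0,\dots,T-1$ and dividing by $\gamma T/2$ produces $\frac{\Delta_0}{\gamma T} = \mathcal{O}(L\Delta_0/(\beta T))$ from the potential gap and $\mathcal{O}(C_\infty\sigma^2/(\nu_{ps}MK))$ from the noise floor, which are exactly the two terms of \eqref{eq:momentum_convergence}.

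The main obstacle is the drift term under the \emph{absence} of Assumption~\ref{assump:heterogeneity}. Because $v_t^{(m,k)} = \beta \nabla f_m(w_t^{(m,k)};x) + (1-\beta)v_t$, the heterogeneous (client-specific) part of $\|w_t^{(m,k)}-w_t\|$ carries an explicit factor $\beta$, while the $(1-\beta)v_t$ part is common to all clients; this is precisely what lets momentum replace a heterogeneity bound. Quantifying it requires a self-bounding estimate in which $\|\nabla F_m(w_t^{(m,k)})\|$ is traced back to the initial scale $G_0$ plus accumulated motion $\|w_t-w_0\|$, and then verifying that every resulting contribution is dominated either by the noise floor or by $L\Delta_0/(\beta T)$. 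This is where the remaining constraints in \eqref{eq:momentum_step_size_conditions} enter: the bound $\eta K L \le \sqrt{C_\infty/(\nu_{ps}MK\beta^2)}$ pushes the drift below the Markovian noise floor, the bound involving $\sqrt{L\Delta_0/(\beta^3 T G_0)}$ controls the initial-gradient and momentum warm-up contributions, and the others keep the telescoped update magnitudes summable. Balancing these couplings so that no heterogeneity constant survives in the final rate is the delicate part of the proof.
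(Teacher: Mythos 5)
Your proposal follows essentially the same route as the paper: the same momentum recursion $v_{t+1}=\beta g_t+(1-\beta)v_t$, the same descent lemma, the same recursive contraction for the gradient-estimate error $\mathbb{E}\lVert v_{t+1}-\nabla F(w_t)\rVert^2$, the same split of the stochastic error into client drift plus Markovian noise handled by the Minibatch concentration lemma, and crucially the same device of bounding $\frac{1}{M}\sum_m\mathbb{E}\lVert\nabla F_m(w_t)\rVert^2$ recursively via $G_0$ plus accumulated iterate motion to avoid the heterogeneity assumption (the paper's Lemma~\ref{lem:heterogeneity_recursive_bound}). The only cosmetic difference is that you package the telescoping as a Lyapunov function $\Delta_t+\tfrac{\gamma}{\beta}\mathbb{E}\lVert e_t\rVert^2$ whereas the paper sums the error recursion directly before substituting into the averaged descent inequality; these are equivalent.
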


The proof of Theorem~\ref{theorem:momentum} is given in Appendix~\ref{appendix:momentum}. The communication and sample complexities of Local SGD-M are characterized in the following corollary.
\begin{corollary}\label{cor:local_sgd_momentum}
    Under the conditions of Theorem~\ref{theorem:momentum}, the required number of local steps and communication rounds for Local SGD-M to achieve $\mathbb{E}[\|\nabla F(\hat{w}_T)\|^2] \leq \epsilon^2$ are:
    \begin{align*}
        K = \mathcal{O} \left( \frac{C_{\infty} \sigma^2}{\nu_{ps}M\epsilon^2} \right), \quad T = \mathcal{O} \left( \frac{L \Delta_0}{\beta\epsilon^2} \right).
    \end{align*}
\end{corollary}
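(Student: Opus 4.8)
The plan is to read the two complexities directly off the convergence bound of Theorem~\ref{theorem:momentum}, treating the step-size window \eqref{eq:momentum_step_size_conditions} as a feasibility condition to be checked after the fact. The key structural observation is that the right-hand side of \eqref{eq:momentum_convergence} depends only on $T$ and $K$ together with the problem constants $L, \Delta_0, \beta, C_{\infty}, \sigma, \nu_{ps}, M$, and is \emph{independent} of the particular admissible values of $\eta$ and $\gamma$. Consequently, once $T$ and $K$ are fixed so that the bound drops below $\epsilon^2$, it only remains to exhibit one admissible pair $(\eta,\gamma)$ inside the window \eqref{eq:momentum_step_size_conditions}.

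First I would split the budget $\epsilon^2$ evenly between the two additive terms. Imposing $\frac{L\Delta_0}{\beta T} \le \frac{\epsilon^2}{2}$ (after absorbing the big-$\mathcal{O}$ constant) yields $T = \mathcal{O}\!\left(\frac{L\Delta_0}{\beta\epsilon^2}\right)$, while imposing $\frac{C_{\infty}\sigma^2}{\nu_{ps}MK} \le \frac{\epsilon^2}{2}$ yields $K = \mathcal{O}\!\left(\frac{C_{\infty}\sigma^2}{\nu_{ps}M\epsilon^2}\right)$. Summing the two halves gives $\mathbb{E}[\|\nabla F(\hat{w}_T)\|^2] \le \epsilon^2$, which is precisely the claimed pair of complexities.

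The remaining step, and the only place requiring genuine care, is to confirm that the step-size window \eqref{eq:momentum_step_size_conditions} is nonempty for these specific $T$ and $K$. I would fix $\gamma = c\,\beta/L$ for a small enough constant $c$ so that the standalone requirement $\gamma \le \mathcal{O}(\beta/L)$ holds, and then note that for any fixed $\beta \in (0,1)$ and any finite prescribed $K, T$, each of the five quantities bounding $\eta K L$, namely $1/\beta$, $\sqrt{C_{\infty}/(\nu_{ps}MK\beta^2)}$, $\sqrt{L\Delta_0/(\beta^3 T G_0)}$, $1/(1-\beta)$, and $1/(\beta\gamma L T)$, is strictly positive and finite. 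Hence their minimum is strictly positive, and choosing any $\eta$ with $\eta K L$ below this minimum determines a valid $\eta > 0$; since $K$ is already fixed, this produces an admissible $(\eta,\gamma)$.

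I expect the main (mild) obstacle to be bookkeeping rather than mathematics: verifying that substituting the large prescribed $T$ into the two $T$-dependent constraints $\sqrt{L\Delta_0/(\beta^3 T G_0)}$ and $1/(\beta\gamma L T)$ does not collapse the feasible interval. This is benign because those upper bounds shrink only polynomially in $T$ and stay positive, so selecting a smaller $\eta$ always restores feasibility; crucially, because \eqref{eq:momentum_convergence} is independent of $\eta$, shrinking $\eta$ incurs no cost in the final rate. With feasibility confirmed, the corollary follows immediately from the threshold algebra above.
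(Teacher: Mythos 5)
Your proposal is correct and matches the paper's argument, which simply sets each of the two additive terms in \eqref{eq:momentum_convergence} below a constant fraction of $\epsilon^2$ and reads off $K$ and $T$ (the paper's proof just defers to the analogous computation for Minibatch SGD). Your additional check that the step-size window \eqref{eq:momentum_step_size_conditions} remains nonempty for the prescribed $K$ and $T$ is a sound extra verification that the paper leaves implicit, but it does not change the route.
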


The proof of Corollary~\ref{cor:local_sgd_momentum} is delegated to Appendix~\ref{appendix:momentum}.

\subsection{Discussion}\label{sec:discussion}

\textbf{Impact of Markovian Data:} Our convergence analysis in Corollaries~\ref{cor:minibatch_sgd}, \ref{cor:local_sgd}, and \ref{cor:local_sgd_momentum} reveals the impact of Markovian sampling for smooth non-convex objectives. Due to temporal dependence between samples, the gradient noise is scaled by $C_{\infty}/\nu_{ps}$, which depends on the spectral properties of the Markov kernel. Moreover, the stochastic gradient estimates computed by the clients are inherently biased. This bias arises from the Markovian structure of the data and, unlike the i.i.d. case, cannot be controlled by having a small or decaying step size. This is due to the fact that the bias contributes to the terms involving the gradient noise $\sigma^2$ in our bounds. Consequently, in order to control these terms and guarantee convergence, the number of local steps $K$ must be sufficiently large. 

However, we emphasize that our results remain consistent with the core principles of FL: leveraging more local computation to reduce communication costs. Indeed, a comparison with \cite[Table 2]{patel2022towards} shows that the lower bound established in the i.i.d. setting for communication complexity can be achieved by Minibatch SGD and Local SGD-M. Similar results for the special case of Federated Stochastic Approximation (FSA) can be observed in \cite[Corollary 4.5]{mangold2024scafflsa}, where additional local computation is required to reduce the third term in \cite[Theorem 4.1]{mangold2024scafflsa}.

Some prior works in the centralized \citep{NEURIPS2023_first_order_method, pmlr-v162-dorfman22a, doan2020finite} or the token-passing \citep{pmlr-v202-even23a} setting introduce auxiliary hyper-parameters that can be tuned to make the bias vanish, instead of tuning directly the step size. This is achieved:
\begin{itemize}
    \item in \cite{pmlr-v202-even23a, doan2020finite} by decomposing the bias into other terms involving delayed gradients, and scaling such delays as $1/\sqrt{T}$. However, this technique requires a constraint on the minimum number of iterations $T$ (analogous to the number of communication rounds in our analysis), at the order of the mixing time of the Markov chain. Such a constraint is undesirable in the FL setting, where the main goal is to minimize communication. A recent paper on FSA \citep{zhu2025achievingtighterfinitetimerates} follows the same strategy and indeed needs to impose a lower bound on $T$ proportional to the \textit{maximum mixing time} across all clients.
    \item in \cite{NEURIPS2023_first_order_method, pmlr-v162-dorfman22a} by using minibatch gradient estimates (instead of using just one sample). More precisely, at each iteration, a random batch size is drawn from a truncated geometric distribution with maximum value $m$, requiring a new hyper-parameter. \cite{NEURIPS2023_first_order_method, pmlr-v162-dorfman22a} show that the bias scales inversely with $m$, which is then chosen to be scaled with $\sqrt{T}$.
\end{itemize}

We highlight that the sample complexities obtained in our paper are of the order of $1/(M\epsilon^4)$, matching those achieved in \cite{NEURIPS2023_first_order_method, pmlr-v162-dorfman22a, doan2020finite, pmlr-v202-even23a}, scaled inversely with the number of clients $M$. However, as already mentioned in Section~\ref{sec:markovian_data_streams}, our assumptions on the underlying Markov processes are more general. Furthermore, as the terms involving $\sigma^2$ scale with $1/K$, following the approach in \cite{NEURIPS2023_first_order_method, pmlr-v162-dorfman22a} and letting $K$ scale with $T$ would indeed drive these terms to zero as $T \rightarrow \infty$. Nevertheless, doing so would not lead to any improvement in the overall sample complexity. Further details on this, as well as a more exhaustive discussion on the differences of our work with existing works, are provided in Appendix~\ref{appendix:comparison}.

\textbf{Impact of Heterogeneity:} Convergence analysis of Local SGD requires the BGD heterogeneity assumption. The lower bound for communication complexity in the i.i.d. setting can only be achieved within the low-heterogeneity, low-noise regime; i.e., when $\frac{\theta^2 + \sigma^2}{\epsilon^2} \leq \delta^4$. In contrast, Local SGD-M successfully mitigates the client drift caused by heterogeneity. In particular, the
lower bound on the communication complexity in \cite{patel2022towards} is achieved without any heterogeneity assumptions, up to a constant of $ 1/\beta $, while maintaining the same number of samples per communication round $K$. This contrasts with SCAFFLSA \citep{mangold2024scafflsa}, which utilizes control variates to mitigate heterogeneity, resulting in lower communication complexity. However, SCAFFLSA needs a higher number of samples per communication round than its vanilla version, FedLSA, and still relies on a heterogeneity assumption around the optimum. Another work on FSA shows the presence of a persistent bias term \citep[Theorem 2]{wang2024federated}, which cannot be controlled with increased local computation or even more communication rounds.

\textbf{Impact of Number of Clients $M$:} The sample complexity of all three algorithms scales with $1/M$, demonstrating the benefits of collaboration in FL with heterogeneous Markovian data streams and smooth non-convex objective functions.

\section{Numerical Results}\label{sec:numerical}
In this section, we evaluate the performance of Local SGD, Minibatch SGD, and Local SGD with Momentum on the Beijing Multi-Site Air-Quality dataset~\citep{beijing_multi-site_air_quality_501}, which contains hourly measurements from 12 weather stations across China, collected between March 2013 and February 2017. The prediction target is seasonality-adjusted PM2.5 concentration, using other air quality and meteorological indicators as features. In Appendix~\ref{appendix:additional_experiments}, we provide further details for the time series data preprocessing, non-convex objective, and additional experiments to investigate the impact of the Markov chain's spectral gap with synthetic data.

We partition the data temporally, reserving the last 12 months for testing and using the preceding 36 months for training. To simulate a federated setting with a larger number of clients, we create virtual clients by randomly sampling contiguous windows of $n \in \{6, 12\}$ months from the training period.
We use a linear regression model with a non-convex regularizer to encourage sparsity, based on preliminary correlation analysis indicating that some features are not informative. %
The regularization suppresses less relevant features while preserving smoothness for optimization, and is widely used in robust non-convex optimization \citep{pmlr-v139-tran21b, NEURIPS2023_momentum_ef}.

\begin{figure}
    \centering
    \begin{subfigure}{.3\linewidth}
        \begin{center}
            \centerline{\includegraphics[width=\columnwidth]{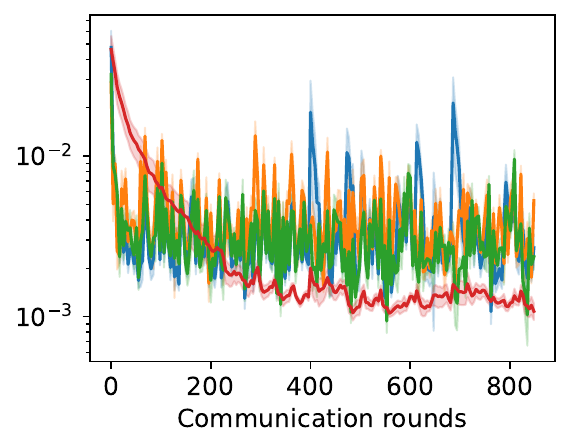}}
            \caption{$K=10$}
        \end{center}
    \end{subfigure}
    \begin{subfigure}{.3\linewidth}
        \begin{center}
            \centerline{\includegraphics[width=\columnwidth]{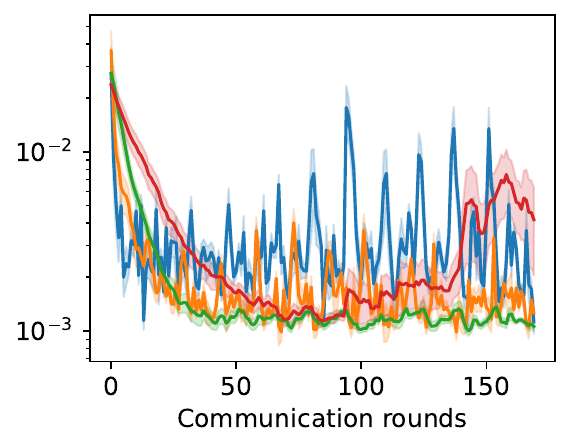}}
            \caption{$K=50$}
        \end{center}
    \end{subfigure}
    \begin{subfigure}{.3\linewidth}
        \begin{center}
            \centerline{\includegraphics[width=\columnwidth]{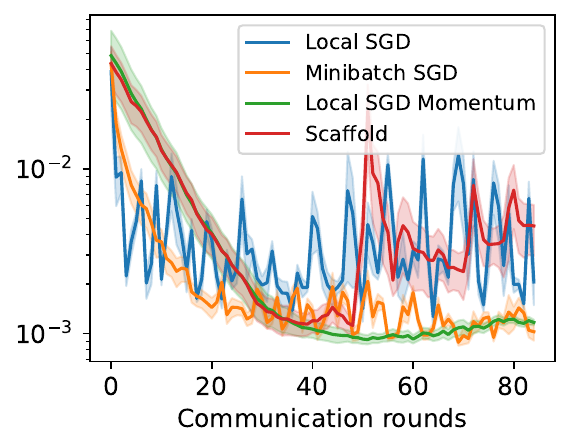}}
            \caption{$K=100$}
        \end{center}
    \end{subfigure}
    \caption{Gradient norm as a function of the number of communication rounds
    for Local SGD, Minibatch SGD, Local SGD-M, and SCAFFOLD, with $ \gamma = 0.1, \eta = 0.01$, $ \beta = 0.5$, $\lambda = 0.01$ for 120 clients (each client has access to 12 consecutive months of training data) and different numbers of local steps.}
    \label{fig:heterogeneity}
\end{figure}

\begin{figure}
    \centering
    \begin{subfigure}{.3\linewidth}
        \begin{center}
            \centerline{\includegraphics[width=\columnwidth]{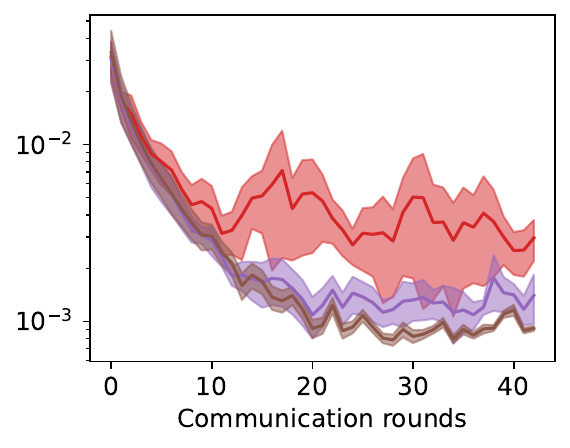}}
            \caption{Local SGD}
        \end{center}
    \end{subfigure}
    \begin{subfigure}{.3\linewidth}
        \begin{center}
            \centerline{\includegraphics[width=\columnwidth]{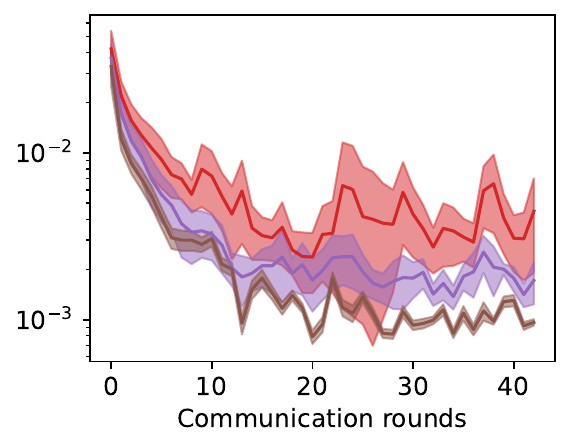}}
            \caption{Minibatch SGD}
        \end{center}
    \end{subfigure}
    \begin{subfigure}{.3\linewidth}
        \begin{center}
            \centerline{\includegraphics[width=\columnwidth]{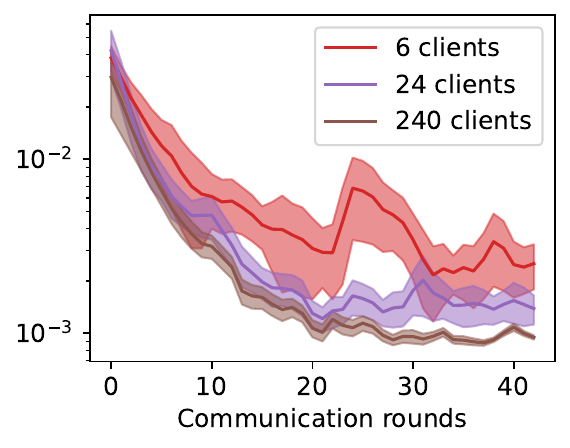}}
            \caption{Local SGD-M}
        \end{center}
    \end{subfigure}
    \caption{Gradient norm as a function of the number of communication rounds
    for Local SGD, Minibatch SGD \& Local SGD-M, with $ \gamma = 0.1, \eta = 0.001$, $ \beta = 0.5$, $\lambda = 0.01$ and $K=100$ for different numbers of clients. Each client has access to a window of $6$ consecutive months of training data.}
    \label{fig:speed_up}
\end{figure}

\textbf{Impact of heterogeneity:} In Figure~\ref{fig:heterogeneity}, we plot the trajectories of the gradient norm over the communication rounds for Minibatch SGD, Local SGD, Local SGD-M, and SCAFFOLD~\citep{karimireddy_scaffold_2021}, which is the first algorithm proposed in the i.i.d. setting to mitigate heterogeneity in FL. We compare these methods under varying numbers of samples, $K$, per communication round. We observe that the performance of Minibatch SGD and Local SGD-M consistently improves as $K$ increases, whereas Local SGD and SCAFFOLD exhibit little to no improvement. This is consistent with our theoretical findings, which identify heterogeneity as a limiting factor as $K$ increases for Local SGD, but not for Minibatch SGD or Local SGD-M. For SCAFFOLD, we argue that its advantage is clearer in the partial participation setting. \cite{cheng2024momentum} also shows that incorporating momentum into SCAFFOLD under the partial client participation setting further improves the convergence. Extending our current analysis to evaluate the combined effect of momentum and control variates in the partial participation setting with Markovian data is a promising direction for future research.

\textbf{Impact of Number of Clients $M$:} In Figure~\ref{fig:speed_up}, we plot the trajectories of the gradient norm for each of the three algorithms with different numbers of clients. 
For all three methods, the gradient norm decreases as the number of clients increases, indicating that collaboration is beneficial even in the presence of heterogeneous Markovian data---a trend consistent with our theoretical analysis.

\section{Conclusion}

Statistical dependence is a reality in data streams arising from physical and biological systems. A key question is whether collaboration remains beneficial for FL with Markovian data streams. To address this question, we showed via analysis and experiments with pollution data that there is a speed-up in FL with smooth non-convex objectives for Minibatch SGD, Local SGD, and Local SGD with Momentum. However, there is a cost associated with Markovian data streams, quantified by an increase in the sample complexity compared with i.i.d. sampling.

\newpage

\begin{ack}
This research was supported in part by the French government, through the 3IA Côte d’Azur Investments in the Future project managed by the National Research Agency (ANR) with the reference numbers ANR-19-P3IA-0002 and ANR-24-CE25-225, in part by the European Network of Excellence dAIEDGE under Grant Agreement Nr. 101120726, in part by the Groupe La Poste, sponsor of the Inria Foundation, in the framework of the FedMalin Inria Challenge, and in part by the EU HORIZON MSCA 2023 DN project FINALITY (G.A. 101168816). This work was also partially supported by the Inria-Nokia Challenge Learn-Net. Experiments were carried out using the Grid'5000 testbed supported by a scientific interest group hosted by Inria and including CNRS, RENATER and several Universities as well as other organizations.
\end{ack}

\bibliographystyle{plainnat}
\bibliography{references}

\newpage
\appendix
\section*{Appendix}
\addcontentsline{toc}{section}{Appendix}
\tableofcontents
\clearpage

\section{Preliminaries}\label{appendix:preliminaries}
\subsection{Pseudo-code for Minibatch SGD and Local SGD}
\begin{minipage}{0.46\textwidth}
    \begin{algorithm}[H]
	\caption{Minibatch SGD}
	\label{alg:minibatch_sgd}
	\begin{algorithmic}
		\STATE {\bfseries Input:} initial model $w_0$, global learning rate $ \gamma $.
		\FOR{$ t=0 $ {\bfseries to} $ T-1 $}
		\FOR{every client $ m \in [M] $ in parallel}
		\STATE Receive the global model $ w_t $
		\FOR{$ k=0 $ {\bfseries to} $ K-1 $}
		\STATE Compute: 
            \STATE $ g_t^{( m, k )} = \nabla f_m\left( w_t; x_t^{( m, k )} \right) $
		\ENDFOR
            \STATE Communicate:
            \STATE $ g_{t}^{m} = \frac{1}{K} \sum_{k=0}^{K-1} g_t^{m,k}$
		\ENDFOR
		\STATE Aggregate: $ g_t = \frac{1}{M} \sum_{m=1}^{M} g_t^{m} $
		\STATE Server update: $ w _{ t+1 } = w_t - \gamma g_t $
		\ENDFOR
		\STATE {\bfseries Output:} $\hat{w}_T$ sampled uniformly from $w_0, \dots, w _{T-1} $.
	\end{algorithmic}
\end{algorithm}
\end{minipage}
\hfill
\begin{minipage}{0.46\textwidth}
    \begin{algorithm}[H]
        \caption{Local SGD}
        \label{alg:local_sgd}
        \begin{algorithmic}
            \STATE {\bfseries Input:} initial model $w_0$, local learning rate $ \eta $.
            \FOR{$ t=0 $ {\bfseries to} $ T-1 $}
            \FOR{every client $ m \in [M] $ in parallel}
            \STATE Initialize the local model:
            \STATE $ w_t^{( m, 0 )} = w_t $
            \FOR{$ k=0 $ {\bfseries to} $ K-1 $}
            \STATE $w_t^{( m, k+1 )} = w_t^{( m, k )}$
            \STATE $- \eta \nabla f_m\left( w_t^{( m, k )}; x_t^{( m, k )} \right)$    
            \ENDFOR
            \STATE Communicate $w_t^{(m,K)}$
            \ENDFOR
            \STATE Server update:
            \STATE $ w_{t+1} = \frac{1}{M} \sum_{m=1}^{M} w_t^{( m, K )} $
            \ENDFOR
            \STATE {\bfseries Output:} $\hat{w}_T$ sampled uniformly from $w_0, \dots, w _{T-1} $.
        \end{algorithmic}
    \end{algorithm}
\end{minipage}

\subsection{Notation}

For clarity, we first recall some notations introduced in Section~\ref{sec:setup}. 

We use $[M] = \left\{1, \dots, M\right\}$ to denote the set of positive integers up to $M$, and $[K]_0 = [0, K-1]$ to denote the set of non-negative integers less than $K$. For any vector $x \in \mathbb{R}^d$, we write $(x)_{i}$ for the $i$-th coordinate of $x$.

For every client $m \in [M]$, the data stream $ X_m = (x_t^m)_{t \in \mathbb{N}} $ is a time-homogeneous Markov process defined on state space $\Omega_m$ endowed with the corresponding Borel $\sigma$-field $\mathcal{B}_m$ and transition kernel. As an abuse of notation, for all $t \in \mathbb{N}$, and for all $k \in [K]_0$ we denote by $ x_t^{(m, k)} \coloneqq x^m_{Kt+k} $ and by $x_t^m \coloneqq \left( x_t ^{ (m, k) } \right)_{k \in [K]_0} $, the $k$-th sample and the set of all $K$ samples used by client $m$ during round $t$, respectively.

The system-level Markov chain $X = (X_m)_{m \in [M]} $ is a product Markov chain defined on $ \left( \Omega \coloneqq \bigtimes_{m \in [M]} \Omega_m, \mathcal{B} \coloneqq \bigotimes \mathcal{B}_m \right) $, which, at each time step, moves independently on each of $M$ coordinates according to the corresponding transition kernel. Its transition matrix is defined at the Kronecker tensor product $P = \bigotimes_{m \in [M]} P_m $. Similarly, to simplify the notation, we write $ x_t^k := \left( x_t^{(m, k)} \right)_{m \in [M]} $ and $x_t \coloneqq \left( x_t^k \right)_{k \in [K]_0} $.

For a Markov chain, we denote by $\mathbb{P}_{q}$ and $\mathbb{E}_{q}$ the corresponding probability distribution and expectation with initial distribution $q$. When $q = \delta(x)$, we simply write $\mathbb{P}_x$ and $\mathbb{E}_x$. We also denote by $\mathcal{F}_t \coloneqq \sigma\left( x_i^{(m, k)}, m \in [M], k \in [K]_0, i \in [t]_0  \right)$ the $\sigma$-algebra generated by all data points received up to round $t$, and write $\E[t]{.}$ as an alias for the conditional expectation $\E{.|\mathcal{F}_t}$.

The following notation will be used throughout the Appendix:
\begin{equation}
    \bar{ g }_t = \frac{1}{MK} \sum_{m, k}^{} \nabla f_m\left( w_t; x_t^{( m, k )} \right),
\end{equation}

\begin{equation}
    g_t = \frac{1}{MK} \sum_{m, k}^{} \nabla f_m\left( w_t^{( m, k )}; x_t^{( m,k )} \right),
\end{equation}

\begin{equation}
    \Delta_t = \mathbb{E} \left[ F\left( w_t \right) \right] - F^*,
\end{equation}

\begin{equation}
    s_t = \mathbb{E} \left[ \left\lVert w _{ t+1 } - w_t \right\rVert^2 \right],
\end{equation}
where we use $ \sum_{m, k}^{} $ for $ \sum_{m \in [M]} \sum_{k \in [K]_0} $ unless explicitly
stated differently. 

With this notation, the global update rule of Minibatch SGD is 
\begin{equation}
    w _{ t+1 } = w_t - \gamma \bar{ g }_t,
\end{equation}
the global update rule of Local SGD is
\begin{equation}
    w _{ t+1 } = w_t - \tilde{ \eta }_t g_t,
\end{equation}
where $ \tilde{ \eta }_t = K \eta_t $, and the global update rule of Local SGD-M is
\begin{equation}
    w _{t+1} = w_t - \gamma v_{t+1}.
\end{equation}

\newpage
\section{Markov Chain Preliminaries}\label{appendix:markov}

In this section, we introduce some basic properties of Markov chains on general state spaces. In the following, we let $\Omega$ be a Polish space, and $\mathcal{B}$ denote the associated Borel $\sigma$-field.

We begin with the following formal definition of transition kernels.

\begin{definition}
    If $ P = \left\{ P(x, A), x \in \Omega, A \in \mathcal{B} \right\} $ such that:
    \begin{itemize}
        \item[(i)] for each $ A \in \mathcal{B} $, $ P\left( ., A \right) $ is a non-negative
            measurable function on $ \Omega $,
        
        \item[(ii)] for each $ x \in \Omega $, $ P\left( x, . \right) $ is a probability
            measure on $ \mathcal{B} $,
    \end{itemize}
    then $P$ is called a transition kernel.
\end{definition}

If $ \Omega $ is countable, then $ P(x, {y}) $ corresponds to the transition probability of
moving from state $ x $ to state $ y $ in a single step. However, in the case of
continuous state spaces, we may have $ P\left( x, {y} \right) = 0 $ for all $ y \in \Omega $,
hence the need of defining $ P\left( x, A \right) $ as the probability of jumping from the
current state $ x $ into a \textit{subset} $ A \subseteq \Omega $.

Now we are ready to define a Markov chain.

\begin{definition}
    A time-homogeneous Markov chain defined on $\left(\Omega, \mathcal{B}\right)$ with transition kernel $P$ is a sequence of random variables $\left( X_i \right)_{i \in \mathbb{N}}$ taking values in $\Omega$ such that for all $i \in \mathbb{N}$:
    \begin{align}\label{eq:markov_property_1}
        \mathbb{P} \left(X_i \in A \mid X_{i-1}=x_{i-1}; X_{i-2}=x_{i-2}; \dots; X_1=x_1 \right) 
        &= \mathbb{P} \left( X_i \in A \mid X_{i-1}=x_{i-1} \right) \\
        &= P(x_{i-1}, A).
    \end{align}
\end{definition}

The Markov property \eqref{eq:markov_property_1} can be expressed in the following equivalent form:

\begin{proposition}[{\citealp[Proposition 3.4.3]{Meyn_Tweedie_Glynn_2009}}]\label{proposition:markov_property_expectation}
    Given a time-homogeneous Markov chain $(X_i)_{i \in \mathbb{N}} $ on $\left(\Omega, \mathcal{B}\right)$, and $f: \Omega \mapsto \mathbb{R}$ a bounded and measurable function, then:
    \begin{align}\label{eq:markov_property_2}
        & \E{ f\left(X_{n+1}, X_{n+2}, \dots \right) \mid X_n=x_{n}; X_{n-1}=x_{n-1}; \dots; X_1=x_1 } \\
        &= \E[x_n]{ f \left( X_1, X_2, \dots \right) }
    \end{align}
\end{proposition}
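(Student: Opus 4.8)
The plan is to establish the identity by the standard functional monotone class argument, reducing a general bounded measurable $f$ on the path space to simple product functions for which the elementary Markov property~\eqref{eq:markov_property_1} applies directly.

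First I would reduce to functions of finitely many future coordinates. Because $\Omega$ is Polish, the product $\sigma$-field on $\Omega^{\mathbb{N}}$ is generated by the finite-dimensional cylinders, so any bounded measurable $f(X_{n+1}, X_{n+2}, \dots)$ is a bounded pointwise limit of functions of the form $f_j(X_{n+1}, \dots, X_{n+j})$. Applying dominated convergence to both the conditional expectation on the left-hand side and the $\mathbb{E}_{x_n}$-expectation on the right-hand side, it suffices to prove the claim for each fixed $j$.

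Second, for a fixed $j$ I would first verify the identity on the multiplicative class of product functions $f = g_1(X_{n+1}) \cdots g_j(X_{n+j})$, with each $g_i$ bounded and measurable. Conditioning successively on $\mathcal{F}_{n+j-1}, \mathcal{F}_{n+j-2}, \dots, \mathcal{F}_n$ and using~\eqref{eq:markov_property_1} together with the tower property, an induction on $j$ shows that the conditional expectation given the entire past equals the iterated kernel integral $\int P(x_n, dy_1) g_1(y_1) \cdots \int P(y_{j-1}, dy_j) g_j(y_j)$, which depends on the past only through $x_n$. By time-homogeneity this iterated integral is exactly $\E[x_n]{g_1(X_1) \cdots g_j(X_j)}$, matching the two sides. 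The functional monotone class theorem then lifts the identity from this product class---which is closed under multiplication and generates the finite product $\sigma$-field---to all bounded measurable functions of $(X_{n+1}, \dots, X_{n+j})$, since the set of $f$ satisfying the identity forms a vector space closed under bounded monotone limits.

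The step requiring the most care is ensuring that the conditional expectation genuinely collapses to a function of $x_n$ alone, which is the heart of the Markov property, and that the family $(\mathbb{P}_x)_{x \in \Omega}$ together with its regular conditional distributions is well defined and measurable in the initial state; this is precisely where Polishness of $\Omega$ is used. The remaining passages---the monotone class extension and the dominated convergence limit over the number of coordinates---are routine once the product-function base case is established.
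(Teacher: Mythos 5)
The paper does not prove this proposition: it is imported verbatim from Meyn, Tweedie and Glynn (Proposition 3.4.3), so there is no in-paper argument to compare against. Your monotone-class proof is the standard textbook derivation and is correct; the only point worth tightening is the preliminary claim that a general bounded measurable $f$ on the path space is a bounded pointwise limit of finite-coordinate functions, which is most cleanly justified either by martingale convergence or by folding the entire reduction into a single application of the functional monotone class theorem to the multiplicative family $g_1(X_{n+1})\cdots g_j(X_{n+j})$.
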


We recall the definition of a stationary measure as follows:
\begin{definition}\label{def:transition_ker}
    The stationary measure of a time-homogeneous Markov chain defined on $(\Omega, \mathcal{B})$ 
    with transition kernel $P$ is a probability measure $\pi$ such that, for all $A \in \mathcal{B}$:
    \begin{align}
        \pi (A) = \int_{\Omega} \mathrm{d} \pi(x)P(x, A),
    \end{align}
    or, equivalently:
    \begin{align}
        \mathrm{d} \pi(y) = \int_{\Omega} \mathrm{d} \pi(x) P(x, \mathrm{d} y).
    \end{align}
\end{definition}

Before formally defining the system-level Markov chain introduced in Section~\ref{sec:markovian_data_streams}, we need the following result on the measurability of the product of measurable functions.

\begin{lemma}\label{lem:measurability}
    Let $ \left( \Omega_1, \mathcal{B}_1 \right) $ and $ \left( \Omega_2, \mathcal{B}_2 \right) $ be two measurable spaces. Let $f$ and $g$ be two real-valued measurable functions on $\Omega_1$ and $\Omega_2$, respectively. Let $h$ be a real-valued function defined as:
    \begin{align}
        h: \Omega_1 \times \Omega_2 &\rightarrow \mathbb{R} \\ 
        \left( x_1, x_2 \right) & \mapsto h(x_1, x_2) = f(x_1) g(x_2).
    \end{align}

    Then $h$ is $\mathcal{B}_1 \otimes \mathcal{B}_2 $-measurable.
\end{lemma}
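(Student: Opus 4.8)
The plan is to show that the product function $h(x_1, x_2) = f(x_1)g(x_2)$ is measurable by exhibiting it as a composition and combination of known measurable maps. The standard strategy is to write $h$ as the product of two functions, each of which depends on only one coordinate, and to verify that these coordinate-wise functions are themselves measurable on the product space. Concretely, I would define the projection maps $\pi_1: \Omega_1 \times \Omega_2 \to \Omega_1$ and $\pi_2: \Omega_1 \times \Omega_2 \to \Omega_2$, which are $\mathcal{B}_1 \otimes \mathcal{B}_2$-measurable by the very definition of the product $\sigma$-field (it is the smallest $\sigma$-field making both projections measurable). The composition $f \circ \pi_1$ is then a measurable function from $(\Omega_1 \times \Omega_2, \mathcal{B}_1 \otimes \mathcal{B}_2)$ to $\mathbb{R}$, since $f$ is measurable on $\Omega_1$ and a composition of measurable maps is measurable; likewise $g \circ \pi_2$ is measurable.

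Next I would invoke the standard fact that the pointwise product of two real-valued measurable functions on a common measurable space is again measurable. Since $h(x_1, x_2) = f(\pi_1(x_1, x_2)) \cdot g(\pi_2(x_1, x_2)) = (f \circ \pi_1)(x_1, x_2)\cdot (g \circ \pi_2)(x_1, x_2)$, and both factors are $\mathcal{B}_1 \otimes \mathcal{B}_2$-measurable by the previous step, the product $h = (f\circ \pi_1)\cdot(g\circ \pi_2)$ is $\mathcal{B}_1 \otimes \mathcal{B}_2$-measurable. This completes the argument.

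For completeness I would recall briefly why the product of two measurable functions is measurable: addition and multiplication $\mathbb{R}^2 \to \mathbb{R}$ are continuous, hence Borel measurable, and the map $(x_1,x_2) \mapsto (f\circ\pi_1, g\circ\pi_2)$ into $\mathbb{R}^2$ is measurable (a map into a product is measurable iff each component is), so composing with multiplication yields a measurable function. This reduces everything to two elementary principles: measurability of projections into the product $\sigma$-field, and measurability of compositions.

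I do not expect any genuine obstacle here, as the result is a routine fact of product measure theory; the only point requiring mild care is the justification that the projections are measurable with respect to $\mathcal{B}_1 \otimes \mathcal{B}_2$ and that measurability into $\mathbb{R}^2$ follows from component-wise measurability. If one prefers to avoid appealing to the continuity of multiplication, an alternative is to prove the claim first for indicator functions (where $h$ is the indicator of a measurable rectangle), extend to simple functions by linearity, and then to general measurable functions by a monotone approximation argument; however, the projection-plus-composition route is the cleanest and I would present that.
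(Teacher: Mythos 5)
Your proposal is correct and follows essentially the same route as the paper: the paper's auxiliary functions $\tilde{f}(x_1,x_2)=f(x_1)$ and $\tilde{g}(x_1,x_2)=g(x_2)$ are precisely your $f\circ\pi_1$ and $g\circ\pi_2$, and both arguments conclude by invoking the fact that the product of two measurable functions on a common measurable space is measurable. The only cosmetic difference is that the paper verifies measurability of $\tilde{f}$ and $\tilde{g}$ by computing preimages of half-lines directly, whereas you deduce it from measurability of the projections plus composition.
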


\begin{proof}
    Consider $\tilde{f}$ and $\tilde{g}$ defined as:
    \begin{align}
        \tilde{f}: \Omega_1 \times \Omega_2 &\rightarrow \mathbb{R} \\
        (x_1, x_2) & \mapsto \tilde{f}(x_1, x_2) = f(x_1)
    \end{align}
    and
    \begin{align}
        \tilde{g}: \Omega_1 \times \Omega_2 &\rightarrow \mathbb{R} \\
        (x_1, x_2) &\mapsto \tilde{g}(x_1, x_2) = g(x_2).
    \end{align}

    As $f$ and $g$ are $\mathcal{B}_1$-measurable and $\mathcal{B}_2$-measurable, $\tilde{f}$ and $\tilde{g}$ are $\mathcal{B}_1 \otimes \mathcal{B}_2$-measurable. Indeed,
    \begin{align}
        \tilde{f}^{-1} \left( [a, +\infty) \right) &= f^{-1} \left( [a, +\infty) \right) \times \Omega_2 \in \mathcal{B}_1 \otimes \mathcal{B}_2, ~ \text{and} \\
        \tilde{g}^{-1} \left( [a, +\infty) \right) &= \Omega_1 \times g^{-1} \left( [a, +\infty) \right) \in \mathcal{B}_1 \otimes \mathcal{B}_2.
    \end{align}

    As $h = \tilde{f} \tilde{g}$, and the product of two measurable functions on the same measurable space is measurable~\citep[Proposition 2.1.7]{cohn_measure}, we proved that $h$ is $\mathcal{B}_1 \otimes \mathcal{B}_2$-measurable.
\end{proof}

The above result can be generalized to the product of more than two functions in a trivial way. We omit the proof for readability.

Now, we provide a formal definition of the system-level Markov chain introduced in Section~\ref{sec:markovian_data_streams}. We show that it is indeed a well-defined Markov chain and characterize its transition kernel and stationary distribution.

\begin{proposition}\label{propo:product_chain}
    Consider $M$ time-homogeneous Markov chains $X_m, m \in [M]$, each defined on $\left( \Omega_m, \mathcal{B}_m \right)$ with transition kernel $P_m$ and stationary distribution $\pi_m$, respectively. The product Markov chain $X$, evolving on $\left( \Omega = \bigtimes_{m \in [M]} \Omega_m, \mathcal{B} = \bigotimes_{m\in[M]}\mathcal{B}_m \right)$, with transition kernel $P$ defined as:
    \begin{align}
        & P \left( x, A \right) = \prod_{m \in [M]} P_m(x_m, A_m),\\ 
        & \text{for all } x = (x_m)_{m \in [M]} \in \Omega ~ \text{and } A = \left( \bigtimes_{m \in [M]} A_m \right) \in \mathcal{B},
    \end{align}
    is a well-defined Markov chain with stationary distribution $\pi = \bigotimes_{m\in[M]}\pi_m$.
\end{proposition}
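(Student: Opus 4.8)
The plan is to establish two things in sequence: first, that the product kernel $P$ as defined is a genuine transition kernel (i.e., that $X$ is a well-defined Markov chain), and second, that $\pi = \bigotimes_{m\in[M]}\pi_m$ is stationary for $P$. The construction via product of coordinate-wise kernels is standard, so the work is mostly verifying the measure-theoretic requirements and then a direct computation for stationarity. I would lean on Lemma~\ref{lem:measurability} (and its stated generalization to finitely many factors) throughout, since the product structure of the kernel reduces everything to coordinate-wise statements.

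\textbf{Step 1: $P$ is a transition kernel.} I would check the two defining properties from Definition~\ref{def:transition_ker}'s preceding definition. For fixed measurable rectangle $A = \bigtimes_m A_m$, the map $x \mapsto P(x,A) = \prod_m P_m(x_m, A_m)$ is a product of functions $x \mapsto P_m(x_m, A_m)$, each measurable on $\Omega_m$ by property (i) for $P_m$; by Lemma~\ref{lem:measurability} (generalized) the product is $\mathcal{B}$-measurable, giving property (i) for $P$. For fixed $x$, I must show $P(x,\cdot)$ extends to a probability measure on all of $\mathcal{B}$, not just rectangles. The natural route is to recognize that $P(x,\cdot)$ is precisely the product measure $\bigotimes_{m} P_m(x_m,\cdot)$ on the product $\sigma$-field, which exists and is uniquely determined by its values on measurable rectangles via the standard product-measure construction (Carathéodory extension / Fubini–Tonelli). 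Since each $P_m(x_m,\cdot)$ is a probability measure by property (ii) for $P_m$, the product is a probability measure and assigns $\prod_m P_m(x_m,\Omega_m) = 1$ to $\Omega$, giving property (ii) for $P$.

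\textbf{Step 2: stationarity of $\pi$.} Using the equivalent characterization $\pi(A) = \int_\Omega \mathrm{d}\pi(x)\,P(x,A)$ from Definition~\ref{def:transition_ker}, I would verify it on measurable rectangles $A = \bigtimes_m A_m$, which generate $\mathcal{B}$ and form a $\pi$-system, so agreement there forces agreement everywhere by a monotone-class / uniqueness-of-measure argument. On a rectangle, since $\pi = \bigotimes_m \pi_m$ and $P(x,A) = \prod_m P_m(x_m,A_m)$, Tonelli's theorem factorizes the integral:
\begin{align}
    \int_\Omega \mathrm{d}\pi(x)\,P(x,A) = \prod_{m\in[M]} \int_{\Omega_m} \mathrm{d}\pi_m(x_m)\, P_m(x_m, A_m) = \prod_{m\in[M]} \pi_m(A_m) = \pi(A),
\end{align}
where the middle equality is exactly the stationarity of each $\pi_m$ for $P_m$ (given by hypothesis) and the last is the definition of the product measure on rectangles. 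This closes the argument.

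\textbf{The main obstacle} I anticipate is Step~1's property (ii): one cannot merely define $P(x,\cdot)$ on rectangles and declare it a measure, so the argument must invoke the existence and uniqueness of the product probability measure and confirm that the formula $\prod_m P_m(x_m,A_m)$ is consistent with that canonical product measure (equivalently, that the rectangle formula admits a well-defined countably additive extension). This is where care is needed; the measurability in (i) and the stationarity computation in Step~2 are comparatively routine once the generalized Lemma~\ref{lem:measurability} and Tonelli's theorem are in hand. I would also note in passing that the Markov property of $X$ follows because the coordinates evolve independently, so conditioning on the full past of $X$ reduces coordinatewise to conditioning on the present, but since the statement only asks that $X$ be well-defined with the stated kernel and stationary distribution, verifying the kernel axioms and stationarity suffices.
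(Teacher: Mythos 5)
Your proposal is correct and follows essentially the same route as the paper: measurability of $x \mapsto P(x,A)$ via the product-of-measurable-functions lemma, the product-measure structure for the probability-measure axiom, and Tonelli's theorem for stationarity. The only notable differences are that you are more careful about extending the rectangle formula to a genuine product measure and about the $\pi$-system uniqueness step for stationarity, while the paper instead spends its effort explicitly verifying the Markov property of the joint process (for $M=2$) via independence of the coordinate chains — a computation you correctly identify but relegate to a passing remark.
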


\begin{proof}
    By construction, it is easy to see that for each $x \in \Omega$, $P(x, .)$ is a product measure on $\mathcal{B}$, hence the second condition in Definition~\ref{def:transition_ker} is satisfied. Also by construction, for every $A \in \mathcal{B}$, $P(., A)$ is non-negative, and Lemma~\ref{lem:measurability} shows that $P(., A)$ is $\mathcal{B}$-measurable. Thus, $P$ also satisfies the first condition and is therefore a well-defined transition kernel. 
    
    We still need to show that the Markov property~\eqref{eq:markov_property_1} is achieved. For simplicity, we will do so for the case $M=2$, but extension to any $M < \infty$ is straightforward.
    
    Denote by $ X_1 = \left( X^1_i \right)_{i \in \mathbb{N}} $ and $ X_2 = \left( X^2_i \right)_{i \in \mathbb{N}} $ the two chains. Indeed, as the chains are independent, we have that:
    \begin{align}
        &\mathbb{P} \left[ \left( X^1_{i}, X^2_{i} \right) \in A_1 \times A_2 
        \;\middle|\; \left( X^1_{i-1}, X^2_{i-1} \right) = (x^1_{i-1}, x^2_{i-1}),
        \dots, \left( X^1_{1}, X^2_{1} \right) = \left( x^1_{1}, x^2_{1} \right) \right] \\
        &= \mathbb{P} \left( X^1_{i} \in A_1 \;\middle|\; X^1_{i-1} = x^1_{i-1}, \dots, X^1_{1}=x^1_{1} \right) ~ 
        \mathbb{P} \left( X^2_i \in A_2 \;\middle|\; X^2_{i-1} = x^2_{i-1}, \dots, X^2_1=x^2_1 \right) \\
        &= P_1 \left( x^1_{i-1}, A_1 \right) ~ P_2 \left( x^2_{i-1}, A_2 \right) \\
        &= P \left( (x^1_{i-1}, x^2_{i-1}), (A_1 \times A_2) \right).
    \end{align}

    Since for every $A \in \mathcal{B}$, $P(., A)$ is non-negative and $\mathcal{B}$-measurable, a direct application of Tonelli's Theorem gives us the statement on the stationary measure of the product chain:
    \begin{align}
        \pi (A) &= \left( \bigotimes_{m\in [M]} \pi_m \right) (A) = \prod_{m \in [M]} \pi_m \left( A_m \right) \\
        &= \int_{\Omega_1} \mathrm{d} \pi_1(x_1) P_1(x_1, A_1) \dots \int_{\Omega_M}  \mathrm{d} \pi_M (x_M) P_M(x_M, A_M) \\
        &= \int_{\Omega_1} \dots \int_{\Omega_m}  P_1 \left( x_1, A_1 \right) \dots P_M \left(x_M, A_M \right) \mathrm{d} \pi_1(x_1) \dots \mathrm{d} \pi_M(x_M)   \\
        &= \int_{\Omega} \mathrm{d} \pi \left( x \right) P \left( x, A  \right).
    \end{align}
\end{proof}

We recall the definition of total variation distance:

\begin{definition}\label{def:total_variation_def}
    For two probabilities $ \mu, \nu $ defined on $ \left(\Omega, \mathcal{B} \right) $, the
    total variation distance between $ \mu $ and $ \nu $ is defined as:
    \begin{equation}\label{eq:total_variation_1}
        \left\lVert \mu - \nu \right\rVert _{ TV } = \sup_{A \in \mathcal{B}} \left| \mu(A) - \nu(A) \right| = \frac{1}{2} \int_{\Omega}^{} \left| \mathrm{d} \mu - \mathrm{d} \nu \right|.
    \end{equation}
\end{definition}

Below, we give an equivalent definition of the total variation distance:

\begin{proposition}\label{propo:total_variation_properties_1}
    Let $ \left( Y, Z \right) $ be a coupling of $ \mu $ and $ \nu $,
    i.e. the marginal distribution of $ Y $ is $ \mu $ and the marginal
    distribution of $ Z $ is $ \nu $. Then:
    \begin{equation}\label{eq:TV_def_4}
        \left\lVert \mu - \nu \right\rVert_{TV} = \inf \left\{ \mathbb{P} \left( Y \neq Z \right), \text{$ \left( Y, Z \right) $ 
        is a coupling of $ \mu, \nu $} \right\}.
    \end{equation}
    
    The coupling $ (Y, Z) $ that satisfies $ \left\lVert \mu - \nu
    \right\rVert_{TV} = \mathbb{P} \left( Y \neq Z \right) $ is then called the maximal
    coupling of $ \mu $ and $ \nu $.
\end{proposition}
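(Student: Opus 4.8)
The plan is to establish the identity in two directions: first the \emph{coupling inequality} $\normtv{\mu - \nu} \leq \mathbb{P}(Y \neq Z)$ for an arbitrary coupling $(Y,Z)$, which shows that the infimum is bounded below by $\normtv{\mu-\nu}$; and then the explicit construction of a \emph{maximal coupling} attaining $\mathbb{P}(Y \neq Z) = \normtv{\mu-\nu}$, which simultaneously certifies that the infimum equals $\normtv{\mu-\nu}$ and that it is realized.

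For the lower bound, I would fix an arbitrary event $A \in \mathcal{B}$ and write, using the marginals of the coupling,
\begin{equation}
    \mu(A) - \nu(A) = \mathbb{P}(Y \in A) - \mathbb{P}(Z \in A).
\end{equation}
Splitting each probability across $\{Y = Z\}$ and $\{Y \neq Z\}$ and noting that the contributions on $\{Y = Z\}$ cancel, the right-hand side reduces to $\mathbb{P}(Y \in A,\, Y \neq Z) - \mathbb{P}(Z \in A,\, Y \neq Z) \leq \mathbb{P}(Y \neq Z)$. Taking the supremum over $A$ and invoking Definition~\ref{def:total_variation_def} yields $\normtv{\mu-\nu} \leq \mathbb{P}(Y \neq Z)$; since the coupling was arbitrary, $\normtv{\mu-\nu}$ is a lower bound for the infimum in \eqref{eq:TV_def_4}.

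For achievability, I would introduce the common dominating measure $\lambda = \mu + \nu$ with Radon--Nikodym densities $p = d\mu/d\lambda$ and $q = d\nu/d\lambda$, and work with the overlap density $p \wedge q$. Setting $\alpha = \int_{\Omega} (p \wedge q)\, d\lambda$, the elementary identity $p \wedge q = \tfrac{1}{2}\bigl(p + q - \abs{p-q}\bigr)$ combined with Definition~\ref{def:total_variation_def} gives $\alpha = 1 - \normtv{\mu-\nu}$. I would then build the mixture coupling: with probability $\alpha$, draw a single value from the normalized overlap $(p\wedge q)/\alpha$ and assign it to both $Y$ and $Z$; otherwise, with probability $1-\alpha$, draw $Y$ and $Z$ independently from the normalized residuals $(p-q)_{+}/(1-\alpha)$ and $(q-p)_{+}/(1-\alpha)$. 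A direct computation verifies that $Y$ has law $\mu$ and $Z$ has law $\nu$, and since the residual densities have disjoint supports, the two variables differ on the second branch, so $\mathbb{P}(Y \neq Z) = 1 - \alpha = \normtv{\mu-\nu}$.

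The main obstacle is the achievability step on a general Polish space: one must justify the existence of the dominating measure and its densities, treat the degenerate cases $\alpha \in \{0,1\}$ separately, and rigorously verify the marginals of the randomized mixture—in particular that sampling from the normalized residuals reconstructs exactly the excess masses $(p-q)_{+}$ and $(q-p)_{+}$, and that the normalizing constants are consistent because $\int_{\Omega}(p-q)_{+}\,d\lambda = \int_{\Omega}(q-p)_{+}\,d\lambda = \normtv{\mu-\nu}$. The coupling inequality, by contrast, is a short cancellation argument requiring no construction.
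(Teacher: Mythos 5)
Your proof is correct: the coupling inequality (via the cancellation on $\{Y=Z\}$) gives the lower bound for the infimum, and the mixture construction from the overlap density $p \wedge q$ realizes it, with the normalizations consistent because $\int_\Omega (p-q)_+\,d\lambda = \int_\Omega (q-p)_+\,d\lambda = 1-\alpha = \normtv{\mu-\nu}$ under the paper's Definition~\ref{def:total_variation_def}. The paper does not actually prove this proposition---it defers entirely to \cite[Proposition 3]{rosenthal_general_markov_chain}---and your argument is the standard coupling-inequality-plus-maximal-coupling proof underlying that reference, so you have simply supplied the details the paper omits.
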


The proof can be found in \cite[Proposition 3]{rosenthal_general_markov_chain}.

Now we present some nice properties on the total variation distance with respect to the stationary measure of Markov chains:

\begin{proposition}\label{propo:TV_chain}
    Let $X_1, X_2, \dots$ be a Markov chain on $ \left(\Omega, \mathcal{B} \right) $ with transition kernel $ P $
    and stationary distribution $ \pi $, then:
    \begin{itemize}
        \item[(i)] For all probability distributions $ \mu, \nu $ defined over $ \left(\Omega, \mathcal{B} \right) $:
            \begin{equation}\label{eq:TV_chain_1}
                \left\lVert \mu P - \nu P \right\rVert_{TV} \leq \left\lVert \mu - \nu \right\rVert_{TV}.
            \end{equation}

        \item[(ii)] More specifically, advancing the chain will not increase the total
            variation distance to the stationary distribution, i.e.:
            \begin{equation}\label{eq:TV_chain_2}
                \left\lVert P^n\left( x, . \right) - \pi(.) \right\rVert_{TV} \leq
                \left\lVert P^{ n-1 }\left( x, . \right) - \pi(.) \right\rVert_{TV}.
            \end{equation}

        \item[(iii)] Let $ d(n) = 2 \sup_{x \in \Omega} \left\lVert P^n\left( x, . \right) - \pi(.) \right\rVert_{TV} $. Then 
            for $ \mu $ varying over probability distribution defined over $ \left(\Omega, \mathcal{B} \right) $:
            \begin{equation}\label{eq:TV_chain_3}
                d(n) = 2\sup_{\mu} \left\lVert \mu P^n - \pi \right\rVert_{TV}.
            \end{equation}

            and $ d(.) $ is sub-multiplicative, i.e.:
            \begin{equation}\label{eq:TV_chain_4}
                d(m+n) \leq d(m)d(n) \quad \text{for $ m, n \in \mathbb{N} $.}
            \end{equation}

        \item[(iv)] For $ i \in [n] $, let $ \mu_i $ and $ \nu_i $ be probability
            distributions on $ \left( \Omega_i, \mathcal{B}_i \right) $. Define $ \mu \coloneqq \bigotimes_{i=1}^n \mu_i $
            and $ \nu \coloneqq \bigotimes_{i=1}^n \nu_i $ as two probability distributions on $(\Omega \coloneqq \bigtimes_{i = 1}^n \Omega_i,\mathcal{B} \coloneqq \bigotimes_{i = 1}^n \mathcal{B}_n)$. Then we have:
            \begin{equation}\label{eq:product_tv}
                \left\lVert \mu - \nu \right\rVert_{TV} \leq \sum_{i=1}^{n} \left\lVert
                \mu_i - \nu_i \right\rVert_{TV}.
            \end{equation}
    \end{itemize}

\end{proposition}

\begin{proof}
The proof of $(i), (ii)$ and $(iii)$ can also be found in \cite[Proposition 3]{rosenthal_general_markov_chain}. Here we present the proof for the last statements.

By using \eqref{eq:TV_def_4}, it follows that
\begin{equation}
    \left\lVert \mu_m - \nu_m \right\rVert _{ TV } = \mathbb{P}_m \left\{ X_m \neq Y_m \right\},
\end{equation}

where, for all $m \in [M]$, $ \left( X_m, Y_m \right) $ is the maximal coupling of $ \mu_m, \nu_m $. Define $ X = \left\{ X_m \right\} _{ m \in [M] } $ and $ Y = \left\{ Y_m \right\} _{ m
\in [M] } $. Then again by \eqref{eq:TV_def_4} we have
\begin{equation}\label{eq:joint_tv_dist_proof}
    \begin{split}
        \left\lVert \mu - \nu \right\rVert _{ TV } & \leq \mathbb{P} \left\{ X \neq Y \right\} \\ 
        &= \mathbb{P} \left\{ \bigcup_{m=1}^M X_m \neq Y_m \right\} \\ 
        & \leq \sum_{m=1}^{M} \mathbb{P}_m \left\{ X_m \neq Y_m \right\} =
        \sum_{m=1}^{M} \left\lVert \mu_m - \nu_m \right\rVert _{ TV },
    \end{split}
\end{equation}

which follows from the union bound.
\end{proof}

We now give the definition of uniform geometric ergodicity, which is commonly used in the literature of stochastic optimization with Markovian data \citep{NEURIPS2023_first_order_method, mangold2024scafflsa}.

\begin{definition}\label{def: ergodicity}
    A Markov chain defined on $\left( \Omega, \mathcal{B} \right)$ with transition kernel $P$ is uniformly geometrically ergodic if:
    \begin{align}
        \sup_{x\in\Omega} \normtv{P^{n} \left( x, . \right) - \pi} \leq c \rho^n,
    \end{align}
    for some $c \leq \infty$ and $\rho < 1$.
\end{definition}

Uniform geometric ergodicity implies that, regardless of the starting distribution, the chain converges exponentially fast to its stationary measure in terms of total variation norm.

To end this section, we have the following results on the mixing time of Markov chains.

\begin{proposition}\label{propo:mixing_time}
    Given a Markov chain defined on $\left( \Omega, \mathcal{B} \right)$,
    let $d(t)$ the total variation distance defined in \eqref{eq:TV_chain_3}, the mixing time $\tau\left( \epsilon \right) $ and $ \tau $ are defined as:
    \begin{align}\label{eq:def_mixing_time}
        \tau(\epsilon) &= \min\{t: d(t) \leq \epsilon\},\\
 	\tau &= \tau(1/4),
    \end{align}
    
    For any positive integer $ n $, we have
    \begin{equation}\label{eq:mixing_time_1}
        d (n \tau\left( \epsilon \right)) \leq \epsilon^n.
    \end{equation}

    More specifically,
    \begin{equation}\label{eq:mixing_time_2}
            d \left( n \tau \right) \leq 4 ^{ -n },
    \end{equation}

    and
    \begin{equation}\label{eq:mixing_time_3}
        \tau \left( \epsilon \right) \leq \lceil \log_4 \left( \epsilon^{ -1 } \right)
        \rceil \tau.
    \end{equation}
\end{proposition}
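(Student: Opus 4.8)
The plan is to derive all three claims from the submultiplicativity of $d(\cdot)$ already established in \eqref{eq:TV_chain_4}, namely $d(m+n) \leq d(m)\,d(n)$. First I would prove \eqref{eq:mixing_time_1} by induction on $n$. By the definition of the mixing time in \eqref{eq:def_mixing_time}, the base case $d(\tau(\epsilon)) \leq \epsilon$ holds. For the inductive step, writing $n\tau(\epsilon) = (n-1)\tau(\epsilon) + \tau(\epsilon)$ and applying \eqref{eq:TV_chain_4} gives $d(n\tau(\epsilon)) \leq d((n-1)\tau(\epsilon))\, d(\tau(\epsilon)) \leq \epsilon^{n-1}\cdot\epsilon = \epsilon^n$, which closes the induction.

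The second claim \eqref{eq:mixing_time_2} is then an immediate specialization. Since $\tau = \tau(1/4)$ by definition, setting $\epsilon = 1/4$ in \eqref{eq:mixing_time_1} yields $d(n\tau) \leq (1/4)^n = 4^{-n}$.

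For the final bound \eqref{eq:mixing_time_3}, I would set $n = \lceil \log_4(\epsilon^{-1}) \rceil$ and invoke \eqref{eq:mixing_time_2}. Since $n \geq \log_4(\epsilon^{-1})$, it follows that $4^{-n} \leq \epsilon$, and hence $d(n\tau) \leq 4^{-n} \leq \epsilon$. Because $\tau(\epsilon)$ is by definition the smallest time at which $d$ falls to $\epsilon$, the fact that $d$ already satisfies $d(n\tau) \leq \epsilon$ at the time $n\tau$ forces $\tau(\epsilon) \leq n\tau = \lceil \log_4(\epsilon^{-1}) \rceil \tau$.

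I do not expect any genuine obstacle here: the entire argument is a routine consequence of submultiplicativity. The only point requiring mild care is in \eqref{eq:mixing_time_3}, where one must correctly convert the logarithmic lower bound $n \geq \log_4(\epsilon^{-1})$ into the inequality $4^{-n} \leq \epsilon$ and then appeal to the minimality built into the definition of $\tau(\epsilon)$ to conclude the upper bound.
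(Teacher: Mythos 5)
Your proof is correct and follows essentially the same route as the paper's: both derive \eqref{eq:mixing_time_1} from the submultiplicativity in \eqref{eq:TV_chain_4}, specialize to $\epsilon=1/4$ for \eqref{eq:mixing_time_2}, and use the minimality in the definition of $\tau(\epsilon)$ for \eqref{eq:mixing_time_3}. Your version merely spells out the induction that the paper leaves as a ``direct application.''
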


\begin{proof}
    The first statement is a direct application of \eqref{eq:TV_chain_4}. Simply by
    taking $ \epsilon = 1/4 $, we obtain the second statement. Then with $ n \geq
    \lceil \log_4 \left( \epsilon ^{ -1 } \right) \rceil $, $ d \left( n \tau \right) \leq \epsilon $,
    which implies that $ n \tau \geq \tau\left( \epsilon \right) $, hence the last
    statement.
\end{proof}

\begin{proposition}\label{propo:mixing_time_product_chain}
    Consider the product Markov chain in Proposition~\ref{propo:product_chain}. The mixing time of this product Markov chain $\tau$ (see definitions in Proposition~\ref{propo:mixing_time}) satisfies
    \begin{equation}
        \tau \leq \left( \lceil \log_4 M \rceil + 1 \right) \underset{m \in [M]}{\max} \tau_m.
    \end{equation}
\end{proposition}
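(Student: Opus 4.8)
The plan is to exploit the tensor-product structure of the system-level chain to reduce its mixing time to those of the component chains, using the subadditivity of total variation over product measures. First I would observe that, by Proposition~\ref{propo:product_chain} and the independence of the coordinates, the $t$-step kernel of the product chain factorizes as $P^t(x,\cdot) = \bigotimes_{m\in[M]} P_m^t(x_m,\cdot)$, while the stationary law is the product $\pi = \bigotimes_{m\in[M]} \pi_m$. Thus both $P^t(x,\cdot)$ and $\pi$ are product measures on $\Omega$, and applying Proposition~\ref{propo:TV_chain}(iv) (inequality~\eqref{eq:product_tv}) gives
\begin{equation}
\left\lVert P^t(x,\cdot) - \pi \right\rVert_{TV} \leq \sum_{m=1}^{M} \left\lVert P_m^t(x_m,\cdot) - \pi_m \right\rVert_{TV}.
\end{equation}
Taking the supremum over $x$ coordinatewise and multiplying by $2$, this yields the clean bound $d(t) \leq \sum_{m=1}^{M} d_m(t)$, where $d$ and $d_m$ denote the mixing functions of the product chain and of $X_m$, respectively, as in~\eqref{eq:TV_chain_3}.

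Next I would bound each component term at a common time horizon. Writing $\tau_{\max} \coloneqq \max_{m\in[M]} \tau_m$ and fixing an integer $n$, I would combine two facts: submultiplicativity gives $d_m(n\tau_m) \leq (d_m(\tau_m))^n \leq 4^{-n}$ (this is exactly~\eqref{eq:mixing_time_2} applied to $X_m$), and $d_m$ is non-increasing by Proposition~\ref{propo:TV_chain}(ii). Since $n\tau_{\max} \geq n\tau_m$, monotonicity upgrades this to $d_m(n\tau_{\max}) \leq d_m(n\tau_m) \leq 4^{-n}$ for every $m$. Summing over the $M$ clients then gives
\begin{equation}
d(n\tau_{\max}) \leq \sum_{m=1}^{M} d_m(n\tau_{\max}) \leq M\, 4^{-n}.
\end{equation}

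Finally I would choose $n$ to force the right-hand side below $1/4$. Taking $n = \lceil \log_4 M \rceil + 1$ gives $4^{-n} \leq \tfrac14 \, 4^{-\log_4 M} = \tfrac{1}{4M}$, hence $d(n\tau_{\max}) \leq M\cdot \tfrac{1}{4M} = \tfrac14$. By the definition $\tau = \tau(1/4) = \min\{t : d(t) \leq 1/4\}$ from~\eqref{eq:def_mixing_time}, this certifies $\tau \leq n\tau_{\max} = (\lceil \log_4 M \rceil + 1)\max_{m\in[M]} \tau_m$, as claimed. The only genuinely load-bearing steps are the factorization $P^t = \bigotimes_m P_m^t$ and the monotonicity of $d_m$ that lets me evaluate every chain at the common horizon $n\tau_{\max}$ rather than at its own $n\tau_m$; both are routine given the results already established, so I do not anticipate a serious obstacle beyond making these two observations explicit.
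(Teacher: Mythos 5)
Your proposal is correct and follows essentially the same route as the paper's proof: both rest on the subadditivity of total variation over product measures (Proposition~\ref{propo:TV_chain}(iv)) to get $d(t) \leq \sum_m d_m(t)$, and then combine submultiplicativity and monotonicity of each $d_m$ to choose the horizon $(\lceil \log_4 M\rceil + 1)\max_m \tau_m$. The only difference is cosmetic — you bound $d(n\tau_{\max})$ directly, whereas the paper routes through $\tau(\epsilon) \leq \max_m \tau_m(\epsilon/M)$ and then applies \eqref{eq:mixing_time_3} — so no substantive gap.
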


\begin{proof}
Let
\begin{equation}
    \begin{split}
        d(t) &= 2\underset{ x \in \Omega }{ \sup } \left\lVert P^t\left( x, . \right) - \pi \right\rVert, \\
        d_m(t) &= 2\underset{ x \in \Omega_m }{ \sup } \left\lVert P_m^t\left( x, . \right) - \pi_m \right\rVert.
    \end{split}
\end{equation}

By Proposition~\ref{propo:TV_chain}, we have
\begin{equation}\label{eq:tv_product_chain}
    d(t) \leq \sum_{m=1}^{M} d_m(t).
\end{equation}

As in \eqref{eq:TV_chain_2}, advancing the Markov chain in time can only move it
closer to the stationary distribution. Thus, for all $m \in [M]$,
\begin{equation}
    d_m \left( \underset{m \in [M]}{\max} \tau_m \left( \frac{\epsilon}{M} \right) \right) \leq \frac{\epsilon}{M},
\end{equation}
where $ \tau\left( \epsilon \right) \coloneqq \min \left\{ t: d(t) \leq \epsilon \right\} $ and
$ \tau_m \left( \epsilon \right) = \min \left\{ t: d_m(t) \leq \epsilon \right\} $. Hence, from \eqref{eq:tv_product_chain},
\begin{equation}
    \tau\left( \epsilon \right) \leq \underset{m \in [M]}{\max} \tau_m \left( \frac{\epsilon}{M} \right).
\end{equation}

Then from Proposition~\ref{propo:mixing_time}, we have that, for all $m \in [M]$,
\begin{equation}
    \tau_m\left( \epsilon/M \right) \leq \lceil \log_4 \left( M/\epsilon \right) \rceil
    \tau_m.
\end{equation}

The lemma follows by taking $\epsilon = 1/4$ as in the definition of mixing time \eqref{eq:def_mixing_time}.
\end{proof}

\newpage
\section{Comparison with Existing Work}\label{appendix:comparison} %

In Table~\ref{tab:algorithm_comparison}, we compare our results with existing ones in stochastic non-convex optimization with Markovian data in the centralized (or token-passing) setting \citep{NEURIPS2023_first_order_method, pmlr-v162-dorfman22a, doan2020finite, pmlr-v202-even23a} setting. In the FL setting with i.i.d. data, we include the lower bounds in \cite{patel2022towards}, along with some results in \cite{cheng2024momentum}. For completeness, we also report results in the literature of Federated Stochastic Approximation (FSA), which is analogous to stochastic optimization with strongly convex~\citep{zhu2025achievingtighterfinitetimerates} or linear least-square~\citep{mangold2024scafflsa} objectives.

\renewcommand{\arraystretch}{1.2}
\begin{table*}[htbp]
    \centering
    \addtolength{\leftskip} {-2cm}
    \addtolength{\rightskip}{-2cm}
    \caption{
    Comparison of algorithms in terms of communication and sample complexity to reach an $\epsilon$-stationary point for smooth, non-convex objectives.\\  \textbf{Additional Constraints:}
    \textcolor{ge}{GE}: uniform geometric ergodicity (Definition~\ref{def: ergodicity}),
    \textcolor{st}{ST}: stationary Markov processes,
    \textcolor{bg}{BG}: bounded gradient, 
    \textcolor{bd}{BD}: bounded domain, 
    \textcolor{fs}{FS}: finite state space, 
    \textcolor{lt}{LT}: lower bound on $T$, 
    \textcolor{ss}{SS}: sample-wise smoothness (Assumption~\ref{assump:sample_wise_smooth}), 
    \textcolor{br}{BR}: bounded Radon–Nikodym derivative (Assumption~\ref{assump:continuity}), 
    \textcolor{bh}{BH}: bounded heterogeneity (Assumption~\ref{assump:heterogeneity}), 
    \textcolor{bho}{BHO}: bounded heterogeneity around optimum (i.e., the average distance between the optimum of the local objectives and the global objective is bounded, \citealp[see][Page 4]{mangold2024scafflsa}), 
    \textcolor{ls}{LS}: linear least-square objectives, 
    \textcolor{sc}{SC}: strongly convex global objective..}
    \label{tab:algorithm_comparison}
    \begin{threeparttable}
        \begin{tabular}{| c | c | c | c | c |}
            \toprule
            \multicolumn{1}{c}{} & Algorithm & Communication $T$ & Sample complexity $KT$ & \thead{Additional \\ Constraints} \\
            \midrule

            \multirow{4}{*}{\rotatebox[origin=c]{90}{Centralized\tnote{(1)} \hspace{1em}}} 
            & \cite[Thm. 3]{NEURIPS2023_first_order_method} 
            & & $ \tilde{\mathcal{O}} \!\left( \tau \!\left[ \frac{L\Delta_0}{\epsilon^2} + \frac{L \Delta_0 \sigma^2}{\epsilon^4} \right] \right) $ 
            & \textcolor{ge}{GE} \\

            & MAG \citep[Thm. 4.3]{pmlr-v162-dorfman22a} 
            & & $ \tilde{\mathcal{O}} \!\left( \tau \frac{(M+L+G)^2G^2}{\epsilon^4} \right)\tnote{(2)} $ 
            & \textcolor{bg}{BG}, \textcolor{bd}{BD} \\

            & MC-SGD \citep[Thm. 5.3]{pmlr-v202-even23a} 
            & & $ \tilde{\mathcal{O}} \!\left( \tau \!\left[ \frac{L\Delta_0 + \sigma^2}{\epsilon^2} + \frac{(L\Delta_0 + \sigma^2)\sigma^2}{\epsilon^4} \right] \right) $ 
            & \textcolor{fs}{FS}, \textcolor{ge}{GE}, \textcolor{lt}{LT} \\

            & \cite[Thm. 3]{doan2020finite} 
            & & $ \tilde{\mathcal{O}} \!\left( \tau \frac{L^2 (1 + \norm{w^*} + \norm{w_0 - w^*})}{\epsilon^4} \right)\tnote{(3)} $ 
            & \textcolor{fs}{FS}, \textcolor{ge}{GE}, \textcolor{lt}{LT} \\[4pt]
            
            \midrule

            \multirow{3}{*}{\rotatebox[origin=c]{90}{FL i.i.d.}} 
            & Lower bound \citep{patel2022towards} 
            & $ \Omega \!\left( \frac{L \Delta_0}{\epsilon^2} \right)$ 
            & $ \Omega \!\left( \frac{L \Delta_0 \sigma}{M \epsilon^3} \right)\tnote{(4)} $ 
            &  \\

            & FedAvg-M \citep[Thm. 1]{cheng2024momentum} 
            & $\mathcal{O} \!\left( \frac{L \Delta_0}{\epsilon^2} \right)$ 
            & $ \mathcal{O} \!\left( \frac{L \Delta_0 \sigma^2}{M \epsilon^4} \right) $ 
            &  \\ 

            & FedAvg-M-VR \citep[Thm. 2]{cheng2024momentum} 
            & $ \mathcal{O} \!\left( \frac{L \Delta_0}{\epsilon^2} \right) $ 
            & $ \mathcal{O} \!\left( \frac{L \Delta_0 \sigma}{M \epsilon^3} \right)\tnote{(5)} $ 
            & \textcolor{ss}{SS} \\

            \midrule

            \multirow{3}{*}{\rotatebox[origin=c]{90}{FSA\tnote{(6)}}} & FedLSA \citep[Cor. 4.5]{mangold2024scafflsa} 
            & $\mathcal{O} \!\left( \frac{1}{\epsilon} \log \frac{1}{\epsilon} \right)$ 
            & $\tilde{\mathcal{O}} \!\left( \frac{\tau_{\max}}{M\epsilon^2} \log \frac{1}{\epsilon} \right)$ 
            & \textcolor{ge}{GE}, \textcolor{st}{ST}, \textcolor{bho}{BHO}, \textcolor{ls}{LS} \\

            & SCAFFLSA \citep[Cor. 5.2]{mangold2024scafflsa} 
            & $\mathcal{O} \!\left( \log \frac{1}{\epsilon} \right)$ 
            & $ \mathcal{O} \!\left( \frac{1}{M \epsilon^2} \log \frac{1}{\epsilon} \right)\tnote{(7)} $ 
            & \textcolor{bho}{BHO}, \textcolor{ls}{LS} \\

            & FedHSA \citep[Cor. 1]{zhu2025achievingtighterfinitetimerates} 
            & $\tilde{\mathcal{O}} \!\left( \frac{1}{\epsilon} \right)$ 
            & $ \tilde{\mathcal{O}} \!\left( \frac{\tau_{\max}}{M\epsilon^2} \right) $ 
            & \textcolor{fs}{FS}, \textcolor{ge}{GE}, \textcolor{sc}{SC}, \textcolor{lt}{LT} \\

            \midrule

            \multirow{3}{*}{\rotatebox[origin=c]{90}{\textbf{Ours \hspace{0.5em}}}}
            &  \cellcolor{highlight} \textbf{Minibatch SGD}
            & \cellcolor{highlight} $ \mathcal{O} \!\left( \frac{L \Delta_0}{\epsilon^2} \right) $ 
            & \cellcolor{highlight} $\mathcal{O}\!\left( \frac{L \Delta_0 C_{\infty} \sigma^2}{\nu_{ps}M \epsilon^4} \right)$ 
            & \textcolor{br}{BR} \\

            & \cellcolor{highlight} \textbf{Local SGD} 
            & \cellcolor{highlight} $ \mathcal{O} \!\left( \frac{L \Delta_0}{\epsilon^2} \!\left( \delta^2 \vee \frac{\theta^2 + \sigma^2}{\delta^2\epsilon^2} \right) \right) $ 
            & \cellcolor{highlight} $\mathcal{O}\!\left( \frac{L \Delta_0 C_{\infty} \sigma^2}{\nu_{ps} M \epsilon^4} \!\left( \delta^2 \vee \frac{\theta^2 + \sigma^2}{\delta^2\epsilon^2} \right) \right)$ 
            & \textcolor{br}{BR}, \textcolor{bh}{BH}, \textcolor{ss}{SS} \\

            & \cellcolor{highlight} \textbf{Local SGD-M}
            & \cellcolor{highlight} $ \mathcal{O} \!\left( \frac{L \Delta_0}{\beta \epsilon^2} \right) $ 
            & \cellcolor{highlight} $ \mathcal{O}\!\left( \frac{L \Delta_0 C_{\infty} \sigma^2}{\beta \nu_{ps} M \epsilon^4} \right) $ 
            & \textcolor{br}{BR}, \textcolor{ss}{SS} \\
            
            \bottomrule
        \end{tabular}

        \begin{tablenotes}
            \item[] $\tilde{\mathcal{O}}$ ignores logarithmic constants.
            \item[(1)] In the centralized (or token-passing) setting, $\tau$ is the mixing time of the underlying Markov process.
            \item[(2)] $M$ and $G$ are upper bounds on the domain diameter and on the gradient norm, respectively.
            \item[(3)] $w^*$ is a global optimum of the objective. Their proof also requires a projection onto the unknown set that contains all the optima, and hides exponential factors in $\tau$.
            \item[(4)] The setting for the lower bound requires other assumptions. We refer to the original paper \citep{patel2022towards} for a complete list.
            \item[(5)] The variance-reduction technique is akin to the STORM algorithm~\citep{cutkosky_momentum-based_2019}, which relies on the sample-wise smoothness assumption (Assumption~\ref{assump:sample_wise_smooth}) and achieves the sample complexity lower bound in~\cite{patel2022towards}. Extending Local SGD-M with variance reduction to attain this bound is a promising direction.
            \item[(6)] This setting focuses on stochastic approximation problems, with strongly monotone or linear operators. The rates presented here denote the complexity to reach an $\epsilon$-\textit{optimal solution}. We omit problem-specific constants, except $\tau_{\max}$ (the maximum mixing time across all clients) and $M$.
            \item[(7)] The authors present only the rate for the i.i.d. case.
        \end{tablenotes}
    \end{threeparttable}
\end{table*}

We first give a theoretical justification that clarifies why adjusting the step size alone offers limited leverage for controlling the gradient noise term. By a standard decomposition from the $L$-smoothness condition:

\begin{align}
    \mathbb{E}_t [ F(w_{t+1}) ] \leq&~ F(w_t) - \gamma/2 (1-2\gamma L) || \nabla F(w_t) ||^2 + \gamma/2 || E_t [g_t] - \nabla F(w_t) ||^2 \\ 
    &+ \gamma^2 L E_t [ || g_t - \nabla F(w_t) ||^2 ]
\end{align}

Let $ b_t = \norm{\mathbb{E}_t [g_t] - \nabla F(w_t) } $ and $ v_t = \mathbb{E}_t \left[ \norm{g_t - \nabla F(w_t)} \right] $ be the bias and the variance of the gradient estimate. This bias arises from the Markovian structure of the data, since in the i.i.d. setting, $\mathbb{E}_t [g_t] = \nabla F(w_t)$ for all $t$. Observe that the bias is scaled by the step size $\gamma$. Since the term $\norm{\nabla F(w_t)}$ is also scaled by $\gamma$, the bias cannot be directly controlled by the choice of step size without sacrificing the reduction of the objective. Note that by Jensen's inequality, $b_t \leq v_t$, hence the bias also contributes to the "variance" term arising in our main results. This leads to a non-vanishing term involving the gradient noise $\sigma^2$ in our results as the step size decreases.

We now give additional details of the techniques used in prior works to control this bias, and obtain a vanishing "variance" term. In short, they introduce other hyper-parameters that can be tuned to make the bias vanish, instead of tuning directly the step size directly.

First, \cite{pmlr-v202-even23a, doan2020finite, zhu2025achievingtighterfinitetimerates} decompose the bias term into other terms containing delayed gradient. In detail, in order to bound the bias, instead of taking the conditional expectation up to the current iteration $t$, they condition on a \textit{distant past iteration}, which is $t$ minus some delay. This delay is later chosen to be scaled with $1/\sqrt{T}$~\citep{pmlr-v202-even23a}, or with the mixing time $\tau (\epsilon)$ (see Proposition~\ref{propo:mixing_time}), where $\epsilon$ is then chosen to be scale with the step size \citep{doan2020finite, zhu2025achievingtighterfinitetimerates} (the step size itself is tuned to be at the order of $1/T$). This technique requires a constraint on the minimum number of iterations (or communication rounds) $T$ at the order of $\tilde{\mathcal{O}} (\tau_{\mathrm{mix}})$ (this constraint is referred to as \textcolor{lt}{LT} in Table~\ref{tab:algorithm_comparison}). As revealed in Table~\ref{tab:algorithm_comparison}, \cite{doan2020finite} also contains hidden factors that can be exponential in $\tau_{\mathrm{mix}}$. The bound for non-convex functions in \cite{doan2020finite} (Theorem 3 in the arxiv version) also requires a projection onto the unknown set that contains all the optima.

Next, \cite{NEURIPS2023_first_order_method, pmlr-v162-dorfman22a} use minibatch gradient estimates. Precisely, at each iteration, a random batch size is drawn from a truncated geometric distribution. Hence, a new hyper-parameter is introduced, which is the truncation threshold $m$. The authors of \cite{NEURIPS2023_first_order_method, pmlr-v162-dorfman22a} show that $b_t$ scales inversely with $m$, which is then chosen to be scaled with $\sqrt{T}$.

In the FSA setting, the analysis in \cite{mangold2024scafflsa} employs a blocking technique that sub-samples data sequences to mitigate temporal correlations, using a sub-sampling gap at the order of the maximum mixing time $\tau_{\max}$. Beyond assuming uniform geometric ergodicity, this approach also requires the underlying Markov processes to be stationary. Moreover, their analysis is restricted to linear least-squares problems.

We emphasize that the gradient noise term in our results can be easily made to vanish as $T$ diverges. Consider Minibatch SGD, under the setting of Theorem~\ref{theorem:minibatch_sgd}, By choosing $K = \mathcal{O} \left( \frac{C_{\infty} T}{\nu_{ps}M} \right)$, then we have:
\begin{align}
    \mathbb{E} \left[ \left\lVert \nabla F(\hat{w}_T) \right\rVert^2 \right]
    \leq \mathcal{O} & \left( \frac{L \Delta_0}{T} + \frac{\sigma^2}{T} \right),
\end{align}
which leads to the sample complexity:
\begin{align}
    KT = \mathcal{O} \left( \frac{C_{\infty} (L\Delta_0 \vee \sigma^2)^2 }{\nu_{ps} M \epsilon^4} \right).
\end{align}

However, the above result shows no improvement in sample complexity compared to those reported in the main paper. We highlight that our complexity results match the best-known rates achieved in the centralized setting~\citep{NEURIPS2023_first_order_method, pmlr-v162-dorfman22a, pmlr-v202-even23a, doan2020finite}, scaled by the number of clients $M$ (see Table~\ref{tab:algorithm_comparison}), while avoiding the commonly imposed uniform geometric ergodicity assumption. Instead, we rely on a general mild assumption about the boundedness of the Radon-Nikodym derivative (Assumption~\ref{assump:continuity}), which allows for a simple change of measures to handle the non-stationarity of Markovian samples when taking conditional expectation. Prior works rely heavily on the exponential mixing property to deal with non-stationarity. The analysis in \cite{NEURIPS2023_first_order_method} even yields a dependence on the square of the mixing time \citep[Section B.1]{NEURIPS2023_first_order_method}. The proof in \cite{pmlr-v162-dorfman22a} does not require uniform geometric mixing, but in the other hand relies strongly on the boundedness of the domain. We decide to choose $K$ as independent of both the step size and the number of rounds $T$, because in real-world scenarios, the value of $K$ can be dictated by other factors such as the data-arrival rate and the client's buffer capacity.

\newpage
\section{Convergence Analysis}

We begin this section with the following lemma that bounds the square error of the stochastic gradient estimate computed by $M$ clients, each using $K$ Markovian samples.

\begin{lemma}\label{lem:concentration_ineq}
    Let $q$ be the initial measure of the system-level Markov chain defined in Section~\ref{sec:markovian_data_streams}. If Assumptions~\ref{assump:client_chain},~\ref{assump:continuity} and~\ref{assump:bounded_gradient_noise} hold, then we have:
    \begin{align*}
        \E[t]{ \norm{ \frac{1}{MK} \sum_{m, k} \nabla f_m \left( w_t; x_t^{(m, k)} \right) - \nabla F(w_t) } }
        \leq \frac{4C_{\infty}\sigma^2}{\nu_{ps}MK}.
    \end{align*}
\end{lemma}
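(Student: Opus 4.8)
The plan is to bound the conditional second moment by combining a \emph{change of measure} (to remove the non-stationarity) with a \emph{spectral-gap variance bound} (to exploit the decorrelation of the Markov samples), exactly as the "simple change of measures" remark after Assumption~\ref{assump:continuity} suggests. First I would center each gradient: since $\nabla F(w_t)=\frac1M\sum_m \nabla F_m(w_t)$ and $\nabla F_m(w_t)=\mathbb{E}_{x\sim\pi_m}[\nabla f_m(w_t;x)]$, the quantity inside the norm rewrites as a time average $\frac1K\sum_{k\in[K]_0}\Phi(x_t^k)$ of the product-chain functional $\Phi(x):=\frac1M\sum_m\phi_m(x_m)$, where $\phi_m(y):=\nabla f_m(w_t;y)-\nabla F_m(w_t)$ satisfies $\mathbb{E}_{\pi_m}[\phi_m]=0$, so $\mathbb{E}_\pi[\Phi]=0$. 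Since $w_t$ is $\mathcal{F}_t$-measurable, conditionally on $\mathcal{F}_t$ the map $\Phi$ is a fixed function and only the block $x_t^0,\dots,x_t^{K-1}$ is random.

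Next I would perform the change of measure. By the Markov property (Proposition~\ref{proposition:markov_property_expectation}), conditioning on $\mathcal{F}_t$ amounts to running the round-$t$ block from the one-step-ahead initial law $q:=P(x_{Kt-1},\cdot)$ (resp.\ $q=\mu$ for $t=0$). The joint density of the block under $q$ relative to its density under the stationary start $\pi$ equals $\frac{dq}{d\pi}$ evaluated at the first state alone, since the transition kernels cancel; Assumption~\ref{assump:continuity} bounds this by $C_\infty$. Hence for any nonnegative functional $h$ of the block, $\E[t]{h}=\mathbb{E}_q[h]\le C_\infty\,\mathbb{E}_\pi[h]$. This single application of $C_\infty$ is what replaces the usual geometric-ergodicity argument and lets us pass to the stationary chain.

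Then, working under the stationary product chain, I would exploit two independent reductions. Independence across clients (Assumption~\ref{assump:client_chain}) together with $\mathbb{E}_{\pi_m}[\phi_m]=0$ kills the cross terms, giving $\mathbb{E}_\pi[\norm{\Phi}]=\frac1{M^2}\sum_m\mathbb{E}_{\pi_m}[\norm{\phi_m}]\le \sigma^2/M$ by Assumption~\ref{assump:bounded_gradient_noise}; this is the source of the $1/M$ speed-up. For the temporal averaging I would invoke, coordinate by coordinate, the variance bound for additive functionals of a stationary chain in terms of the pseudo-spectral gap \citep[Section~3.1]{pauline_concentration_ineq}, of the form $\mathrm{Var}_\pi\!\big(\frac1K\sum_k f(X_k)\big)\le \frac{c}{\nu_{ps}K}\mathrm{Var}_\pi(f)$; summing the $d$ coordinates yields $\mathbb{E}_\pi[\norm{\frac1K\sum_k\Phi(x^k)}]\le \frac{c}{\nu_{ps}K}\mathbb{E}_\pi[\norm{\Phi}]$. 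Chaining the three inequalities gives $\E[t]{\norm{\cdots}}\le C_\infty\cdot\frac{c}{\nu_{ps}K}\cdot\frac{\sigma^2}{M}$, and tracking constants ($c=4$) produces the stated $\frac{4C_\infty\sigma^2}{\nu_{ps}MK}$.

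The main obstacle is the order of operations together with the vector-valued spectral-gap bound. Centering, the vanishing of cross terms, and Paulin's variance inequality are all statements about the stationary chain, so the change of measure must come first: under the non-stationary law $q$ one cannot drop the cross terms because $\mathbb{E}_q[\Phi]\neq0$ and the additive-functional variance bound does not apply. The delicate points are (i) verifying that the block density relative to $\pi$ collapses to $\frac{dq}{d\pi}$ at the initial state, so that one use of Assumption~\ref{assump:continuity} suffices, and (ii) applying the pseudo-spectral-gap variance bound to an $\mathbb{R}^d$-valued functional with the correct constant, which requires the componentwise-then-sum argument.
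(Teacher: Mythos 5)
Your proposal is correct and follows essentially the same route as the paper's proof: condition via the Markov property so the round-$t$ block starts from $P(x_{t-1}^{K-1},\cdot)$, change measure to the stationary start paying a single factor $C_\infty$ (via H\"older and Assumption~\ref{assump:continuity}), then apply \cite[Theorem 3.7]{pauline_concentration_ineq} coordinate-wise to get the $4/(K\nu_{ps})$ factor and use independence across clients with Assumption~\ref{assump:bounded_gradient_noise} to obtain $\sigma^2/M$. Your remarks on the order of operations and on the block density collapsing to the initial-state Radon--Nikodym derivative match the argument actually used in Appendix D.
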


\begin{proof}
    By Markov's property~\eqref{eq:markov_property_2}:
    \begin{align*}
        &\E[t]{ \norm{ \frac{1}{MK} \sum_{m, k} \nabla f_m \left( w_t; x_t^{(m, k)} \right) - \nabla F (w_t) } } \\
        &= \E[x_{t-1}^{K-1}] { \norm{ \frac{1}{MK} \nabla f_m \left( w_t; x_t^{(m, k)} \right) - \nabla F(w_t) } }.
    \end{align*}

    Hence, by change of measures:
    \begin{align}
            &\E[t]{ \norm{ \frac{1}{MK} \sum_{m, k} \nabla f_m \left( w_t; x_t^{(m, k)} \right) - \nabla F (w_t) } } \\
            &= \E[\pi]{ \frac{dP\left( x_{t-1}^{K-1}, . \right)}{d\pi} \norm{ \frac{1}{MK} \sum_{m, k} \nabla f_m \left( w_t; x_t^{(m, k)} \right) - \nabla F(w_t) } } \\
            & \leq \left\lVert \frac{dP(x_{t-1}^{K-1}, .)}{d\pi} \right\rVert _{\pi, \infty} \E[\pi]{ \norm{ \frac{1}{MK} \sum_{m, k} \nabla f_m \left( w_t; x_t^{(m, k)} \right) - \nabla F(w_t) } } \quad \textrm{(Holder's inequality)} \\ 
            & \leq C_{\infty} \E[\pi]{ \norm{ \frac{1}{MK} \sum_{m, k} \nabla f_m \left( w_t; x_t^{(m, k)} \right) - \nabla F(w_t) } } \quad \textrm{(Assumption~\ref{assump:continuity})}.\label{eq:concentration_ineq_1}
    \end{align}

    We now utilize \cite[Theorem 3.7]{pauline_concentration_ineq}, applied on the stationary system-level Markov process:
    \begin{equation}\label{eq:concentration_ineq_2}
        \begin{split}
            &\E[\pi]{ \norm{ \frac{1}{MK} \sum_{m, k} \nabla f_m \left( w_t; x_t^{(m, k)} \right) - \nabla F(w_t) } } \\
            &= \sum_{j=1}^d \E[\pi]{ \left( \frac{1}{MK} \sum_{m, k}  \nabla f_m \left( w_t; x_t^{(m, k)} \right) - \nabla F(w_t) \right)_{j}^2 } \\
            & \overset{(i)}{\leq} \frac{4}{K\nu_{ps}} \sum_{j=1}^d \E[\pi]{ \left( \frac{1}{M} \sum_{m} \nabla f_m \left(w_t; x\right) - \nabla F(w_t) \right)_j^2 } \\
            & \leq \frac{4}{K\nu_{ps}} \E[\pi]{ \norm{ \frac{1}{M} \sum_{m} \nabla f_m(w_t; x) - \nabla F(w_t) }} \\
            & \overset{(ii)}{\leq} \frac{4\sigma^2}{MK\nu_{ps}}.
        \end{split}
    \end{equation}

    In $(i)$, we apply \cite[Theorem 3.7]{pauline_concentration_ineq} to each of $d$ functions:
    \begin{align*}
        h_i &: \Omega \rightarrow \mathbb{R}, \\
        h_i(x_t^k) &= \left( \frac{1}{M} \sum_{m} \nabla f_m (w_t; x_t^{(m, k)} \right)_i.
    \end{align*}

    In  $(ii)$, we use the fact that $\E[\pi_m]{\nabla f_m(w; x)} = \nabla F_m(w) $ together with the independence between clients' Markov processes. As such, the term $\E[\pi]{\norm{ 1/M \sum_{m} \nabla f_m(w; x) - \nabla F(w) }}$ can be seen as \textit{the variance of the sum of $M$ independent random variables}. Using Assumption~\ref{assump:bounded_gradient_noise}, it follows that:
    \begin{align*}
        \E[\pi]{ \norm{ \frac{1}{M} \sum_{m} \nabla f_m(w; x) - \nabla F(w) } } &= \frac{1}{M^2} \sum_{m} \E[\pi_m]{ \norm{ \nabla f_m(w; x) - \nabla F_m(w) } } \\
        & \leq M \sigma^2
    \end{align*}
\end{proof}

Hence, by plugging \eqref{eq:concentration_ineq_2} back into \eqref{eq:concentration_ineq_1}, we prove the lemma.

\subsection{Minibatch SGD}\label{appendix:minibatch}

In this section, we consider the problem class $\mathcal{F}_1 \left( L, \sigma, \tau \right)$. We begin with the following descent lemma.

\begin{lemma}\label{lem:minibatch_descent_lem}
    For the problem class $\mathcal{F}_1 (L, \sigma, \tau)$, with constant global step size $ \gamma \leq \frac{1}{L} $, the iterates of Minibatch SGD satisfy:
    \begin{align}
        \Delta_{t+1} \leq \Delta_t - \frac{\gamma}{2} \E{ \norm{ \nabla F(w_t) } } + \frac{2\gamma C_{\infty} \sigma^2}{\nu_{ps}MK}
    \end{align}
\end{lemma}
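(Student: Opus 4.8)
The plan is to derive this one-step descent bound by the standard route of combining $L$-smoothness with the variance estimate of Lemma~\ref{lem:concentration_ineq}, while taking care that the Markovian bias of $\bar{g}_t$ never needs to be isolated. First I would invoke Assumption~\ref{assump:smoothness} on the pair $(w_t, w_{t+1})$ and substitute the Minibatch SGD update $w_{t+1} - w_t = -\gamma \bar{g}_t$ (recall $\bar{g}_t = \frac{1}{MK}\sum_{m,k}\nabla f_m(w_t; x_t^{(m,k)})$), giving the pointwise bound
\[
    F(w_{t+1}) \leq F(w_t) - \gamma \langle \nabla F(w_t), \bar{g}_t\rangle + \frac{L\gamma^2}{2}\norm{\bar{g}_t}.
\]

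The crucial step is to rewrite the inner product with the polarization identity $\langle a, b\rangle = \tfrac12(\norm{a} + \norm{b} - \norm{a-b})$ rather than splitting it into a descent term plus a mean-zero noise term. This produces a $-\frac{\gamma}{2}\norm{\nabla F(w_t)}$ descent term, a residual $-\frac{\gamma}{2}(1 - L\gamma)\norm{\bar{g}_t}$ term, and an error term $+\frac{\gamma}{2}\norm{\bar{g}_t - \nabla F(w_t)}$. The step-size condition $\gamma \leq 1/L$ makes $1 - L\gamma \geq 0$, so the residual term is discarded. Taking the conditional expectation $\E[t]{\cdot}$ then leaves exactly the quantity $\E[t]{\norm{\bar{g}_t - \nabla F(w_t)}}$, which Lemma~\ref{lem:concentration_ineq} bounds by $4C_{\infty}\sigma^2/(\nu_{ps}MK)$; the factor $\tfrac{\gamma}{2}$ in front turns this into the advertised $2\gamma C_{\infty}\sigma^2/(\nu_{ps}MK)$. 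Finally I would take total expectations and subtract $F^*$, so that $\E{F(w_t)} - F^*$ becomes $\Delta_t$ and the claimed inequality follows.

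The main subtlety, rather than a genuine obstacle, is precisely the choice to avoid the i.i.d.\ decomposition $\E[t]{\bar{g}_t} = \nabla F(w_t)$, which fails here because the samples $x_t^{(m,k)}$ are not drawn from the stationary measures $\pi_m$. The polarization identity folds both the bias and the fluctuation of $\bar{g}_t$ into the single mean-squared-error term $\E[t]{\norm{\bar{g}_t - \nabla F(w_t)}}$, and Lemma~\ref{lem:concentration_ineq}---built on the change-of-measure argument and the pseudo-spectral-gap concentration bound---controls this quantity uniformly in $t$ with no dependence on the mixing rate. This is what allows the descent lemma to hold under the mild Assumption~\ref{assump:continuity} alone.
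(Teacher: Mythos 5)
Your proposal is correct and follows essentially the same route as the paper's proof: the paper performs the identical polarization expansion of $-\gamma\langle \nabla F(w_t), \bar{g}_t\rangle$ (merely rewriting the resulting $-\frac{\gamma}{2}\left\lVert \bar{g}_t\right\rVert^2$ term as $-\frac{1}{2\gamma}\left\lVert w_{t+1}-w_t\right\rVert^2$ before combining it with the smoothness term), discards the nonpositive residual using $\gamma \leq 1/L$, and invokes Lemma~\ref{lem:concentration_ineq} to obtain the $\frac{2\gamma C_{\infty}\sigma^2}{\nu_{ps}MK}$ term. Your observation about folding the Markovian bias into the single mean-squared-error term is exactly the mechanism at work in the paper as well.
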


\begin{proof}
    Since $F$ is $L$-smooth,%
    \begin{equation}\label{eq:minibatch_descent_lem_proof_1}
        \begin{split}
            F \left( w _{ t+1 } \right) &\leq F \left( w_t \right) + \langle \nabla F\left(
            w_t \right), w _{ t+1 } - w_t \rangle + \frac{L}{2} \left\lVert w _{ t+1 } -
            w_t \right\rVert^2 \\ 
            & = F\left( w_t \right) - \gamma \langle \nabla
            F\left( w_t \right), \bar{ g }_t \rangle +
            \frac{L}{2} \left\lVert w _{ t+1 } - w_t \right\rVert^2.
        \end{split}
    \end{equation}

    Note that the term $ - \gamma \langle \nabla F\left( w_t \right), \bar{ g }_t \rangle $
    can be expanded as     \begin{align}\label{eq:minibatch_descent_lem_proof_2}
            -\gamma \langle \nabla F\left( w_t \right), \bar{ g }_t \rangle &= -
            \frac{\gamma}{2} \left\lVert \nabla F\left( w_t \right) \right\rVert^2 +
            \frac{\gamma}{2} \left\lVert \bar{ g }_t - \nabla F\left( w_t \right) \right\rVert^2 - \frac{\gamma}{2} \left\lVert \bar{ g }_t \right\rVert^2 \\ 
            &= - \frac{\gamma}{2} \left\lVert \nabla F\left( w_t \right) \right\rVert^2 + \frac{\gamma}{2}
            \left\lVert
                \bar{ g }_t - \nabla F\left( w_t \right) \right\rVert^2 - \frac{1}{2 \gamma} \left\lVert w _{ t+1 }
            - w_t \right\rVert^2.
    \end{align}

    Plugging \eqref{eq:minibatch_descent_lem_proof_2} back into \eqref{eq:minibatch_descent_lem_proof_1},
    subtracting $ F^* $ from both sides, and taking the conditional expectation, we then have
    \begin{align}\label{eq:minibatch_descent_lem_proof_3}
        \E[t]{ F(w_{t+1}-F^* } &\leq \E[t]{ F(w_t) - F^* } - \frac{\gamma}{2} \E[t]{ \norm{ \nabla F(w_t) } } + \frac{\gamma}{2} \E[t]{ \norm{\bar{g}_t - \nabla F(w_t)} } \\ 
        & - \left( \frac{1}{2\gamma} - \frac{L}{2} \right) \E[t]{ \norm{ w_{t+1}-w_t } } .
    \end{align}

    Then, with the condition $ \gamma \leq \frac{1}{L} $, the last term in \eqref{eq:minibatch_descent_lem_proof_3} can be ignored. 
    The lemma follows by replacing the second term in \eqref{eq:minibatch_descent_lem_proof_3} using Lemma~\ref{lem:concentration_ineq}, then taking the full expectation.
\end{proof}

Now we proceed to the proof of Theorem~\ref{theorem:minibatch_sgd}.
\begin{theorem}[Theorem~\ref{theorem:minibatch_sgd} in the main paper]
\label{theorem:minibatch_sgd_proof}
    For the problem class $\mathcal{F}_1(L, \sigma, \tau)$, with global step size $\gamma \leq 1/L$, the iterates of Minibatch SGD satisfy:
    \begin{align}
        \E{ \norm{ \nabla F(\hat{w}_t) } } \leq \mathcal{O} \left( \frac{\Delta_0}{\gamma T} + \frac{C_{\infty} \sigma^2}{MK} \right)
    \end{align}
\end{theorem}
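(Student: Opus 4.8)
The plan is to reduce the theorem to a routine telescoping argument built directly on the descent lemma (Lemma~\ref{lem:minibatch_descent_lem}), which already absorbs all the Markovian structure through the concentration bound of Lemma~\ref{lem:concentration_ineq}. First I would start from the per-round inequality
\begin{align}
    \Delta_{t+1} \leq \Delta_t - \frac{\gamma}{2} \E{ \norm{ \nabla F(w_t) } } + \frac{2\gamma C_{\infty} \sigma^2}{\nu_{ps}MK},
\end{align}
valid under $\gamma \leq 1/L$, and rearrange it to isolate the gradient-norm term as
\begin{align}
    \frac{\gamma}{2} \E{ \norm{ \nabla F(w_t) } } \leq \Delta_t - \Delta_{t+1} + \frac{2\gamma C_{\infty} \sigma^2}{\nu_{ps}MK}.
\end{align}

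Next I would sum this over $t = 0, \dots, T-1$. The left-hand side becomes $\tfrac{\gamma}{2}\sum_{t=0}^{T-1}\E{\norm{\nabla F(w_t)}}$, the difference terms on the right telescope to $\Delta_0 - \Delta_T$, and the constant contributes $T$ copies. Since $F$ is bounded below by $F^*$, we have $\Delta_T = \E{F(w_T)} - F^* \geq 0$, so discarding $-\Delta_T$ only loosens the bound. Dividing through by $\gamma T/2$ then yields
\begin{align}
    \frac{1}{T}\sum_{t=0}^{T-1}\E{\norm{\nabla F(w_t)}} \leq \frac{2\Delta_0}{\gamma T} + \frac{4 C_{\infty}\sigma^2}{\nu_{ps}MK}.
\end{align}
Finally, because $\hat{w}_T$ is sampled uniformly from $w_0, \dots, w_{T-1}$, the tower property gives $\E{\norm{\nabla F(\hat{w}_T)}} = \tfrac{1}{T}\sum_{t=0}^{T-1}\E{\norm{\nabla F(w_t)}}$, so the averaged bound above is exactly the target quantity; absorbing the numerical constants $2$ and $4$ into $\mathcal{O}(\cdot)$ produces the claimed rate.

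I do not expect any real obstacle at this final stage: once the descent lemma is in hand, everything is the standard nonconvex-SGD telescope. The genuine difficulty — already handled upstream in Lemma~\ref{lem:concentration_ineq} — is controlling the mean-squared error of the minibatch gradient estimate under non-stationary, temporally dependent Markovian samples, and that is where I would concentrate effort were the lemma unavailable. The decisive moves there are to pass from the conditional law $P(x_{t-1}^{K-1},\cdot)$ to the stationary law $\pi$ via the bounded Radon–Nikodym derivative of Assumption~\ref{assump:continuity} (producing the factor $C_{\infty}$), to invoke a Markov-chain concentration inequality \citep[Theorem 3.7]{pauline_concentration_ineq} on the stationary product chain (yielding the $1/(\nu_{ps} K)$ decay), and to exploit client independence for the remaining $1/M$ variance reduction. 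Given these ingredients, the theorem follows immediately from the telescoping above.
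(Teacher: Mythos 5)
Your proposal is correct and is essentially identical to the paper's own proof: both rearrange Lemma~\ref{lem:minibatch_descent_lem}, telescope over $t=0,\dots,T-1$, drop $-\Delta_T \le 0$, and identify the resulting average with $\mathbb{E}[\lVert \nabla F(\hat{w}_T)\rVert^2]$ via the uniform sampling of the output, arriving at the same bound $\frac{2\Delta_0}{\gamma T} + \frac{4C_\infty\sigma^2}{\nu_{ps}MK}$. Your closing remarks about where the real difficulty lies (the change of measure and the concentration bound inside Lemma~\ref{lem:concentration_ineq}) also accurately reflect the structure of the paper's argument.
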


\begin{proof}
    The proof of Theorem~\ref{theorem:minibatch_sgd_proof} follows directly from Lemma~\ref{lem:minibatch_descent_lem} by
    rearranging the terms, then taking the average from $ t=0 $ to $ T-1 $
    \begin{equation}\label{eq:minibatch_proof_2}
        \frac{1}{T} \sum_{t=0}^{T-1} \mathbb{E} \left[ \left\lVert \nabla F\left( w_t \right) \right\rVert^2 \right] 
        \leq \frac{2 \Delta_0}{\gamma T} + \frac{4 C_{\infty} \sigma^2}{\nu_{ps}MK}
    \end{equation}

    Noticing that with $\hat{w}_T$ sampled uniformly from $w_0, \dots  w_{T-1}$,
    \begin{equation*}
    \frac{1}{T} \sum_{t=0}^{T-1} \mathbb{E} \left[ \left\lVert \nabla F\left( w_t \right) \right\rVert^2 \right] 
    = \mathbb{E} \left[ \left\lVert \nabla F\left( \hat{w}_T \right) \right\rVert^2 \right],
    \end{equation*}
    which concludes the proof.
\end{proof}

Now we are ready to prove Corollary~\ref{cor:minibatch_sgd}.

\begin{corollary}[Corollary~\ref{cor:minibatch_sgd} in the main paper]
\label{cor:minibatch_sgd_proof}
    Under the conditions of Theorem~\ref{theorem:minibatch_sgd_proof}, to achieve $\mathbb{E}[\|\nabla F(\hat{w}_T)\|^2] \leq \epsilon^2$, the required number of local steps and communication rounds of Minibatch SGD are:
    \begin{align}
        K = \mathcal{O} \left( \frac{C_{\infty} \sigma^2}{\nu_{ps}M\epsilon^2} \right), \quad
        T = \mathcal{O} \left( \frac{L\Delta_0}{\epsilon^2} \right)
    \end{align}
\end{corollary}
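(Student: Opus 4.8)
The plan is to derive the two complexity bounds directly from the convergence guarantee in Theorem~\ref{theorem:minibatch_sgd}, namely
\begin{align}
    \mathbb{E}\left[\left\lVert \nabla F(\hat{w}_T)\right\rVert^2\right]
    \leq \mathcal{O}\!\left(\frac{\Delta_0}{\gamma T} + \frac{C_\infty \sigma^2}{\nu_{ps} M K}\right),
\end{align}
by splitting the target error budget $\epsilon^2$ across the two terms and imposing that each is at most $\epsilon^2/2$ (any fixed constant fraction works and is absorbed into the $\mathcal{O}(\cdot)$). This is the standard ``balance the terms'' argument, and since the two terms are governed by disjoint parameters ($T$ and $\gamma$ in the first, $K$ in the second), they can be controlled independently.

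First I would handle the noise term. Requiring $\frac{C_\infty \sigma^2}{\nu_{ps} M K} \leq \frac{\epsilon^2}{2}$ and solving for $K$ yields $K \geq \frac{2 C_\infty \sigma^2}{\nu_{ps} M \epsilon^2}$, which is exactly $K = \mathcal{O}\!\left(\frac{C_\infty \sigma^2}{\nu_{ps} M \epsilon^2}\right)$. Note this term is independent of the step size and of $T$, so the choice of $K$ is unaffected by the subsequent argument; this reflects the discussion in the main text that the Markovian bias contributes a noise floor that cannot be removed by shrinking $\gamma$ and must instead be suppressed through the batch size $K$.

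Next I would handle the optimization term. Since Theorem~\ref{theorem:minibatch_sgd} permits any $\gamma \leq 1/L$ and the term $\frac{\Delta_0}{\gamma T}$ is decreasing in $\gamma$, I would select the largest admissible step size $\gamma = 1/L$, turning the first term into $\frac{L \Delta_0}{T}$. Imposing $\frac{L\Delta_0}{T} \leq \frac{\epsilon^2}{2}$ and solving for $T$ gives $T \geq \frac{2 L \Delta_0}{\epsilon^2} = \mathcal{O}\!\left(\frac{L \Delta_0}{\epsilon^2}\right)$. With both terms bounded by $\epsilon^2/2$, their sum is at most $\epsilon^2$, establishing the stated $K$ and $T$.

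There is essentially no genuine obstacle here: the result is a direct substitution from the theorem, and the only modeling choice worth stating explicitly is fixing $\gamma = 1/L$ so that the communication complexity $T$ is expressed in its sharpest form in terms of $L$ and $\Delta_0$. The one point to be careful about is merely notational consistency, ensuring the constants absorbed into $\mathcal{O}(\cdot)$ from the two separate ``$\leq \epsilon^2/2$'' requirements combine cleanly, which is immediate.
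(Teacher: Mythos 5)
Your proposal is correct and matches the paper's own proof: the paper likewise bounds each of the two terms in the convergence guarantee by a constant fraction of $\epsilon^2$, obtaining $K \geq \frac{8 C_{\infty}\sigma^2}{\nu_{ps}M\epsilon^2}$ and $T \geq \frac{4\Delta_0}{\gamma\epsilon^2} \geq \frac{4L\Delta_0}{\epsilon^2}$, with the step size taken at its maximal admissible value $\gamma = \Theta(1/L)$ exactly as you do. There is no substantive difference between the two arguments.
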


\begin{proof}
    By choosing
    \begin{equation*}
        \begin{split}
            K & \geq \frac{8 C_{\infty} \sigma^2}{\nu_{ps}M\epsilon^2}, \\
            T & \geq \frac{4 \Delta_0}{\gamma \epsilon^2} \geq \frac{4 L \Delta_0}{\epsilon^2},
        \end{split}
    \end{equation*}
    \eqref{eq:minibatch_proof_2} guarantees that $\mathbb{E} \left[ \left\lVert \nabla F\left( \hat{w}_T \right) \right\rVert^2 \right] \leq \epsilon^2$.
\end{proof}

\subsection{Local SGD}\label{appendix:local_sgd}

For the analysis of Local SGD, we define
\begin{equation*}
    \xi_t = \frac{1}{MK} \sum_{m, k}^{} \mathbb{E}_t \left[ \left\lVert w_t^{( m, k )} - w_t \right\rVert^2 \right],
\end{equation*}
where $\xi_t$ is the client drift observed in the local computation phase of the $t$-th communication round.

We recall that the analysis of Local SGD is for the problem class $\mathcal{F}_3 \left( L, \sigma, \theta, \delta, \tau \right)$. We begin with the following descent lemma.
\begin{lemma}\label{lem:local_descent_lem}
    If the local step size satisfies
    \begin{equation}\label{eq:local_step_size_1}
        \tilde{\eta} \leq \frac{1}{10 L \delta^2} < \frac{1}{L},
    \end{equation}
    then the iterates of Local SGD satisfy
    \begin{equation}\label{eq:local_descent_lem}
        \Delta _{ t+1 } \leq \Delta_t - \frac{45 \tilde{\eta}}{98} \mathbb{E} \left[ \left\lVert
        \nabla F \left( w_t \right) \right\rVert^2 \right] +
        \frac{4 \tilde{\eta} C_{\infty} \sigma^2}{\nu_{ps} MK} + \frac{40}{98} \frac{L \tilde{\eta}^2 \left(\theta^2 + \sigma^2\right)}{\delta^2}.
    \end{equation}
\end{lemma}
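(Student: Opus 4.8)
The plan is to run the standard descent argument for Local SGD while separating the Markovian stochastic error from the client drift, so that each is handled by a distinct mechanism. First I would apply the $L$-smoothness of $F$ (Assumption~\ref{assump:smoothness}, which follows from Assumption~\ref{assump:sample_wise_smooth}) together with the global update $w_{t+1}-w_t = -\tilde{\eta} g_t$:
\[
F(w_{t+1}) \leq F(w_t) - \tilde{\eta}\langle \nabla F(w_t), g_t\rangle + \frac{L\tilde{\eta}^2}{2}\norm{g_t}.
\]
Expanding the inner product through the identity $-\langle a,b\rangle = -\tfrac12\norm{a} - \tfrac12\norm{b} + \tfrac12\norm{a-b}$ with $a=\nabla F(w_t)$ and $b=g_t$, the resulting $-\tfrac{\tilde{\eta}}{2}\norm{g_t}$ merges with $\tfrac{L\tilde{\eta}^2}{2}\norm{g_t}$ into $-\tfrac{\tilde{\eta}}{2}(1-L\tilde{\eta})\norm{g_t}\leq 0$, which is discarded since $\tilde{\eta}\leq 1/L$. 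What remains is the descent term $-\tfrac{\tilde{\eta}}{2}\norm{\nabla F(w_t)}$ and an error term $\tfrac{\tilde{\eta}}{2}\norm{g_t-\nabla F(w_t)}$.

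Next I would control $\norm{g_t-\nabla F(w_t)}$ by inserting the anchor minibatch estimate $\bar{g}_t$ and splitting $g_t-\nabla F(w_t) = (g_t-\bar{g}_t) + (\bar{g}_t-\nabla F(w_t))$, then applying $\norm{a+b}\leq 2\norm{a}+2\norm{b}$. The second summand is exactly the Markovian error of the minibatch gradient evaluated at $w_t$, so Lemma~\ref{lem:concentration_ineq} bounds $\E[t]{\norm{\bar{g}_t-\nabla F(w_t)}}$ by $\tfrac{4C_\infty\sigma^2}{\nu_{ps}MK}$; carried through the prefactor $\tilde{\eta}$ this reproduces the claimed term $\tfrac{4\tilde{\eta}C_\infty\sigma^2}{\nu_{ps}MK}$. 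The first summand is the client drift: Jensen's inequality together with the sample-wise $L$-Lipschitzness of $\nabla f_m(\cdot;x)$ (Assumption~\ref{assump:sample_wise_smooth}) give $\E[t]{\norm{g_t-\bar{g}_t}}\leq L^2\xi_t$, leaving a term $\tilde{\eta}L^2\mathbb{E}[\xi_t]$ to be bounded.

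The crux is bounding the drift $\xi_t$. Here I would unroll the local recursion $w_t^{(m,k)}-w_t = -\eta\sum_{j=0}^{k-1}\nabla f_m(w_t^{(m,j)};x_t^{(m,j)})$, apply Cauchy--Schwarz across the at most $K$ steps, and bound each squared local gradient by decomposing $\nabla f_m(w_t^{(m,j)};x_t^{(m,j)})$ into a smoothness part (of norm at most $L\lVert w_t^{(m,j)}-w_t\rVert$), a noise part $\nabla f_m(w_t;x_t^{(m,j)})-\nabla F_m(w_t)$, and $\nabla F_m(w_t)$. The decisive point is that Assumption~\ref{assump:bounded_gradient_noise} is a uniform almost-sure bound, so the noise part is controlled by $\sigma^2$ \emph{pointwise}, with no conditional expectation taken over the statistically dependent within-round samples $x_t^{(m,0)},\dots,x_t^{(m,K-1)}$; this is precisely what avoids any mixing-time argument inside a round. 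Averaging over clients and invoking the BGD assumption (Assumption~\ref{assump:heterogeneity}) converts $\tfrac1M\sum_m\lVert\nabla F_m(w_t)\rVert^2$ into $\theta^2+\delta^2\norm{\nabla F(w_t)}$, while the smoothness part reproduces $\xi_t$ on the right-hand side. Solving the resulting self-bounding recursion under $\tilde{\eta}\leq \tfrac{1}{10L\delta^2}$ --- which both absorbs the $\xi_t$ feedback and, via $L\tilde{\eta}\leq \tfrac{1}{10\delta^2}$, turns factors of $L^2\tilde{\eta}^2$ into $L\tilde{\eta}/\delta^2$ --- I expect a bound of the shape $L^2\mathbb{E}[\xi_t]\leq \tfrac{2}{49}\mathbb{E}[\norm{\nabla F(w_t)}] + \tfrac{20}{49}\tfrac{L\tilde{\eta}(\theta^2+\sigma^2)}{\delta^2}$.

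Substituting this drift bound back, the $\tfrac{2}{49}$ feedback lowers the descent coefficient from $\tfrac12$ to $\tfrac12-\tfrac{2}{49}=\tfrac{45}{98}$, while the heterogeneity part, multiplied by the outer $\tilde{\eta}$, becomes $\tfrac{40}{98}\tfrac{L\tilde{\eta}^2(\theta^2+\sigma^2)}{\delta^2}$; taking full expectation then yields the stated inequality \eqref{eq:local_descent_lem}. The step I expect to be most delicate is tracking the exact numerical constants through the self-bounding recursion for $\xi_t$, since the final coefficients $45/98$ and $40/98$ are sensitive to the precise factors chosen in the norm splittings and in the application of the step-size condition.
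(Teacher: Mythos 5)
Your proposal is correct and follows essentially the same route as the paper's proof: the same smoothness-based descent step, the same split of $g_t-\nabla F(w_t)$ into the Markovian minibatch error (handled by Lemma~\ref{lem:concentration_ineq}) plus the drift $L^2\xi_t$, and the same self-bounding recursion for $\xi_t$ using the pointwise noise bound and the BGD assumption, with matching constants. Your anticipated intermediate bound $L^2\mathbb{E}[\xi_t]\leq \tfrac{2}{49}\mathbb{E}[\lVert\nabla F(w_t)\rVert^2]+\tfrac{20}{49}\tfrac{L\tilde{\eta}(\theta^2+\sigma^2)}{\delta^2}$ is exactly what the paper's calculation yields under $\tilde{\eta}\leq\tfrac{1}{10L\delta^2}$.
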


\begin{proof}
    We can adapt the proof of Lemma~\ref{lem:minibatch_descent_lem} to Local SGD but with
    the update rule $ w _{ t+1 } = w_t - \tilde{\eta} g_t $:
    \begin{equation}\label{eq:local_descent_lem_1}
        \begin{split}
            \E[t]{ F(w_{t+1}) - F^* } & \leq \E[t]{ F(w_t) - F^* } - \frac{\tilde{\eta}}{2} \mathbb{E}_t \left[ \left\lVert 
            \nabla F\left( w_t \right) \right\rVert^2 \right] + \frac{\tilde{\eta}}{2} \mathbb{E}_t
            \left[ \left\lVert g_t - \nabla F \left( w_t \right) \right\rVert^2 \right] \\ 
            & - \left( \frac{1}{2 \tilde{\eta}} - \frac{L}{2} \right) \E[t] { \norm{ w_{t+1}-w_t } } \\ 
            & \leq \E[t]{ F(w_t) - F^* } - \frac{\tilde{\eta}}{2} \mathbb{E}_t \left[ \left\lVert 
            \nabla F\left( w_t \right) \right\rVert^2 \right] + \tilde{\eta} \mathbb{E}_t \left[ \left\lVert 
            \bar{ g }_t - \nabla F\left( w_t \right) \right\rVert^2 \right] \\ 
            & + \tilde{\eta}
            \mathbb{E}_t \left[ \left\lVert g_t - \bar{ g }_t \right\rVert^2 \right].
        \end{split}
    \end{equation}
    Here in the last step, we use the triangle inequality, and the condition on the step size $\tilde{\eta} \leq \frac{1}{10L\delta^2} < \frac{1}{L}$.

    Using the sample-wise smoothness (Assumption~\ref{assump:sample_wise_smooth}) yields:
    \begin{equation}\label{eq:local_descent_lem_2}
        \begin{split}
            \mathbb{E}_t \left[ \left\lVert g_t - \bar{ g }_t \right\rVert^2 \right] &= \mathbb{E}_t \left[ \left\lVert \frac{1}{MK} 
            \sum_{m, k}^{} \nabla f_m \left( w_t^{ (m, k) }; x_t^{( m, k )} \right) - \nabla f_m\left( w_t; x_t^{( m, k )} \right) \right\rVert^2 \right] \\ 
            & \leq \frac{L^2}{MK} \sum_{m, k}^{} 
            \mathbb{E}_t \left[ \left\lVert w_t^{( m, k )} - w_t \right\rVert^2 \right] \\ 
            &= L^2
            \xi_t.
        \end{split}
    \end{equation}
    where Jensen's inequality is used in the last
    step. 

    The local drift $ \xi_t $ can then be bounded as follows:
    \begin{equation}\label{eq:local_drift}
        \begin{split}
            \xi_t
            =& \frac{1}{MK} \sum_{m, k}^{} \mathbb{E}_t \left[ \left\lVert w_t^{( m, k )} - w_t \right\rVert^2 \right]   \\
            =& \frac{1}{MK} \sum_{m, k}^{} \mathbb{E}_t \left[ \left\lVert \sum_{j=0}^{k-1}
            \eta \nabla f_m\left( w_t^{( m, j )}; x_t^{( m, j )} \right) \right\rVert^2 \right] \\ 
            \leq& \frac{1}{MK} \sum_{m, k}^{} \eta^2 k \sum_{j=0}^{k-1} \mathbb{E}_t \left[ \left\lVert 
            \nabla f_m\left( w_t^{( m, j )}; x_t^{( m, j )} \right) \right\rVert^2 \right]
            \\ 
            \leq& \frac{\tilde{\eta}^2}{MK^2} \sum_{m, k}^{} \sum_{j=0}^{k-1} \mathbb{E}_t \left[ \left\lVert \nabla f_m\left( w_t^{( m, j )}; x_t^{( m, j )} 
            \right) \right\rVert^2 \right] \\
            \leq& \frac{\tilde{\eta}^2}{MK^2} \sum_{m, k}^{} \sum_{j=0}^{K-1} \mathbb{E}_t \left[ \left\lVert \nabla f_m\left( w_t^{( m, j )}; x_t^{( m, j )} 
            \right) \right\rVert^2 \right] \\
            \leq& \frac{\tilde{\eta}^2}{MK} \sum_{m, j}^{} \mathbb{E}_t \left[ \left\lVert \nabla f_m\left( w_t^{( m, j )}; x_t^{( m, j )} 
            \right) \right\rVert^2 \right] \\
            \leq & \frac{2 \tilde{\eta}^2}{MK} \sum_{m, j}^{} \mathbb{E}_t \left[ \left\lVert
            \nabla f_m\left( w_t^{( m, j )}; x_t^{( m, j )} \right) - \nabla f_m\left(
            w_t; x_t^{( m, j )} \right) \right\rVert^2 \right] \\ 
            &+ \frac{2 \tilde{\eta}^2}{MK} \sum_{m, j}^{} 
            \mathbb{E}_t \left[ \left\lVert \nabla f_m\left( w_t; x_t^{( m, j )} \right) \right\rVert^2 \right] \\ 
            \overset{ (i) }{ \leq } & \quad 2L^2 \eta^2
            \xi_t 
            + \frac{4 \tilde{\eta}^2}{MK} \sum_{m, j}^{} \mathbb{E}_t \left[ \left\lVert 
            \nabla f_m\left( w_t; x_t^{( m, j )} \right) - \nabla F_m(w_t) \right\rVert^2 \right] \\ 
            &+ \frac{4 \tilde{\eta}^2}{M} \sum_{m}^{} \mathbb{E}_t \left[ \left\lVert \nabla
            F_m\left( w_t \right) \right\rVert^2 \right] \\
            \overset{ (ii) }{ \leq } &
            2L^2 \tilde{\eta}^2 \xi_t + 4 \tilde{\eta}^2 \sigma^2 + \frac{4 \tilde{\eta}^2}{M} \sum_{m} \E[t]{ \norm{ \nabla F_m(w_t) } } . 
        \end{split}
    \end{equation}

    In $ (i) $ we use the sample-wise smoothness (Assumption~\ref{assump:sample_wise_smooth}) and the triangle inequality, and in $ (ii) $ we use
    the uniform bound on the noise of the gradient estimator (Assumption~\ref{assump:bounded_gradient_noise}).

    Therefore, we have:
    \begin{equation}\label{eq:local_drift_2}
        \begin{split}
            \xi_t \leq \frac{4 \tilde{\eta}^2}{1 - 2L^2 \tilde{\eta}^2} \left( \sigma^2 + \frac{1}{M} \sum_{m} \E[t]{ \norm{ \nabla F_m(w_t) } }\right) .
        \end{split}
    \end{equation}

    Plugging \eqref{eq:local_descent_lem_2} and \eqref{eq:local_drift_2} back into
    \eqref{eq:local_descent_lem_1}, together with Lemma~\ref{lem:concentration_ineq} and the condition on the 
    step size then yields
    \begin{align}
        \E[t]{ F(w_{t+1}) - F^* } & \leq \E[t] { F(w_t) - F^* }
        - \frac{\tilde{\eta}}{2} \E[t]{ \norm{ \nabla F(w_t) } }
        + \frac{4\tilde{\eta}C_{\infty}\sigma^2}{\nu_{ps}MK} \\
        & + \frac{40 \tilde{\eta}^2L}{98 \delta^2} \left( \sigma^2 + \frac{1}{M} \sum_{m} \E[t]{ \norm{ \nabla F_m(w_t) } } \right).
    \end{align}

    By taking the full expectation and using the heterogeneity assumption~\ref{assump:heterogeneity}, we have that:
    \begin{align}\label{eq:local_descent_lem_3}
        \Delta_{t+1} &\leq \Delta_t - \frac{\tilde{\eta}}{2} \left( 1 - \frac{80}{98} \tilde{\eta}L\delta^2 \right) \E{ \norm{ \nabla F(w_t) } }
        + \frac{4 \tilde{\eta}C_{\infty}\sigma^2}{\nu_{ps}MK} \\
        & + \frac{40\tilde{\eta}^2L}{98} \left( \frac{\theta^2+\sigma^2}{\delta^2} \right)
    \end{align}

    Using the condition on the local step size on the second term in the RHS of \eqref{eq:local_descent_lem_3} proves the lemma.
\end{proof}

We now prove Theorem~\ref{theorem:local_sgd}.

\begin{theorem}[Theorem~\ref{theorem:local_sgd} in the main paper]
\label{theorem:local_sgd_proof}
    For the problem class $\mathcal{F}_3 ( L, \sigma, \theta, \delta, \tau )$, if the constant step size satisfies
    \begin{equation}
        \eta \leq \mathcal{O} \left( \frac{1}{LK \delta^2} \right) < \frac{1}{L},
    \end{equation}

    then the iterates of Local SGD satisfy
    \begin{equation}\label{eq:local_sgd_proof_1}
        \mathbb{E} \left[ \left\lVert \nabla F \left( \hat{w}_T \right) \right\rVert^2 \right] \leq \mathcal{O}
        \left( \frac{\Delta_0}{\eta K T} + \frac{C_{\infty} \sigma^2}{\nu_{ps} MK} +
        \frac{L \eta K \left( \theta^2 + \sigma^2 \right)}{\delta^2} \right).
    \end{equation}
\end{theorem}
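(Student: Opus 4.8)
The plan is to obtain Theorem~\ref{theorem:local_sgd_proof} from the one-step descent inequality of Lemma~\ref{lem:local_descent_lem} by the same telescoping argument used for Minibatch SGD in Theorem~\ref{theorem:minibatch_sgd_proof}. All of the genuinely Markovian difficulty has already been absorbed into that lemma: the gradient bias and variance are controlled through the change-of-measure bound of Lemma~\ref{lem:concentration_ineq}, while the client drift $\xi_t$ is handled via sample-wise smoothness (Assumption~\ref{assump:sample_wise_smooth}) and the heterogeneity bound (Assumption~\ref{assump:heterogeneity}). Writing $\tilde{\eta} = K\eta$ for the effective global step size, the lemma's requirement $\tilde{\eta} \leq 1/(10 L \delta^2)$ is exactly the hypothesis $\eta \leq \mathcal{O}(1/(LK\delta^2))$ after substitution, so the remaining argument is essentially mechanical.

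Concretely, I would first rearrange the descent inequality to put $\tfrac{45\tilde{\eta}}{98}\mathbb{E}[\lVert \nabla F(w_t)\rVert^2]$ on one side and the one-step decrease $\Delta_t - \Delta_{t+1}$ on the other. Summing over $t = 0, \dots, T-1$ telescopes the decrease to $\Delta_0 - \Delta_T \leq \Delta_0$, where I use $\Delta_T = \mathbb{E}[F(w_T)] - F^* \geq 0$ since $F$ is bounded below by $F^*$; the two constant noise and drift terms each pick up a factor $T$. Dividing through by $\tfrac{45\tilde{\eta}T}{98}$ gives
\begin{equation*}
\frac{1}{T}\sum_{t=0}^{T-1}\mathbb{E}\left[\left\lVert \nabla F(w_t)\right\rVert^2\right] \leq \frac{98\,\Delta_0}{45\,\tilde{\eta}\,T} + \frac{392\,C_{\infty}\sigma^2}{45\,\nu_{ps}MK} + \frac{8\,L\tilde{\eta}(\theta^2+\sigma^2)}{9\,\delta^2}.
\end{equation*}
Substituting $\tilde{\eta} = K\eta$ and invoking $\tfrac{1}{T}\sum_{t=0}^{T-1}\mathbb{E}[\lVert \nabla F(w_t)\rVert^2] = \mathbb{E}[\lVert \nabla F(\hat{w}_T)\rVert^2]$, which follows from the uniform sampling of $\hat{w}_T$, collapses the three terms into $\mathcal{O}(\Delta_0/(\eta K T))$, $\mathcal{O}(C_{\infty}\sigma^2/(\nu_{ps}MK))$, and $\mathcal{O}(L\eta K(\theta^2+\sigma^2)/\delta^2)$, which is the claimed bound.

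Since Lemma~\ref{lem:local_descent_lem} already carries the analytical weight, there is no real obstacle left in the theorem itself; the only points demanding care are bookkeeping ones, namely correctly carrying the effective step size $\tilde{\eta} = K\eta$ through the substitution so that the first term scales as $1/(\eta K T)$ and the third as $\eta K$, and confirming the nonnegativity of $\Delta_T$ that legitimizes discarding $-\Delta_T$ after telescoping. The substantive work, and where I would expect the genuine difficulty to lie were I proving this from scratch, is upstream in establishing the drift bound \eqref{eq:local_drift_2} and folding it, together with Lemma~\ref{lem:concentration_ineq}, into the descent recursion.
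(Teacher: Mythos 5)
Your proposal is correct and follows exactly the paper's own argument: rearrange the descent inequality of Lemma~\ref{lem:local_descent_lem}, telescope over $t=0,\dots,T-1$, divide by $\tfrac{45\tilde{\eta}T}{98}$, substitute $\tilde{\eta}=K\eta$, and identify the average with $\mathbb{E}[\lVert\nabla F(\hat{w}_T)\rVert^2]$; even your explicit constants $\tfrac{98}{45}$, $\tfrac{392}{45}$, and $\tfrac{8}{9}=\tfrac{40}{45}$ match the paper's. Your remark that the substantive work lies upstream in Lemma~\ref{lem:local_descent_lem} is also an accurate reading of where the paper places the analytical burden.
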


\begin{proof}
    By rearranging the terms in Lemma~\ref{lem:local_descent_lem} and taking the average
    from $ t=0 $ to $ T-1 $:
    \begin{equation}\label{eq:local_sgd_proof_2}
        \begin{split}
            \frac{1}{T} \sum_{t=0}^{T-1} \mathbb{E} \left[ \left\lVert 
            \nabla F\left( w_t \right) \right\rVert^2 \right] &\leq 
            \frac{98}{45} \frac{\Delta_0}{\tilde{\eta} T} + \frac{392}{45} \frac{C_{\infty} \sigma^2}{\nu_{ps} MK} +
            \frac{40}{45} \frac{L \tilde{\eta} \left( \theta^2 + \sigma^2 \right)}{\delta^2}.
        \end{split}
    \end{equation}
    
    We conclude the proof by replacing $\tilde{\eta} = \eta K$, and noticing that with $\hat{w}_T$ sampled uniformly from $w_0, \dots  w_{T-1}$, we have
    \begin{equation*}
    \frac{1}{T} \sum_{t=0}^{T-1} \mathbb{E} \left[ \left\lVert \nabla F\left( w_t \right) \right\rVert^2 \right] 
    = \mathbb{E} \left[ \left\lVert \nabla F\left( \hat{w}_T \right) \right\rVert^2 \right].
    \end{equation*}
\end{proof}

We now establish Corollary~\ref{cor:local_sgd}, which characterizes the communication and sample complexities of Local SGD.

\begin{corollary}[Corollary~\ref{cor:local_sgd} in the main paper]
\label{cor:local_sgd_proof}
    For the problem class $\mathcal{F}_3 ( L, \sigma, \theta, \delta, \tau )$, with step size
    \begin{equation}
        \eta \leq \mathcal{O} \left( \frac{\delta^2 \epsilon^2}{KL (\theta^2 + \sigma^2)} \wedge \frac{1}{KL\delta^2} \right),
    \end{equation}
    the required number of local steps and communication rounds for Local SGD to achieve $\mathbb{E}[\|\nabla F(\hat{w}_T)\|^2] \leq \epsilon^2$ are given by:
    \begin{align}
        K = \mathcal{O}\left(\frac{\tau \sigma^2}{M\epsilon^2}\right), \quad
        T = \mathcal{O}\left( \frac{L \Delta_0}{\epsilon^2} \left( \delta^2 \vee \frac{\theta^2 + \sigma^2}{\delta^2 \epsilon^2} \right) \right).
    \end{align}
\end{corollary}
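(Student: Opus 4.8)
The plan is to treat this as a routine balancing of the three-term bound from Theorem~\ref{theorem:local_sgd}. Writing $\tilde\eta = \eta K$ as in the appendix, that bound reads
\[
\mathbb{E}\!\left[\|\nabla F(\hat w_T)\|^2\right] \leq \mathcal{O}\!\left(\frac{\Delta_0}{\tilde\eta T} + \frac{C_\infty \sigma^2}{\nu_{ps} M K} + \frac{L\tilde\eta(\theta^2+\sigma^2)}{\delta^2}\right),
\]
and I would require each of the three summands to be at most a constant fraction of $\epsilon^2$ (say $\epsilon^2/3$), so that their sum is at most $\epsilon^2$. The summands depend on the free parameters in an almost decoupled way, which dictates the order in which I pin them down.

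First I would handle the noise term $\frac{C_\infty\sigma^2}{\nu_{ps}MK}$, which involves only $K$. Forcing it below $\epsilon^2$ immediately yields $K = \mathcal{O}\!\left(\frac{C_\infty\sigma^2}{\nu_{ps}M\epsilon^2}\right)$, matching the claimed batch size. This term is independent of both the step size and $T$, so $K$ can be fixed once and for all at this stage. Next I would control the client-drift term $\frac{L\tilde\eta(\theta^2+\sigma^2)}{\delta^2}$, which is increasing in $\tilde\eta$. Requiring it below $\epsilon^2$ gives $\tilde\eta \lesssim \frac{\delta^2\epsilon^2}{L(\theta^2+\sigma^2)}$, i.e.\ $\eta \lesssim \frac{\delta^2\epsilon^2}{KL(\theta^2+\sigma^2)}$, which is the first half of the hypothesized step-size condition. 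Together with the admissibility requirement $\tilde\eta \lesssim 1/(L\delta^2)$ inherited from Theorem~\ref{theorem:local_sgd} (the second half), this fixes the admissible window
\[
\tilde\eta \lesssim \frac{\delta^2\epsilon^2}{L(\theta^2+\sigma^2)} \wedge \frac{1}{L\delta^2}.
\]

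Finally I would choose $\tilde\eta$ as large as this window allows and substitute into the initialization term $\frac{\Delta_0}{\tilde\eta T}$. Forcing it below $\epsilon^2$ gives $T \gtrsim \frac{\Delta_0}{\tilde\eta\epsilon^2}$, and plugging in the maximal admissible $\tilde\eta$ produces $T = \mathcal{O}\!\left(\frac{L\Delta_0}{\epsilon^2}\big(\delta^2 \vee \frac{\theta^2+\sigma^2}{\delta^2\epsilon^2}\big)\right)$, as claimed. The one point requiring care — and essentially the only nontrivial step — is inverting $1/\tilde\eta$ when $\tilde\eta$ equals a minimum of two quantities, since $1/(a\wedge b) = (1/a)\vee(1/b)$; this is precisely what turns the $\wedge$ in the step-size bound into the $\vee$ in the expression for $T$, yielding the factor $\delta^2 \vee \frac{\theta^2+\sigma^2}{\delta^2\epsilon^2}$. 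Everything else is routine substitution into \eqref{eq:local_sgd_proof_2}.
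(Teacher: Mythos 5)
Your proposal is correct and matches the paper's proof essentially verbatim: the paper likewise forces each of the three summands in \eqref{eq:local_sgd_proof_2} below a constant fraction of $\epsilon^2$, reading off $K$ from the noise term, the step-size window from the drift term together with the admissibility constraint $\tilde\eta \leq \frac{1}{10L\delta^2}$, and $T$ from the initialization term with the largest admissible $\tilde\eta$, which is exactly where the $\wedge$ inverts into the $\vee$. No differences beyond explicit constants.
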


\begin{proof}
    By choosing
    \begin{equation*}
        \begin{split}
            K & \geq \frac{392}{15} \frac{C_{\infty} \sigma^2}{\nu_{ps} M\epsilon^2}, \\
            \eta &\leq \left(\frac{3}{8} \frac{\delta^2 \epsilon^2}{KL\left(\theta^2 + \sigma^2\right)} \wedge \frac{1}{10KL\delta^2}\right), \\
            T & \geq \frac{98}{15} \frac{\Delta_0}{\eta K \epsilon^2} \geq \frac{98}{15} \frac{L \Delta_0}{\epsilon^2} \left( 10 \delta^2 \vee \frac{8}{3} \frac{\theta^2+\sigma^2}{\delta^2\epsilon^2} \right), 
        \end{split}
    \end{equation*}
    \eqref{eq:local_sgd_proof_2} guarantees that $\mathbb{E} \left[ \left\lVert \nabla F\left( \hat{w}_T \right) \right\rVert^2 \right] \leq \epsilon^2$.

\end{proof}

\subsection{Local SGD with Momentum}\label{appendix:momentum}

All the proofs in this section are for the problem class $\mathcal{F}_2\left( L, \sigma, \tau \right)$. Recall that the update rule of Local SGD-M can be written as: $w_{t+1} = w_t - \gamma v_{t+1}$. For the theoretical analysis of Local SGD with momentum, we introduce the following
notation:
\begin{equation}\label{eq:local_sgd_m_notation}
    \begin{split}
        \Xi_t &= \mathbb{E} \left[ \left\lVert \nabla F\left( w_t \right) - v _{ t+1 } \right\rVert^2 \right], \\ 
        G_0 &= \frac{1}{M} \sum_{m}^{} \left\lVert \nabla F\left( w_0 \right) \right\rVert^2.
    \end{split}
\end{equation}

We begin with the following descent lemma.
\begin{lemma}\label{lem:momentum_descent_lem}
    With global step size
    \begin{equation}
        \gamma \leq \frac{1}{L},
    \end{equation}
    we have
    \begin{equation}
        \Delta _{ t+1 } \leq \Delta_t - \frac{\gamma}{2} \mathbb{E} \left[ \left\lVert
        \nabla F\left( w_t \right) \right\rVert^2 \right] + \frac{\gamma}{2} \Xi_t.
    \end{equation}
\end{lemma}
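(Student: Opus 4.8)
The plan is to establish the descent lemma for Local SGD-M (Lemma~\ref{lem:momentum_descent_lem}) by exploiting the $L$-smoothness of the global objective $F$ (Assumption~\ref{assump:smoothness}) together with the server update rule $w_{t+1} = w_t - \gamma v_{t+1}$. The structure closely mirrors the descent lemmas already proved for Minibatch SGD (Lemma~\ref{lem:minibatch_descent_lem}) and Local SGD (Lemma~\ref{lem:local_descent_lem}), so I would follow the same template, but here the ``gradient estimate'' playing the role of $\bar g_t$ or $g_t$ is the aggregated momentum direction $v_{t+1}$.

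**First I would** invoke $L$-smoothness to write
\begin{equation}
    F(w_{t+1}) \leq F(w_t) + \langle \nabla F(w_t), w_{t+1} - w_t \rangle + \frac{L}{2} \left\lVert w_{t+1} - w_t \right\rVert^2,
\end{equation}
and substitute $w_{t+1} - w_t = -\gamma v_{t+1}$ to obtain
\begin{equation}
    F(w_{t+1}) \leq F(w_t) - \gamma \langle \nabla F(w_t), v_{t+1} \rangle + \frac{L \gamma^2}{2} \left\lVert v_{t+1} \right\rVert^2.
\end{equation}

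**Next I would** apply the standard polarization identity to the inner product term, exactly as in~\eqref{eq:minibatch_descent_lem_proof_2}:
\begin{equation}
    -\gamma \langle \nabla F(w_t), v_{t+1} \rangle = -\frac{\gamma}{2} \left\lVert \nabla F(w_t) \right\rVert^2 + \frac{\gamma}{2} \left\lVert v_{t+1} - \nabla F(w_t) \right\rVert^2 - \frac{\gamma}{2} \left\lVert v_{t+1} \right\rVert^2.
\end{equation}
Substituting this back, subtracting $F^*$, and recognizing $\left\lVert v_{t+1} - \nabla F(w_t) \right\rVert^2$ as the integrand of $\Xi_t$ once expectation is taken, the cross term collapses into $\tfrac{\gamma}{2}\Xi_t$. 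The two terms carrying $\left\lVert v_{t+1} \right\rVert^2$ combine with coefficient $-\tfrac{\gamma}{2} + \tfrac{L\gamma^2}{2} = -\tfrac{\gamma}{2}(1 - \gamma L)$, which is non-positive precisely under the hypothesis $\gamma \leq 1/L$, so this term can be dropped. Taking the full expectation then yields the claimed bound
\begin{equation}
    \Delta_{t+1} \leq \Delta_t - \frac{\gamma}{2} \mathbb{E}\left[ \left\lVert \nabla F(w_t) \right\rVert^2 \right] + \frac{\gamma}{2} \Xi_t.
\end{equation}

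**The main obstacle** here is essentially bookkeeping rather than a conceptual hurdle: unlike the Minibatch and Local SGD cases, this lemma does \emph{not} attempt to control or expand the error term $\Xi_t$, but simply packages it into the recursion. The real difficulty of the momentum analysis is therefore deferred to a separate lemma that must establish a \emph{recursive} bound on $\Xi_t = \mathbb{E}[\lVert \nabla F(w_t) - v_{t+1} \rVert^2]$ by unrolling the momentum recursion $v_t^{(m,k)} = \beta \nabla f_m(w_t^{(m,k)}; x_t^{(m,k)}) + (1-\beta) v_t$ and the aggregation step, using sample-wise smoothness (Assumption~\ref{assump:sample_wise_smooth}), the Markovian gradient-noise bound from Lemma~\ref{lem:concentration_ineq}, and the step-size constraints in~\eqref{eq:momentum_step_size_conditions}. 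I expect that controlling the contraction of $\Xi_t$ across rounds---showing the momentum averaging damps the accumulated bias and drift without invoking any heterogeneity assumption---is where the subtlety lies, but this descent lemma itself requires only smoothness and the elementary $\gamma \leq 1/L$ condition.
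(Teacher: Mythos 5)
Your proof is correct and follows exactly the paper's route: the paper simply says ``similarly as the proof of Lemma~\ref{lem:minibatch_descent_lem}, with the update rule $w_{t+1} = w_t - \gamma v_{t+1}$'' and you have spelled out precisely that argument ($L$-smoothness, polarization of the inner product, dropping the $\lVert v_{t+1}\rVert^2$ term under $\gamma \leq 1/L$, then taking full expectation). Your closing remarks about where the real difficulty is deferred also match the paper's structure, where the recursive control of $\Xi_t$ is handled in Lemmas~\ref{lem:gradient_error_recursive_bound}--\ref{lem:gradient_error_bound}.
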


\begin{proof}
    Similarly as the proof of Lemma~\ref{lem:minibatch_descent_lem}, with the
    update rule of Local SGD-M: $ w _{ t+1 } = w_t - \gamma v _{ t+1 } $, we have:
    \begin{equation}\label{eq:local_sgd_m_descent_lem_proof}
        \Delta _{ t+1 } \leq \Delta _t - \frac{\gamma}{2} \mathbb{E} \left[ \left\lVert 
        \nabla F \left( w_t \right) \right\rVert^2 \right] + \frac{\gamma}{2} \Xi_t +
        \left( \frac{L}{2} - \frac{\gamma}{2} \right) s_t.
    \end{equation}

    Applying the condition on $ \gamma $ proves the lemma.
\end{proof}

We have the following recursive bound on $ \Xi_t $:

\begin{lemma}\label{lem:gradient_error_recursive_bound}
    With condition:
    \begin{equation}
        \gamma \leq \frac{1}{\sqrt{60}} \frac{\beta}{L}.
    \end{equation}

    We have:
    \begin{equation}\label{eq:gradient_error_recursive_bound}
        \Xi_t \leq \left( 1 - \frac{14}{15} \beta \right) \Xi _{ t-1 } + \frac{\beta}{15}
        \mathbb{E} \left[ \left\lVert \nabla F\left( w _{ t-1 } \right) \right\rVert^2 \right] + 6 \beta \mathbb{E} \left[ \left\lVert \bar{ g }_t - \nabla F\left( w_t \right) \right\rVert^2 \right] 
        + 6 \beta L^2 \xi_t.
    \end{equation}
\end{lemma}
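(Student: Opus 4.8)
The plan is to first collapse the aggregation step of Algorithm~\ref{alg:momentum} into the classical one-step momentum recursion, and then turn the definition of $\Xi_t$ into a contraction via a momentum-style Young's inequality. Observe that $w_t - w_t^{(m,K)} = \eta\sum_{k\in[K]_0} v_t^{(m,k)}$, so the server aggregation reduces to $v_{t+1} = \frac{1}{MK}\sum_{m,k} v_t^{(m,k)} = \beta g_t + (1-\beta) v_t$. Using this and adding and subtracting $\nabla F(w_{t-1})$ inside $\nabla F(w_t)-v_{t+1}$ yields
\begin{equation}
\nabla F(w_t) - v_{t+1} = (1-\beta)\bigl(\nabla F(w_{t-1}) - v_t\bigr) + (1-\beta)\bigl(\nabla F(w_t) - \nabla F(w_{t-1})\bigr) + \beta\bigl(\nabla F(w_t) - g_t\bigr),
\end{equation}
which isolates the quantity $A \coloneqq \nabla F(w_{t-1}) - v_t$ (whose squared expectation is $\Xi_{t-1}$), a lag term $B \coloneqq \nabla F(w_t) - \nabla F(w_{t-1})$, and the gradient-estimate error $C \coloneqq \nabla F(w_t) - g_t$.

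Next I would apply $\|(1-\beta)A + y\|^2 \le (1-\beta)\|A\|^2 + \tfrac{1}{\beta}\|y\|^2$ (which follows from $2\langle A,y\rangle \le \beta\|A\|^2 + \beta^{-1}\|y\|^2$) with $y = (1-\beta)B + \beta C$. This immediately produces the leading $(1-\beta)\Xi_{t-1}$, and it remains to show that $\tfrac{1}{\beta}\mathbb{E}\|y\|^2$ contributes only a further $\tfrac{\beta}{15}\Xi_{t-1}$ plus the advertised $\tfrac{\beta}{15}\mathbb{E}\|\nabla F(w_{t-1})\|^2$, $6\beta\,\mathbb{E}\|\bar g_t - \nabla F(w_t)\|^2$, and $6\beta L^2\xi_t$ terms. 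I would split $\|y\|^2 \le 2\|B\|^2 + 2\beta^2\|C\|^2$. For the lag term, $L$-smoothness (Assumption~\ref{assump:smoothness}) and $w_t - w_{t-1} = -\gamma v_t$ give $\|B\| \le L\gamma\|v_t\|$, and $\|v_t\|^2 \le 2\|A\|^2 + 2\|\nabla F(w_{t-1})\|^2$; substituting the step-size budget $\gamma \le \tfrac{1}{\sqrt{60}}\tfrac{\beta}{L}$ turns $\tfrac{2}{\beta}\|B\|^2$ into at most $\tfrac{\beta}{15}\|A\|^2 + \tfrac{\beta}{15}\|\nabla F(w_{t-1})\|^2$. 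For the estimate error I would decompose $C = (\nabla F(w_t) - \bar g_t) + (\bar g_t - g_t)$ and invoke $\mathbb{E}_t\|g_t - \bar g_t\|^2 \le L^2\xi_t$ from \eqref{eq:local_descent_lem_2}, which accounts for the $\xi_t$ and $\bar g_t - \nabla F(w_t)$ terms (producing constants $4\beta$ and $4\beta L^2$ that are then loosened to $6\beta$ and $6\beta L^2$).

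Collecting everything and taking full expectations, the coefficient on $\Xi_{t-1}$ becomes $1 - \beta + \tfrac{\beta}{15} = 1 - \tfrac{14}{15}\beta$, as claimed. The main obstacle is the bookkeeping around the lag term $B$: unlike the i.i.d.\ analysis, I cannot cancel any cross term through unbiasedness of the gradient estimate (the change-of-measure approach keeps $\mathbb{E}\|\bar g_t - \nabla F(w_t)\|^2$ as an explicit term rather than zero), so every cross term must be absorbed by Young's inequality. Consequently the step-size condition $\gamma \le \tfrac{1}{\sqrt{60}}\tfrac{\beta}{L}$ must be tight enough that re-expanding $\|v_t\|^2$ back into $\Xi_{t-1}$ and $\|\nabla F(w_{t-1})\|^2$ does not erode the $1-\beta$ contraction beyond the target $1 - \tfrac{14}{15}\beta$; pinning the $\Xi_{t-1}$ feedback from $\|v_t\|^2$ at level $\tfrac{\beta}{15}$ is precisely what forces the constant $60$.
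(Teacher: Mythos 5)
Your proposal is correct and follows essentially the same route as the paper's proof: the same decomposition of $\nabla F(w_t)-v_{t+1}$ into the $\Xi_{t-1}$ part, the lag term $\nabla F(w_t)-\nabla F(w_{t-1})$, and the estimate error, with the lag term absorbed via $L$-smoothness, the update rule, $\|v_t\|^2\le 2\Xi_{t-1}+2\|\nabla F(w_{t-1})\|^2$, and the step-size budget, and the estimate error split into $\bar g_t-\nabla F(w_t)$ and $g_t-\bar g_t\le L\,(\text{drift})$ via sample-wise smoothness. The only (immaterial) difference is that you use the one-shot contraction $\|(1-\beta)A+y\|^2\le(1-\beta)\|A\|^2+\beta^{-1}\|y\|^2$ where the paper applies Young's inequality twice with parameters $(1+\beta/2,\,1+2/\beta)$; both yield the same constants in the final bound.
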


\begin{proof}
    We have the following decomposition:
    \begin{equation*}
        \begin{split}
            \nabla F \left( w_t \right) - v _{ t+1 } &= \left( 1 - \beta \right) \left(
            \nabla F \left( w_t \right) - v _{ t } \right) - \beta \left( \frac{1}{MK} \sum_{m,
        k}^{} \nabla f_m \left( w_t; x_t^{( m, k )} \right) - \nabla F\left( w_t \right) \right)
        \\ 
        & - \frac{\beta}{MK} \sum_{m, k}^{} \left( \nabla f_m\left( w_t^{( m, k )}; x_t^{( m, k )} \right) - \nabla f_m \left( w_t; x_t^{( m, k )} \right)\right) \\
        &= \left( 1 - \beta \right) \left(
            \nabla F \left( w_t \right) - v _{ t } \right) - \beta \left( \bar{g}_t - \nabla F(w_t) \right) - \beta \left( g_t - \bar{g}_t \right).
        \end{split}
    \end{equation*}

    Hence, by using Young's inequality:
    \begin{equation*}
        \left( a+b \right)^2 \leq \left( 1 + \frac{\beta}{2} \right) a^2 + \left( 1 +
        \frac{2}{\beta} \right)b^2.
    \end{equation*}

    We have:

    \begin{equation}\label{eq:gradient_error_recursive_bound_2}
        \begin{split}
            \Xi_t \leq & \left( 1 + \frac{\beta}{2} \right) \left( 1 - \beta \right)^2 \mathbb{E} \left[ \left\lVert 
            \nabla F \left( w_t \right) - v_t \right\rVert^2 \right] \\ 
                  &+ \left( 1 + \frac{2}{\beta} \right) \beta^2 \mathbb{E} \left[
                      \left\lVert \left( g_t - \bar{g}_t \right) - \left( \bar{ g }_t - \nabla F\left( w_t \right) \right) \right\rVert^2 \right] \\ 
                    \leq & \left( 1 + \frac{\beta}{2} \right) \left( 1 - \beta \right)^2 \mathbb{E} \left[ \left\lVert \nabla F \left( w_t \right) - v_t \right\rVert^2 \right] + 
                  2 \left( 1 + \frac{2}{\beta} \right) \beta^2 \mathbb{E} \left[
                    \left\lVert \bar{ g }_t - \nabla F\left( w_t \right) \right\rVert^2 \right] \\ 
                  & + 2 \left( 1 + \frac{2}{\beta} \right) \beta^2 L^2 \xi_t.
        \end{split}
    \end{equation}

    Here, in the last step, we use the triangle inequality and the sample-wise smoothness
    (Assumption~\ref{assump:sample_wise_smooth}).

    Now, since we have:
    \begin{equation}
        \begin{split}
            \nabla F(w_t) - v_t &= \nabla F(w_{t-1}) - v_t + \left( \nabla F(w_t) - \nabla F(w_{t-1}) \right) \\
            &= \Xi_{t-1} + \left( \nabla F(w_t) - \nabla F(w_{t-1}) \right).
        \end{split}
    \end{equation}

    Using again Young's inequality, we have:
    \begin{equation}\label{eq:gradient_error_recursive_bound_3}
        \begin{split}
            \Xi_t \leq & (1 + \frac{\beta}{2}) \left( 1 - \beta \right)^2 \left[ \left( 1 +
            \frac{\beta}{2} \right) \Xi _{ t-1 } + \left( 1 + \frac{2}{\beta} \right) \mathbb{E} \left[ \left\lVert \nabla F\left( w_t \right) - \nabla F\left( w _{ t-1 } \right) \right\rVert^2 \right] \right] \\ 
                       & + 6 \beta \mathbb{E} \left[ \left\lVert \bar{ g }_t - \nabla F \left( w_t \right) \right\rVert^2 \right] + 6 \beta L^2 \xi_t \\ 
                    \leq & \left( 1 - \beta \right) \Xi _{ t-1 } + \frac{2 L^2}{\beta} \mathbb{E} \left[ \left\lVert w_t - w _{ t-1 } \right\rVert^2 \right] 
                    + 6 \beta \mathbb{E} \left[ \left\lVert \bar{ g }_t - \nabla F\left(
                      w_t \right) \right\rVert^2 \right] + 6\beta L^2 \xi_t.
        \end{split}
    \end{equation}

    Here since $\beta \leq 1$, in the first inequality we use $ 1 + \frac{2}{\beta} \leq \frac{3}{\beta} $; in the second inequality we use $ \left( 1+\frac{\beta}{2} \right)^2 \left( 1 - \beta \right)^2 \leq \left( 1 - \beta \right) $ and 
    $ \left( 1 + \frac{\beta}{2} \right) \left( 1 + \frac{2}{\beta} \right)\left( 1 - \beta \right) \leq \frac{2}{\beta} $ together with the smoothness of $ F $.

    Furthermore, by the update rule, we have:
    \begin{equation}\label{eq:gradient_error_recursive_bound_4}
        \begin{split}
            \mathbb{E} \left[ \left\lVert w _t - w _{ t-1 } \right\rVert^2 \right] = & \quad
            \gamma^2 \mathbb{E} \left[ \left\lVert v _{ t-1 } \right\rVert^2 \right] \\ 
            \leq & \quad 2 \gamma^2 \Xi _{ t-1 } + 2 \gamma^2 \mathbb{E} \left[ \left\lVert
            \nabla F \left( w _{ t-1 } \right) \right\rVert^2 \right].
        \end{split}
    \end{equation}

    Plugging \eqref{eq:gradient_error_recursive_bound_4} back into \eqref{eq:gradient_error_recursive_bound_3} we have:
    \begin{equation}\label{eq:gradient_error_recursive_bound_5}
        \begin{split}
            \Xi_t 
            \leq & \left( 1 - \beta + \frac{4 \gamma^2 L^2}{\beta} \right) \Xi _{ t-1 } + \frac{4 \gamma^2 L^2}{\beta} \mathbb{E} \left[ \left\lVert
            \nabla F \left( w _{ t-1 } \right) \right\rVert^2 \right]
            + 6 \beta \mathbb{E} \left[ \left\lVert \bar{ g }_t - \nabla F\left(
              w_t \right) \right\rVert^2 \right] + 6\beta L^2 \xi_t.
        \end{split}
    \end{equation}

    We conclude the proof by using the condition on the global step size $\gamma$.
\end{proof}

Next, we continue to bound the drift $ \xi_t $. 

\begin{lemma}\label{lem:drift_bound}
    With the following conditions:
    \begin{equation}\label{eq:drift_step_size_cond}
        \eta K L \leq \frac{1}{2 \beta}.
    \end{equation}

    We have:
    \begin{equation}\label{eq:drift_bound}
        \begin{split}
            \xi_t \leq & \quad 16 \left( \eta \beta K  \right)^2 \sigma^2 + 32 \left[ \eta \left(
            1 - \beta \right) K \right]^2 \Xi _{ t-1 } + 32 \left[ \eta \left( 1 - \beta \right) K \right]^2 
            \mathbb{E} \left[ \left\lVert \nabla F\left( w _{ t-1 } \right) \right\rVert^2
            \right] \\ 
            &+ 16 \left( \eta \beta K \right)^2 \frac{1}{M} \sum_{m}^{} \mathbb{E} \left[
            \left\lVert \nabla F_m\left( w_t \right) \right\rVert^2 \right].
        \end{split}
    \end{equation}
\end{lemma}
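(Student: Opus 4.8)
The plan is to unroll the inner momentum updates, separate the per-step stochastic gradients from the shared momentum vector $v_t$ inherited from the previous round, recognize that sample-wise smoothness reproduces $\xi_t$ on the right-hand side, and absorb that self-reference using the step-size condition \eqref{eq:drift_step_size_cond}.

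First I would unroll the local trajectory. Since $w_t^{(m,0)} = w_t$ and $w_t^{(m,j+1)} = w_t^{(m,j)} - \eta v_t^{(m,j)}$ with $v_t^{(m,j)} = \beta \nabla f_m(w_t^{(m,j)}; x_t^{(m,j)}) + (1-\beta)v_t$, I obtain
\[
    w_t^{(m,k)} - w_t = -\eta\beta\sum_{j=0}^{k-1}\nabla f_m\!\left(w_t^{(m,j)}; x_t^{(m,j)}\right) - \eta(1-\beta)\,k\,v_t .
\]
Applying Young's inequality to split the gradient part from the momentum part, then Cauchy--Schwarz (Jensen) on the gradient sum together with $k \leq K$, bounds $\E[t]{\norm{w_t^{(m,k)} - w_t}}$ by a multiple of $(\eta\beta K)^2 \sum_{j} \E[t]{\norm{\nabla f_m(w_t^{(m,j)}; x_t^{(m,j)})}}$ plus a multiple of $[\eta(1-\beta)K]^2\,\E[t]{\norm{v_t}}$.

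Next I would control each piece. For the gradient term, a triangle inequality decomposes $\nabla f_m(w_t^{(m,j)}; x_t^{(m,j)})$ into (i) $\nabla f_m(w_t^{(m,j)}; x_t^{(m,j)}) - \nabla f_m(w_t; x_t^{(m,j)})$, bounded by $L^2\norm{w_t^{(m,j)} - w_t}$ via Assumption~\ref{assump:sample_wise_smooth}; (ii) $\nabla f_m(w_t; x_t^{(m,j)}) - \nabla F_m(w_t)$, bounded by $\sigma^2$ via Assumption~\ref{assump:bounded_gradient_noise}; and (iii) $\nabla F_m(w_t)$. After averaging over $m,k$, piece (i) reproduces $\xi_t$, piece (ii) yields the $\sigma^2$ term, and piece (iii) yields the $\frac{1}{M}\sum_m \E{\norm{\nabla F_m(w_t)}}$ term. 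For the momentum term, since $v_t$ does not depend on $(m,k)$, I use $\norm{v_t} \leq 2\norm{\nabla F(w_{t-1}) - v_t} + 2\norm{\nabla F(w_{t-1})}$, whose expectation is $2\Xi_{t-1} + 2\,\E{\norm{\nabla F(w_{t-1})}}$ by the definition of $\Xi_{t-1}$ in \eqref{eq:local_sgd_m_notation}, producing the remaining two terms.

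Collecting everything yields an inequality of the form $\xi_t \leq c\,(\eta\beta K L)^2\,\xi_t + R_t$, where $R_t$ gathers the $\sigma^2$, $\frac{1}{M}\sum_m \norm{\nabla F_m(w_t)}$, $\Xi_{t-1}$, and $\norm{\nabla F(w_{t-1})}$ terms. The condition $\eta K L \leq \tfrac{1}{2\beta}$ forces $(\eta\beta K L)^2 \leq \tfrac14$, keeping the self-referential coefficient strictly below one, so that $\xi_t$ can be moved to the left and solved for; this absorption is what doubles the residual constants and delivers \eqref{eq:drift_bound}. The \emph{main obstacle} I anticipate is precisely this self-reference: sample-wise smoothness is the only tool for bounding the realized gradients $\nabla f_m(w_t^{(m,j)};\cdot)$ along the drifting local path, and it necessarily reintroduces $\xi_t$ on the right-hand side, so the argument closes only once the step-size budget is spent to damp the feedback. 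A secondary care point is bookkeeping $v_t$ as a round-$(t-1)$ quantity, so that it couples to $\Xi_{t-1}$ and $\nabla F(w_{t-1})$ rather than to current-round objects.
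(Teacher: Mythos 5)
Your decomposition identifies all the right ingredients (unroll the local updates, split the $\beta$-weighted stochastic gradients from the $(1-\beta)$-weighted shared $v_t$, use sample-wise smoothness plus Assumption~\ref{assump:bounded_gradient_noise} on the gradient part, and couple $v_t$ to $\Xi_{t-1}$ and $\nabla F(w_{t-1})$ via the paper's indexing $\Xi_{t-1}=\mathbb{E}[\|\nabla F(w_{t-1})-v_t\|^2]$). The gap is in the final absorption step. Tracking your constants: Young's inequality to separate the two parts costs a factor $2$ on the gradient block, Cauchy--Schwarz gives $\|\sum_{j<k}\nabla f_m\|^2\le K\sum_{j<K}\|\nabla f_m\|^2$, and the further split of $\nabla f_m(w_t^{(m,j)};x)$ costs at least another factor $2$ on the smoothness piece. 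After averaging over $m,k$, the self-referential coefficient is therefore at least $4(\eta\beta KL)^2$, not $(\eta\beta KL)^2$. Under the stated hypothesis $\eta KL\le \tfrac{1}{2\beta}$ this is only guaranteed to be $\le 1$ --- not strictly below one, and certainly not $\le\tfrac12$. At the boundary the inequality $\xi_t\le \xi_t + R_t$ is vacuous, and just below it the factor $(1-4(\eta\beta KL)^2)^{-1}$ is unbounded, so you cannot recover the explicit constants $16$ and $32$ in \eqref{eq:drift_bound}. Your plan would go through if the step-size condition were strengthened by a constant factor, but not as stated.

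The paper sidesteps this entirely by never forming a global ``$\xi_t$ on both sides'' inequality. It writes a \emph{per-step} recursion $\mathbb{E}[\|w_t^{(m,k)}-w_t\|^2]\le (1+\tfrac{1}{K-1}+4\eta^2\beta^2L^2K)\,\mathbb{E}[\|w_t^{(m,k-1)}-w_t\|^2]+4K\mathcal{A}$, keeping the smoothness feedback attached to the previous local iterate rather than averaging it into $\xi_t$. The condition $\eta KL\le\tfrac{1}{2\beta}$ is exactly what makes the feedback contribution $4\eta^2\beta^2L^2K\le\tfrac1K$, so the growth factor is $1+O(1/K)$ and compounding over $K$ steps costs only $e^2$; the geometric sum is bounded by $4K$, which yields the constants in the lemma. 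The lesson is that the feedback must be controlled multiplicatively per step (where it is $O(1/K)$) rather than additively in aggregate (where it is $\Theta(1)$); if you want to keep your unroll-first structure, you would need either a sharper treatment of the double sum $\sum_k k\sum_{j<k}$ (to recover roughly a factor $3$) or a stronger step-size hypothesis.
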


\begin{proof}
    We start by using Young inequality $ \left( a+b \right)^2 \leq \left( 1 +
    \frac{1}{K-1} \right) a^2 + K b^2 $. Then, for any $ 0 \leq k \leq K-1 $ we have:

    \begin{equation}
        \begin{split}
            \mathbb{E} \left[ \left\lVert w_t^{( m, k )} - w_t \right\rVert^2 \right] \leq
            & \quad \left( 1 + \frac{1}{K-1} \right) \mathbb{E} \left[ \left\lVert w_t^{(
            m, k-1)} - w_t \right\rVert^2 \right] + K \mathbb{E} \left[ \left\lVert w_t^{(
            m, k-1)} - w_t^{( m, k )} \right\rVert^2 \right] \\ 
            = & \quad \left( 1 + \frac{1}{K-1} \right) \mathbb{E} \left[ \left\lVert 
            w_t^{( m, k-1 )} - w_t \right\rVert^2 \right] \\ 
              & + K \eta^2 \mathbb{E} \left[
        \left\lVert \beta \nabla f_m\left( w_t^{( m, k-1 )}; x_t^{( m, k-1 )} \right) 
        + \left( 1 - \beta \right) v_t  \right\rVert^2 \right]. \\
        \end{split}
    \end{equation}

    Now we have the following decomposition:
    \begin{equation}
        \begin{split}
            \beta \nabla f_m\left( w_t^{( m, k-1 )}; x_t^{( m, k-1 )} \right) 
            + \left( 1 - \beta \right) v_t =& \left( 1 - \beta \right) v_t + \beta
            \nabla F_m\left( w_t \right) \\ 
            & + \beta \left[ \nabla f_m\left( w_t^{( m, k-1 )}; x_t^{( m, k-1 )} \right) - \nabla f_m\left( w_t; x_t^{( m, k-1 )} \right) \right] \\ 
            & + \beta \left[ \nabla f_m\left( w_t; x_t^{( m, k-1 )} \right) - \nabla
            F_m\left( w_t \right) \right].
        \end{split}
    \end{equation}

    Hence, by using the triangle inequality, we have:
    \begin{equation}\label{eq:drift_bound_2}
        \begin{split}
            \mathbb{E} \left[ \left\lVert w_t^{( m, k )} - w_t \right\rVert^2 \right] \leq&
            \left( 1 + \frac{1}{K-1} \right) \mathbb{E} \left[ \left\lVert 
            w_t^{( m, k-1 )} - w_t \right\rVert^2 \right] \\ 
            &+ 4 \eta^2 \beta^2 K \mathbb{E} \left[ \left\lVert 
            \nabla f_m\left( w_t^{( m, k-1 )}; x_t^{( m, k-1 )} \right) - \nabla 
            f_m\left( w_t; x_t^{( m, k-1 )} \right) \right\rVert^2 \right] \\ 
            &+ 4 \eta^2 \beta^2 K \mathbb{E} \left[ 
            \left\lVert \nabla f_m\left( w_t; x_t^{( m, k-1 )} \right) - \nabla F_m\left( w_t \right) \right\rVert^2\right] \\
            &+ 4 \eta^2 \beta^2 \mathbb{E}\left[ \left\lVert \nabla F(w_t) \right\rVert^2 \right] \\
            &+ 8 \eta^2 \left( 1 - \beta \right)^2 K \left( \Xi _{ t-1 } + \mathbb{E}
                \left[ \left\lVert \nabla F\left( w _{ t-1 } \right) \right\rVert^2 \right] \right) \\ 
            \leq & \left( 1 + \frac{1}{K-1} + 4 \eta^2 \beta^2 L^2 K \right) \mathbb{E} \left[ \left\lVert w_t^{( m, k-1 )} - w_t \right\rVert^2 \right] 
            + 4K \mathcal{A},
        \end{split}
    \end{equation}
    
    with
    \begin{equation}\label{eq:drift_proof_3}
        \mathcal{A} = \eta^2 \beta^2 \left( \sigma^2 + \mathbb{E} \left[ \left\lVert \nabla F_m\left( w_t \right) \right\rVert^2 \right] \right) + 2 \eta^2 \left( 1 - \beta \right)^2 \left(
        \Xi _{ t-1 } + \mathbb{E} \left[ \left\lVert \nabla F\left( w _{ t-1 } \right) \right\rVert^2 \right] \right).
    \end{equation}

    Here, we use the sample-wise smoothness and the uniform bound on the gradient noise in
    the last step (Assumptions~\ref{assump:sample_wise_smooth},~\ref{assump:bounded_gradient_noise}).

    Unrolling the above recursion, with $ w_t^{( m, 0 )} = w_t $, we have:
    \begin{equation}\label{eq:drift_proof_4}
        \begin{split}
            \mathbb{E} \left[ \left\lVert 
            w_t^{( m, k )} - w_t \right\rVert^2 \right] \leq & \quad 
            4K \mathcal{A} \sum_{j=0}^{k-1} \left( 1 + \frac{1}{K-1} + 4 \eta^2 \beta^2 L^2 K \right)^j.
        \end{split}
    \end{equation}

    With the condition on $ \eta $, we have that $ 4 \eta^2 \beta^2 L^2 K \leq \frac{1}{K}
    < \frac{1}{K-1}$. Hence, by using the geometric sum formula, we have:
    \begin{equation}\label{eq:drift_proof_5}
        \begin{split}
            \sum_{j=0}^{k-1} \left( 1 + \frac{1}{K-1} + 4 \eta^2 \beta^2 L^2 K \right)^j \leq& 
            \frac{ 1 - \left( 1 + \frac{2}{K-1} \right)^k }{ - \frac{2}{K-1} } \\ 
            =& \frac{K-1}{2} \left[ \left( 1 + \frac{2}{K-1} \right)^k - 1 \right]
            \\ 
            \leq& \frac{K-1}{2} \left( 1 + \frac{2}{K-1} \right) ^{ K-1 } \\ 
            \leq& \frac{e^2}{2} \left( K-1 \right) \leq 4K.
        \end{split}
    \end{equation}

    We conclude the lemma by taking the average over $ K $ and $ M $.
\end{proof}

In the proof of Local SGD, we have to use the heterogeneity assumption to bound
the term $ \frac{1}{M} \sum_{m}^{} \mathbb{E} \left[ \left\lVert \nabla F_m\left( w_t \right)
\right\rVert^2 \right] $. The use of momentum allows us to build a recursive bound for
this term by the following lemma. 

\begin{lemma}\label{lem:heterogeneity_recursive_bound}
    We have that:
    \begin{equation}\label{eq:heterogeneity_recursive_bound}
        \frac{1}{M} \sum_{m}^{} \mathbb{E} \left[ \left\lVert \nabla F_m\left( w_t \right) \right\rVert^2 \right] \leq 3 G_0 + 
        6 \left( 1+t \right) \gamma^2 L^2 \sum_{j=0}^{t-1} \left( \Xi_j + \mathbb{E} \left[
        \left\lVert \nabla F\left( w_j \right) \right\rVert^2 \right] \right).
    \end{equation}
\end{lemma}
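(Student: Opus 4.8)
The plan is to control $\frac1M\sum_m \mathbb{E}[\|\nabla F_m(w_t)\|^2]$ by comparing the local gradient at the current iterate $w_t$ with the gradient at the fixed initialization $w_0$, exploiting that the momentum update keeps the server iterate close to $w_0$ over the first $t$ rounds. Concretely, I would first apply a Young-type split to each client term, $\|\nabla F_m(w_t)\|^2 \le 3\|\nabla F_m(w_0)\|^2 + \tfrac{3}{2}\|\nabla F_m(w_t)-\nabla F_m(w_0)\|^2$ (the constants come from $(a+b)^2\le(1+1/c)a^2+(1+c)b^2$ with $c=\tfrac12$), so that after averaging over $m$ the first term becomes exactly $3G_0$. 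Since each $f_m$ is $L$-smooth (Assumption~\ref{assump:sample_wise_smooth}), so is $F_m$, which gives $\|\nabla F_m(w_t)-\nabla F_m(w_0)\|^2 \le L^2\|w_t-w_0\|^2$; crucially this bound is uniform in $m$, so the average over clients collapses to bounding the single quantity $L^2\,\mathbb{E}[\|w_t-w_0\|^2]$.

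The second step is to express the displacement through the server update $w_{j+1}=w_j-\gamma v_{j+1}$, which telescopes to $w_t-w_0 = -\gamma\sum_{j=0}^{t-1} v_{j+1}$. Applying Jensen (Cauchy--Schwarz) over the $t$ summands yields $\|w_t-w_0\|^2 \le t\,\gamma^2\sum_{j=0}^{t-1}\|v_{j+1}\|^2$. It then remains to bound each $\mathbb{E}[\|v_{j+1}\|^2]$, and here I would reuse the estimate already established in \eqref{eq:gradient_error_recursive_bound_4}: writing $v_{j+1} = \nabla F(w_j) - (\nabla F(w_j)-v_{j+1})$ and applying $\|a+b\|^2\le 2\|a\|^2+2\|b\|^2$ gives, after the index shift $t\mapsto j+1$, the bound $\mathbb{E}[\|v_{j+1}\|^2] \le 2\,\mathbb{E}[\|\nabla F(w_j)\|^2] + 2\Xi_j$, using the definition $\Xi_j = \mathbb{E}[\|\nabla F(w_j)-v_{j+1}\|^2]$.

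Chaining these pieces produces $\frac1M\sum_m\mathbb{E}[\|\nabla F_m(w_t)\|^2] \le 3G_0 + 3L^2\gamma^2 t\sum_{j=0}^{t-1}(\Xi_j+\mathbb{E}[\|\nabla F(w_j)\|^2])$, which is absorbed into the stated bound $3G_0 + 6(1+t)\gamma^2L^2\sum_{j=0}^{t-1}(\Xi_j+\mathbb{E}[\|\nabla F(w_j)\|^2])$ simply because $3t \le 6(1+t)$. I do not anticipate a genuine obstacle, since every ingredient is elementary (sample-wise smoothness, Jensen, Young, and the telescoping of the server recursion). The one point demanding care is conceptual rather than computational: the decomposition must be the one that surfaces $\Xi_j$ and $\|\nabla F(w_j)\|^2$ rather than the raw stochastic gradients, because this is exactly what lets the subsequent convergence argument close the recursion \emph{without} invoking any bounded-heterogeneity assumption. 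The momentum iterate $v_{j+1}$, together with its proximity to $\nabla F(w_j)$ as measured by $\Xi_j$, is what plays the role that Assumption~\ref{assump:heterogeneity} plays in the vanilla Local SGD analysis; I would therefore also verify that the constant bookkeeping fits cleanly under the $6(1+t)$ factor instead of trying to match it term by term.
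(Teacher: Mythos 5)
Your proof is correct, and it reaches the stated bound (in fact with the slightly better constant $3t$ in place of $6(1+t)$), but it handles the iterate displacement differently from the paper. The paper compares $\nabla F_m(w_t)$ to $\nabla F_m(w_{t-1})$ one step at a time, applying Young's inequality with a free parameter $a$ at each step, bounding $\mathbb{E}[\|w_t-w_{t-1}\|^2]=\gamma^2\mathbb{E}[\|v_t\|^2]\le 2\gamma^2(\Xi_{t-1}+\mathbb{E}[\|\nabla F(w_{t-1})\|^2])$, unrolling the recursion to accumulate a factor $(1+a)^t\le e^{at}$, and finally choosing $a=t^{-1}$ so that $e^{at}=e\le 3$ and $2(1+a^{-1})e\le 6(1+t)$. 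You instead compare directly to $w_0$ in one shot: a single Young split isolating $3\|\nabla F_m(w_0)\|^2$, smoothness of $F_m$ to reduce everything to $L^2\|w_t-w_0\|^2$, a telescope $w_t-w_0=-\gamma\sum_{j=0}^{t-1}v_{j+1}$ with Cauchy--Schwarz over the $t$ summands, and the same elementary bound $\mathbb{E}[\|v_{j+1}\|^2]\le 2\Xi_j+2\mathbb{E}[\|\nabla F(w_j)\|^2]$ that the paper uses. The two routes invoke identical ingredients (smoothness of $F_m$, the server update rule, and the decomposition of $v_{j+1}$ through $\Xi_j$), so the substance is the same; your version avoids the $(1+a)^{t-1-j}$ bookkeeping and the $a=t^{-1}$ optimization, at the cost of needing the full telescope rather than a local recursion, and your closing observation that the momentum variable $v_{j+1}$ together with $\Xi_j$ substitutes for the bounded-heterogeneity assumption is exactly the role this lemma plays in the paper's argument.
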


\begin{proof}
    By using Young's inequality and the smoothness of $ F_m $, we have that:
    \begin{equation}\label{eq:heterogeneity_recursive_bound_1}
        \begin{split}
            \mathbb{E} \left[ \left\lVert 
            \nabla F_m\left( w_t \right) \right\rVert^2 \right]
            \leq& \left( 1 + a \right) \mathbb{E} \left[ \left\lVert \nabla
            F_m\left( w _{ t-1 } \right) \right\rVert^2 \right] + \left( 1 + a^{ -1 } \right) 
            L^2 \mathbb{E} \left[ \left\lVert w _{ t } - w _{ t-1 } \right\rVert^2 \right]
            \\ 
            \leq& \left( 1+a \right) \mathbb{E} \left[ \left\lVert 
            \nabla F_m\left( w _{ t-1 } \right) \right\rVert^2 \right] + 2 \left( 1+a^{ -1 } \right) \gamma^2L^2\left( 
        \Xi _{ t-1 } + \mathbb{E} \left[ \left\lVert \nabla F\left( w _{ t-1 } \right) \right\rVert^2 \right] \right).
        \end{split}
    \end{equation}

    By unrolling, we have:
    \begin{equation}\label{eq:heterogeneity_recursive_bound_2}
        \begin{split}
            \mathbb{E} \left[ \left\lVert 
            \nabla F_m\left( w_t \right) \right\rVert^2 \right]
            \leq& \left( 1+a \right)^t \mathbb{E} \left[ \left\lVert 
            \nabla F_m\left( w_0 \right) \right\rVert^2 \right] 
            + 2 \left( 1 + a^{ -1 } \right) \gamma^2 L^2 \sum_{j=0}^{t-1} \left( \Xi_j + 
            \mathbb{E} \left[ \left\lVert \nabla F\left( w_j \right) \right\rVert^2 \right] \right) 
            \left( 1+a \right) ^{ t-1-j } \\ 
            \leq& e^{at} \mathbb{E} \left[ \left\lVert 
            \nabla F_m\left( w_0 \right) \right\rVert^2 \right] 
            + 2 e^{at} \left( 1 + a^{ -1 } \right) \gamma^2 L^2 \sum_{j=0}^{t-1} \left( \Xi_j + 
            \mathbb{E} \left[ \left\lVert \nabla F\left( w_j \right) \right\rVert^2 \right] \right).
        \end{split}
    \end{equation}

    By taking $ a = t^{ -1 } $ and taking the average over $m \in [M]$, we conclude the lemma.
\end{proof}

\begin{lemma}\label{lem:gradient_error_bound} %
    With the following conditions:
    \begin{equation}\label{eq:condition_step_size}
        \begin{split}
            \eta KL \leq& \left( \frac{1}{2 \beta} \wedge \sqrt{ \frac{C_{\infty}}{120 \nu_{ps} 
            MK\beta^2}} \wedge \sqrt{\frac{L
            \Delta_0}{360 \beta^3T G_0}} \wedge \frac{1}{225} \frac{1}{\left( 1 - \beta \right)} \wedge \frac{1}{525} \frac{1}{\beta \gamma L T} \right), \\
            \gamma \leq& \frac{1}{\sqrt{60}} \frac{\beta}{L}.
        \end{split}
    \end{equation}

    We have that:
    \begin{equation}\label{eq:step_size_conditions}
        \begin{split}
            \frac{1}{T} \sum_{t=0}^{T-1} \Xi_t \leq 
            \frac{7}{2}\frac{L \Delta_0}{\beta T} + 31 \frac{C_{\infty} \sigma^2}{\nu_{ps}MK} +
            \frac{1}{4} \sum_{t=0}^{T-1} \mathbb{E} \left[ \left\lVert \nabla F\left( w_t \right) \right\rVert^2 \right].
        \end{split}
    \end{equation}
\end{lemma}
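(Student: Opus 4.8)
The plan is to chain the three preceding recursions and then sum over the communication rounds. Concretely, I would start from the one-step recursion for $\Xi_t$ in Lemma~\ref{lem:gradient_error_recursive_bound}, use Lemma~\ref{lem:concentration_ineq} to replace $\mathbb{E}[\|\bar{g}_t-\nabla F(w_t)\|^2]$ by $\tfrac{4C_\infty\sigma^2}{\nu_{ps}MK}$, and substitute the drift bound of Lemma~\ref{lem:drift_bound} into the term $6\beta L^2\xi_t$. This substitution produces (i) a pure-noise contribution $\propto\beta(\eta\beta K)^2\sigma^2$, (ii) feedback terms in $\Xi_{t-1}$ and $\mathbb{E}[\|\nabla F(w_{t-1})\|^2]$ with coefficient $\propto\beta(1-\beta)^2(\eta KL)^2$, and (iii) a heterogeneity term $\propto\beta(\eta\beta K)^2\cdot\frac1M\sum_m\mathbb{E}[\|\nabla F_m(w_t)\|^2]$, which I would expand using Lemma~\ref{lem:heterogeneity_recursive_bound} into a constant piece $3G_0$ and a history-dependent piece $\propto(1+t)\gamma^2L^2\sum_{j<t}(\Xi_j+\mathbb{E}[\|\nabla F(w_j)\|^2])$.

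Summing the resulting inequality over $t=1,\dots,T-1$, the contraction factor $1-\tfrac{14}{15}\beta$ lets me transfer the $\Xi_{t-1}$ terms to the left-hand side, while the history-dependent piece accumulates at most like $T^2\sum_j(\Xi_j+\mathbb{E}[\|\nabla F(w_j)\|^2])$, since $\sum_{t=0}^{T-1}(1+t)\le T(T+1)/2$. The inequality is now self-referential in $\sum_t\Xi_t$ and $\sum_t\mathbb{E}[\|\nabla F(w_t)\|^2]$, and the whole point is to choose the step sizes so that the feedback coefficients are dominated. I would use $\eta KL\le\tfrac{1}{225(1-\beta)}$ to kill the direct-drift feedback, whose coefficient scales as $\beta(1-\beta)^2(\eta KL)^2$, and the $T$-dependent condition $\eta KL\le\tfrac{1}{525\beta\gamma LT}$ to control the $T^2$-growing heterogeneity feedback, whose coefficient scales as $\beta^3(\eta KL)^2(\gamma L)^2T^2$ and is thereby capped at $O(\beta)$. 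These keep the net coefficient of $\sum_t\Xi_t$ bounded below by a positive multiple of $\beta$, so that inverting yields the $\tfrac{1}{\Theta(\beta)}$ prefactor.

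It then remains to match the surviving forcing terms to the three stated scales. The aggregated gradient-norm coefficient is essentially the $\tfrac{\beta}{15}$ from the recursion plus lower-order contributions; after inversion it stays below $\tfrac14$ and is absorbed into the gradient-norm term on the right. The two noise contributions, namely the concentration term and the drift's $(\eta\beta K)^2\sigma^2$ term, are both forced onto the scale $\tfrac{C_\infty\sigma^2}{\nu_{ps}MK}$ by $\eta KL\le\sqrt{C_\infty/(120\nu_{ps}MK\beta^2)}$, i.e. $(\eta\beta KL)^2\le C_\infty/(120\nu_{ps}MK)$. The constant $3G_0$ piece is converted into an $\tfrac{L\Delta_0}{\beta T}$ term by $\eta KL\le\sqrt{L\Delta_0/(360\beta^3TG_0)}$, equivalently $(\eta KL)^2\beta^3G_0\le L\Delta_0/(360T)$; and the boundary residue $\Xi_0=\mathbb{E}[\|\nabla F(w_0)-v_1\|^2]$ left over from the telescoping is folded into the same $\tfrac{L\Delta_0}{\beta T}$ term via $G_0$ and the initialization of $v_0$. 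Collecting the numerical constants then gives the claimed $\tfrac72\tfrac{L\Delta_0}{\beta T}$, $31\tfrac{C_\infty\sigma^2}{\nu_{ps}MK}$, and $\tfrac14$ factors.

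The main obstacle will be the $T^2$-growing double sum coming from Lemma~\ref{lem:heterogeneity_recursive_bound}: because the heterogeneity term couples $\Xi_t$ to its entire past $\{\Xi_j\}_{j<t}$ rather than only to $\Xi_{t-1}$, the recursion is not first-order and the usual geometric summation of the contraction cannot be applied term-by-term---one must sum first and only then absorb the accumulated feedback. This long-memory coupling is exactly what forces the unusual, $T$-dependent requirement $\eta KL\lesssim 1/(\beta\gamma LT)$, whose $1/T$ factor is the structural price of proving the bound \emph{without} any heterogeneity assumption. Everything else is careful constant bookkeeping across the simultaneous use of all five step-size conditions.
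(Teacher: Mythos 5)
Your proposal is correct and follows essentially the same route as the paper's proof: chain Lemma~\ref{lem:gradient_error_recursive_bound} with the drift bound of Lemma~\ref{lem:drift_bound} and the heterogeneity recursion of Lemma~\ref{lem:heterogeneity_recursive_bound}, sum over $t$, absorb the self-referential feedback (including the $T^2$-growing double sum, which the paper controls via the quantity $\mathcal{B}\le 2/15$ using exactly the two step-size conditions you identify), and match the forcing terms to the three stated scales, with the boundary term $\Xi_{-1}\le 2L\Delta_0$ handled by the initialization $v_0=0$ and smoothness. The only negligible discrepancy is that the leftover boundary term is bounded by $2L\Delta_0$ via smoothness of $F$ rather than "via $G_0$", which does not affect the argument.
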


\begin{proof}
    By Lemma~\ref{lem:heterogeneity_recursive_bound}, we have that:
    \begin{equation}\label{eq:gradient_error_bound_proof_1}
        \begin{split}
            \frac{1}{M} \sum_{t=0}^{T-1} \sum_{m}^{} \mathbb{E} \left[ \left\lVert 
            \nabla F_m\left( w_t \right) \right\rVert^2 \right]
            \leq& 3 T G_0 + 6 \gamma^2 L^2 \sum_{t=0}^{T-1} (1+t) \sum_{j=0}^{t-1}
            \left( \Xi_j + \mathbb{E} \left[ \left\lVert \nabla F\left( w_j \right) \right\rVert^2 \right] \right) \\ 
            \leq& 3T G_0 + 12 \left( \gamma L T \right)^2 \sum_{t=0}^{T-2} \left(
            \Xi_t + \mathbb{E} \left[ \left\lVert \nabla F\left( w_t \right) \right\rVert^2 \right] \right).
        \end{split}
    \end{equation}

    Plugging \eqref{eq:gradient_error_bound_proof_1} back into Lemma~\ref{lem:drift_bound} we have:
    \begin{equation}\label{eq:gradient_error_bound_proof_2}
        \begin{split}
            6 \beta L^2 \sum_{t=0}^{T-1} \xi_t \leq& 96 \beta T \left( \eta K L \beta \right)^2 \left(
            \sigma^2 + 3 G_0 \right) \\ 
            &+ \beta \underbrace{ \left[ 192 \left( \eta K L \left( 1 - \beta \right) \right)^2 + 
            1152 \left( \eta K L \beta \right)^2 \left( \gamma L T \right)^2 \right] }_{ \mathcal{B} }
            \sum_{t=0}^{T-2} \left( \Xi_t + \mathbb{E} \left[ \left\lVert \nabla F\left(
            w_t \right) \right\rVert^2 \right] \right).
        \end{split}
    \end{equation}

    By condition on the step size, we have that $ \mathcal{B} \leq \frac{2}{15} $. Therefore, plugging \eqref{eq:gradient_error_bound_proof_2} back into Lemma~\ref{lem:gradient_error_recursive_bound}, by noticing that $\mathbb{E} \left[ \left\lVert \nabla F\left( w_{-1} \right) \right\rVert^2 \right] = 0$, gives:
    \begin{equation}\label{eq:gradient_error_bound_proof_3}
        \begin{split}
            \sum_{t=0}^{T-1} \Xi_t 
            \leq& \left( 1 - \frac{14}{15}\beta \right) \sum_{t=-1}^{T-2} \Xi_t +
            \frac{\beta}{15} \sum_{t=0}^{T-2} \mathbb{E} \left[ \left\lVert 
            \nabla F\left( w_t \right) \right\rVert^2 \right] \\ 
                 &+ 6 \beta \sum_{t=0}^{T-1} \mathbb{E} \left[ \left\lVert \bar{ g }_t -
                 \nabla F(
                 w_t
         ) \right\rVert \right] + 96 \beta T \left( \eta K L \beta \right)^2 \left(
            \sigma^2 + 3G_0 \right) \\
            &+ \frac{2\beta}{15} \left( \sum_{t-1}^{T-2} \Xi_t + \sum_{t=0}^{T-2} \mathbb{E} \left[ \left\lVert 
            \nabla F\left( w_t \right) \right\rVert^2 \right] \right) \\
            \leq& \left( 1 - \frac{4}{5}\beta \right) \sum_{t=-1}^{T-2} \Xi_t +
            \frac{\beta}{5} \sum_{t=0}^{T-2} \mathbb{E} \left[ \left\lVert 
            \nabla F\left( w_t \right) \right\rVert^2 \right] \\ 
                 &+ 6 \beta \sum_{t=0}^{T-1} \mathbb{E} \left[ \left\lVert \bar{ g }_t -
                 \nabla F(
                 w_t
         ) \right\rVert^2 \right] + 96 \beta T \left( \eta K L \beta \right)^2 \left(
            \sigma^2 + 3G_0 \right).
        \end{split}
    \end{equation}

    By rearranging the terms, we have:
    \begin{equation}\label{eq:gradient_error_bound_proof_4}
        \begin{split}
            \sum_{t=0}^{T-1} \Xi_t \leq & \quad \frac{5}{4\beta} \Xi _{ -1 } + \frac{1}{4}
            \sum_{t=0}^{T-1} \mathbb{E} \left[ \left\lVert \nabla F \left( w_t \right) \right\rVert^2 \right] + \frac{15}{2} \sum_{t=0}^{T-1} \mathbb{E} 
            \left[ \left\lVert \bar{ g }_t - \nabla F\left( w_t \right) \right\rVert^2 \right] \\ 
            & + 120 T \left( \eta K L \beta \right)^2 \left( \sigma^2 + 3 G_0 \right).
        \end{split}
    \end{equation}

    With the condition on the step size, we have:
    \begin{equation}\label{eq:gradient_error_bound_proof_5}
        \begin{split}
            120 (\eta K L \beta)^2 \sigma^2 \leq& \frac{C_{\infty}\sigma^2}{\nu_{ps} MK} , \\ 
            360 \left( \eta K L \beta \right)^2 G_0 \leq& \frac{L \Delta_0}{\beta T}.
        \end{split}
    \end{equation}

    And by the law of total expectation, we have:
    \begin{align}
        \E{ \norm{ \bar{g}_t - \nabla F(w_t) } } &= \E { \E[t]{ \norm{ \bar{g}_t - \nabla F(w_t) } } } \\
        & \leq \frac{4 C_{\infty} \sigma^2}{\nu_{ps}MK}. \quad \textrm{(Lemma~\ref{lem:concentration_ineq})} \label{eq:gradient_error_bound_proof_6}
    \end{align}

    Hence, by plugging \eqref{eq:gradient_error_bound_proof_5}) and \eqref{eq:gradient_error_bound_proof_6} back into
    \eqref{eq:gradient_error_bound_proof_4} we have:
    \begin{equation}\label{eq:gradient_error_bound_proof_7}
        \begin{split}
            \frac{1}{T} \sum_{t=0}^{T-1} \Xi_t \leq \frac{5}{4} \frac{\Xi _{ -1 }}{\beta T} +
            \frac{L \Delta_0}{\beta T} + \frac{31 C_{\infty} \sigma^2}{\nu_{ps}MK}
            + \frac{1}{4} \sum_{t=0}^{T-1} \mathbb{E} \left[ \left\lVert \nabla F\left( w_t \right) \right\rVert^2 \right].
        \end{split}
    \end{equation}

    Finally noticing that if we initialize $ v_0 = 0 $, $ \Xi _{ -1 } = \mathbb{E} \left[
    \left\lVert \nabla F\left( w_0 \right) \right\rVert^2 \right] \leq 2 L \Delta_0 $ by
    the smoothness of $ F $. Hence, we prove the lemma.
\end{proof}

Now we are ready to prove Theorem~\ref{theorem:momentum}.
\begin{theorem}[Theorem~\ref{theorem:momentum} in the main paper]
\label{theorem:momentum_proof}
    For the problem class $\mathcal{F}_2 (L, \sigma, \tau)$, with the following conditions on the step sizes:
    \begin{equation}
        \begin{split}
            \eta K L & \leq \mathcal{O} \left( \frac{1}{\beta} \wedge \sqrt{ \frac{C_{\infty}}{\nu_{ps}MK\beta^2}} \wedge \sqrt{\frac{L
            \Delta_0}{\beta^3T G_0}} \wedge \frac{1}{\left( 1 - \beta \right)} \wedge \frac{1}{\beta \gamma L T} \right), \\ 
            \gamma &\leq \mathcal{O} \left( \frac{\beta}{L} \right),
        \end{split}
    \end{equation}
    the iterates of Local SGD-M satisfy:
    \begin{equation}
        \begin{split}
            \mathbb{E} \left[ \left\lVert \nabla F(\hat{w}_T) \right\rVert^2 \right]
            \leq \mathcal{O} & \left( \frac{L \Delta_0}{\beta T} + \frac{C_{\infty} \sigma^2}{\nu_{ps}MK} \right).
        \end{split}
    \end{equation}
\end{theorem}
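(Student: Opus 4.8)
The plan is to assemble the two ingredients already established: the one-step descent inequality of Lemma~\ref{lem:momentum_descent_lem} and the averaged bound on the momentum gradient-estimation error $\Xi_t$ of Lemma~\ref{lem:gradient_error_bound}. All the genuinely difficult work---the coupled recursions for $\Xi_t$, for the drift $\xi_t$, and for the heterogeneity term $\frac{1}{M}\sum_m \mathbb{E}[\lVert\nabla F_m(w_t)\rVert^2]$ (Lemmas~\ref{lem:gradient_error_recursive_bound}, \ref{lem:drift_bound}, and \ref{lem:heterogeneity_recursive_bound})---has already been folded into Lemma~\ref{lem:gradient_error_bound}, so what remains is essentially a telescoping argument plus an absorption step.

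First I would sum the descent inequality of Lemma~\ref{lem:momentum_descent_lem} over $t = 0, \dots, T-1$. The $\Delta_t$ terms telescope, and using $\Delta_T = \mathbb{E}[F(w_T)] - F^* \geq 0$ followed by division by $\gamma T/2$ gives
\begin{equation}
    \frac{1}{T}\sum_{t=0}^{T-1}\mathbb{E}\left[\left\lVert\nabla F(w_t)\right\rVert^2\right] \leq \frac{2\Delta_0}{\gamma T} + \frac{1}{T}\sum_{t=0}^{T-1}\Xi_t.
\end{equation}
Because $\hat{w}_T$ is drawn uniformly from $w_0, \dots, w_{T-1}$, the left-hand side is exactly $\mathbb{E}[\lVert\nabla F(\hat{w}_T)\rVert^2]$.

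Next I would substitute the bound on $\frac{1}{T}\sum_{t}\Xi_t$ from Lemma~\ref{lem:gradient_error_bound}. The one subtlety is that this bound is self-referential: it carries a term proportional to $\frac{1}{T}\sum_t \mathbb{E}[\lVert\nabla F(w_t)\rVert^2]$ with coefficient $1/4$. After substitution this term appears on both sides, so I would move it to the left, leaving a factor $3/4$ in front of $\mathbb{E}[\lVert\nabla F(\hat{w}_T)\rVert^2]$, and then multiply through by $4/3$ to obtain
\begin{equation}
    \mathbb{E}\left[\left\lVert\nabla F(\hat{w}_T)\right\rVert^2\right] \leq \frac{4}{3}\left(\frac{2\Delta_0}{\gamma T} + \frac{7}{2}\frac{L\Delta_0}{\beta T} + \frac{31\, C_\infty \sigma^2}{\nu_{ps}MK}\right).
\end{equation}

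Finally I would invoke the step-size condition $\gamma \leq \mathcal{O}(\beta/L)$, taken at its largest admissible value $\gamma = \Theta(\beta/L)$, so that $\frac{\Delta_0}{\gamma T} = \mathcal{O}\!\left(\frac{L\Delta_0}{\beta T}\right)$; this collapses the first two terms into a single $\mathcal{O}(L\Delta_0/(\beta T))$ contribution and recovers the claimed bound. The only point needing care, besides the absorption, is checking that the $\mathcal{O}$-form step-size conditions in the theorem statement imply, up to the numerical constants swallowed by $\mathcal{O}(\cdot)$, the explicit conditions~\eqref{eq:condition_step_size} demanded by Lemma~\ref{lem:gradient_error_bound}; this is immediate, since the two sets of conditions match term by term. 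I do not expect a genuine obstacle at this stage: the theorem is the final assembly, and the substantive difficulty resided in establishing Lemma~\ref{lem:gradient_error_bound} and its supporting recursions.
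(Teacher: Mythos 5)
Your proposal is correct and follows essentially the same route as the paper's own proof: telescope the descent inequality of Lemma~\ref{lem:momentum_descent_lem}, substitute the averaged bound on $\Xi_t$ from Lemma~\ref{lem:gradient_error_bound}, and use $\gamma=\Theta(\beta/L)$ to merge the $\Delta_0/(\gamma T)$ term into $L\Delta_0/(\beta T)$. The only difference is that you spell out the absorption of the $\tfrac14\cdot\tfrac1T\sum_t\mathbb{E}[\lVert\nabla F(w_t)\rVert^2]$ term (correctly reading Lemma~\ref{lem:gradient_error_bound}'s last term as a fraction of the average), which the paper leaves implicit.
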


\begin{proof}
    By rearranging the terms in Lemma~\ref{lem:momentum_descent_lem}:
    \begin{equation}
        \begin{split}
            \frac{1}{T} \sum_{t=0}^{T-1} \mathbb{E} \left[ \left\lVert \nabla F\left( w_t \right) \right\rVert^2 \right]
            \leq \frac{2 \Delta_0}{\gamma T} + \frac{1}{T} \sum_{t=0}^{T-1} \Xi_t.
        \end{split}
    \end{equation}

    Using the condition on $ \gamma $ and replacing $\frac{1}{T} \sum_{t=0}^{T-1} \Xi_t$ by Lemma~\ref{lem:gradient_error_bound} proves the theorem.
\end{proof}

From Theorem~\ref{theorem:momentum_proof}, we can derive directly the communication and sample complexity of Local SGD-M.

\begin{corollary}[Corollary~\ref{cor:local_sgd_momentum} in the main paper]
\label{cor:local_sgd_momentum_proof}
    Under the conditions of Theorem~\ref{theorem:momentum_proof}, the required number of local steps and communication rounds for Local SGD-M to achieve $\mathbb{E}[\|\nabla F(\hat{w}_T)\|^2] \leq \epsilon^2$ are:
    \begin{align}
        K = \mathcal{O}\left(\frac{C_{\infty} \sigma^2}{\nu_{ps}M\epsilon^2}\right),~
        T = \mathcal{O}\left( \frac{L \Delta_0}{\beta \epsilon^2} \right)
    \end{align}
\end{corollary}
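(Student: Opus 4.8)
The plan is to specialize the convergence guarantee of Theorem~\ref{theorem:momentum_proof} and balance its two error terms against the target accuracy $\epsilon^2$, exactly as in the proof of Corollary~\ref{cor:minibatch_sgd_proof}. Under the stated step-size conditions, Theorem~\ref{theorem:momentum_proof} yields
\begin{equation}
    \mathbb{E}\left[\left\lVert \nabla F(\hat{w}_T) \right\rVert^2\right] \leq \mathcal{O}\left( \frac{L\Delta_0}{\beta T} + \frac{C_{\infty}\sigma^2}{\nu_{ps} M K} \right).
\end{equation}
Since the right-hand side is a sum of two non-negative contributions, it is at most $\epsilon^2$ whenever each is at most $\epsilon^2/2$. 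Forcing the gradient-noise term $C_{\infty}\sigma^2/(\nu_{ps} M K) \leq \epsilon^2/2$ gives $K = \mathcal{O}\!\left( C_{\infty}\sigma^2/(\nu_{ps} M \epsilon^2) \right)$, while forcing the optimization term $L\Delta_0/(\beta T) \leq \epsilon^2/2$ gives $T = \mathcal{O}\!\left( L\Delta_0/(\beta \epsilon^2) \right)$, which are precisely the claimed complexities.

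The step requiring the most care—and the only genuine subtlety—is that, unlike in the Minibatch SGD case where the step-size restriction $\gamma \leq 1/L$ was independent of $K$ and $T$, the step-size conditions of Theorem~\ref{theorem:momentum_proof} explicitly reference both $K$ and $T$ through the factors $\sqrt{C_{\infty}/(\nu_{ps} M K \beta^2)}$, $\sqrt{L\Delta_0/(\beta^3 T G_0)}$, and $1/(\beta \gamma L T)$. After fixing $K$ and $T$ to the values above, I would verify that the induced upper bound on $\eta K L$ remains strictly positive, so that a feasible pair $(\eta, \gamma)$ still exists. Substituting $T = \Theta\!\left(L\Delta_0/(\beta\epsilon^2)\right)$ into the third factor gives $\sqrt{L\Delta_0/(\beta^3 T G_0)} = \Theta\!\left(\epsilon/(\beta\sqrt{G_0})\right)$, and substituting it into the last factor with $\gamma = \Theta(\beta/L)$ gives $1/(\beta\gamma L T) = \Theta\!\left(\epsilon^2/(\beta L\Delta_0)\right)$; both remain positive, so none of the constraints collapses and the theorem applies to the chosen schedule.

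I expect no real obstacle beyond this consistency check: the argument is the routine term-balancing used throughout the paper's corollaries, and feasibility holds because every constraint on $\eta K L$ is a positive upper bound. Concluding, the chosen $K$ and $T$ force each term in the bound below $\epsilon^2/2$, hence $\mathbb{E}\left[\left\lVert \nabla F(\hat{w}_T) \right\rVert^2\right] \leq \epsilon^2$, as required.
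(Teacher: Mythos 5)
Your proof is correct and follows essentially the same route as the paper, which simply balances each of the two error terms in Theorem~\ref{theorem:momentum_proof} against $\epsilon^2$ exactly as in the proof of Corollary~\ref{cor:minibatch_sgd_proof}. Your additional feasibility check that the step-size constraints remain satisfiable after substituting the chosen $K$ and $T$ goes slightly beyond what the paper writes out, but it does not change the argument.
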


\begin{proof}
    The proof is a direct consequence of Theorem~\ref{theorem:momentum_proof} and follows the same steps in the proof of Corollary~\ref{cor:minibatch_sgd_proof}.
\end{proof}

\subsection{Uniformly Ergodic Markov Processes}\label{appendix:extended_proof}

A common assumption in the literature on Markov SGD is uniform geometric ergodicity (see Definition~\ref{def: ergodicity}).
\begin{assumption}\label{assump:uniformly_ergodic}
    The system-level Markov chain is uniformly ergodic. In particular, for any $n \in \mathbb{N}$, we have:
    \begin{align}
        \sup_{x\in\Omega} \normtv{P^{n} \left( x, . \right) - \pi} \leq c \rho^n,
    \end{align}
    for some $c \leq \infty$ and $\rho < 1$.
\end{assumption}
This assumption, which characterizes exponentially fast convergence to a unique stationary measure of Markov processes, is widely used in the literature of Stochastic Approximation with Markovian noise \citep{NEURIPS2023_first_order_method, mangold2024scafflsa, pmlr-v162-dorfman22a}. We highlight that in the main paper, we only assume that the clients' Markov chains converge to a unique stationary measure, without any further assumption on convergence speed. In this section, we show that our analysis can be easily extended with this additional stronger assumption.

In particular, from \cite[Proposition 3.4]{pauline_concentration_ineq}, we have that:
\begin{equation}\label{eq:upper_bound_spectral_gap_uniformly_ergodic_1}
    \frac{1}{\nu_{ps}} \leq 2 \tau,
\end{equation}
where $\tau$ is the mixing time of the system-level Markov chain (see Proposition~\ref{propo:mixing_time} for definition). According to Proposition~\ref{propo:mixing_time_product_chain}, we also have that:
\begin{align}\label{eq:bound_mixing_time_uniformly_ergodic_2}
    \tau \leq \lceil \log M + 1 \rceil \tau_{\max},
\end{align}
where $\tau_{max} = \underset{m \in [M]}{\max} \tau_m$ the maximum mixing time of clients' Markov processes.

Hence, a key implication of the uniform ergodicity assumption is a characterization of the pseudo-spectral gap in terms of the mixing time. That is, $1/\nu_{ps}$ is replaced by $\tau$ in the proof of Lemma~\ref{lem:concentration_ineq}. As a consequence, for Minibatch SGD, Local SGD, and Local SGD-M, the new required number of local steps is:
\begin{align}
    K = \tilde{\mathcal{O}} \left( \frac{C_{\infty}\tau_{\max}\sigma^2}{M\epsilon^2} \right).
\end{align}
The communication complexity remains unchanged under the additional assumption of uniform ergodicity.

We note that uniform ergodicity can also be used to characterize $C_{\infty}$, as is discussed further in Section~\ref{appendix:c_inf}.

\subsection{Characterization of $C_{\infty}$}\label{appendix:c_inf}

A key quantity in our analysis is $C_{\infty}$, which impacts the required number of local steps. In particular, recall
\begin{align}
    C_{\infty} = \sup_{x \in \Omega} \esssup \left|\frac{\mathrm{d}P(x,\cdot)}{\mathrm{d}\pi}\right|.
\end{align}

When $|\Omega| < \infty$, $C_{\infty}$ is given by 
\begin{align}
    C_{\infty} = \max_{x,y \in \Omega} \frac{P(x,y)}{\pi(y)},
\end{align}
which can be explicitly computed for the case of two-state reversible Markov chains with $\Omega = \{0,1\}$ satisfying
\begin{align}
    P(x,y) = \left\{\begin{array}{lr}
        p, & x \neq y\\
        1 - p, & x = y.
\end{array}\right.
\end{align}
In this case, $\pi(x) = \frac{1}{2},~x \in \{0,1\}$. It then follows that 
\begin{align}
    C_{\infty} = 2\max\{1-p,p\},
\end{align}
which is minimized for $p = \frac{1}{2}$, corresponding to independence between consecutive samples from the Markov chain. 

For Markov chains with finite state space, where $|\Omega| < \infty$, $C_{\infty}$ can be bounded using the uniform ergodicity assumption in Assumption~\ref{assump:uniformly_ergodic}. For an arbitrary $x \in \Omega$, under the assumption $P(x,\cdot) \ll \pi$, observe that 
\begin{align}
    \frac{1}{2}\sum_{y \in \Omega} |P(x,y) - \pi(y)| &= \frac{1}{2}\sum_{x \in \Omega} \pi(y)\left|\frac{P(x,y)}{\pi(y)} - 1\right|\notag\\
    &\geq \frac{1}{2}\pi_{\min}\left| \max_{y \in \Omega} \frac{P(x,y)}{\pi(y)} - 1\right|\notag\\
    &\geq \frac{1}{2}\pi_{\min}\left(\max_{y \in \Omega} \frac{P(x,y)}{\pi(y)} - 1\right),
\end{align}
where $\pi_{\min} = \min_{y \in \Omega} \pi(y)$. It then follows that for arbitrary $x \in \Omega$,
\begin{align}
     \max_{y \in \Omega} \frac{P(x,y)}{\pi(y)} \leq \frac{2\|P(x,\cdot) - \pi\|_{\mathrm{TV}}}{\pi_{\min}} + 1.
\end{align}
Choosing $x = \arg\max_{u \in \Omega}\max_{y \in \Omega} \frac{P(u,y)}{\pi(y)}$, and applying the uniform ergodicity assumption in Assumption~\ref{assump:uniformly_ergodic}, we then have 
\begin{align}
    C_{\infty} \leq \frac{2c\rho}{\pi_{\min}} + 1.
\end{align}
For finite Markov chains under the uniform ergodicity assumption, \cite[Proposition 3.4]{pauline_concentration_ineq} provides a relationship between the total variation distance and the pseudo spectral gap. Applying \cite[Proposition 3.4]{pauline_concentration_ineq} then yields 
\begin{align}
    C_{\infty} &\leq \frac{(1 - \gamma_{ps})^{(1 - 1/\gamma_{ps})/2}\sqrt{\frac{1}{\pi_{\min}} - 1}}{\pi_{\min}} + 1 \\
    & \leq \frac{e\sqrt{1/\pi_{\min}-1}}{\pi_{\min}}+1
\end{align}

\subsection{Alternative Bounds for Stochastic Gradient Errors in the Non-Stationary Case}

In Lemma~\ref{lem:concentration_ineq}, we obtained a bound on the stochastic gradient error in terms of $C_{\infty}$ by choosing the stationary distribution as a reference. In the following lemma, we obtain an alternative bound on the stochastic gradient error by choosing a different reference distribution. This bound can distinguish between stationary and non-stationary Markovian data processes. %
\begin{lemma}\label{lem:alternative_concentration_ineq}
    Let $\mu$ be the initial distribution of the system-level Markov chain defined in Section~\ref{sec:markovian_data_streams}. We assume that for every $t \in \mathbb{N}$, and for every $x \in \Omega$, $P(x, .)$ is absolutely continuous with respect to $\mu P^{Kt}$, and we define:
    \begin{align}\label{eq:kernel_constant}
        C_{q, t} = \underset{x \in \Omega}{\sup} \esssup \left\lvert \frac{dP(x, .)}{d(qP^{kt})} \right\rvert < \infty
    \end{align}
    If Assumptions~\ref{assump:client_chain} and~\ref{assump:bounded_gradient_noise} hold, then we have:
    \begin{align*}
        \E[t]{ \norm{ \frac{1}{MK} \sum_{m, k} \nabla f_m \left( w_t; x_t^{(m, k)} \right) - \nabla F(w_t) } }
        \leq \frac{4C_{q, t}\sigma^2}{\nu_{ps}MK} + 2C_{q, t}\sigma^2 \normtv{qP^{Kt}-\pi}.
    \end{align*}
\end{lemma}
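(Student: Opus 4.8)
The plan is to reprise the template of the proof of Lemma~\ref{lem:concentration_ineq}, but to change measure to $qP^{Kt}$ instead of to the stationary measure $\pi$, and then to pay for the resulting mismatch through a total-variation term. Recall the shorthand $\bar g_t = \frac{1}{MK}\sum_{m,k}\nabla f_m(w_t; x_t^{(m,k)})$. First I would apply the Markov property~\eqref{eq:markov_property_2} to rewrite $\E[t]{\norm{\bar g_t - \nabla F(w_t)}}$ as an expectation over the system-level chain launched from the last observed state $x_{t-1}^{K-1}$; since $w_t$ is $\mathcal{F}_t$-measurable it stays frozen throughout.

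Next I would change measure on the law of the first sample $x_t^0$ alone. The joint law of $(x_t^0,\dots,x_t^{K-1})$ started from $x_{t-1}^{K-1}$ differs from the law in which $x_t^0\sim qP^{Kt}$ while the remaining coordinates evolve under the same kernel $P$ only through the Radon--Nikodym derivative $\frac{dP(x_{t-1}^{K-1},\cdot)}{d(qP^{Kt})}$ evaluated at $x_t^0$; this derivative exists by the absolute-continuity hypothesis of the lemma. Hölder's inequality together with the definition of $C_{q,t}$ in~\eqref{eq:kernel_constant} then gives
\[
    \E[t]{\norm{\bar g_t - \nabla F(w_t)}} \le C_{q,t}\,\E[qP^{Kt}]{\norm{\bar g_t - \nabla F(w_t)}},
\]
which is the exact analogue of step~\eqref{eq:concentration_ineq_1} with $\pi$ replaced by $qP^{Kt}$.

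The genuinely new ingredient is to transfer the expectation under $qP^{Kt}$ back to the stationary one, where the already-established bound applies. Set $h(x)=\E{\norm{\bar g_t - \nabla F(w_t)}\mid x_t^0=x}$, the conditional expectation of the integrand given the first sample; by the Markov property $h$ depends on $P$ alone and so is the same under both initial laws. Writing $\bar g_t - \nabla F(w_t)=\frac{1}{MK}\sum_{m,k}\bigl(\nabla f_m(w_t;x_t^{(m,k)})-\nabla F_m(w_t)\bigr)$, Jensen's inequality on $\|\cdot\|^2$ together with Assumption~\ref{assump:bounded_gradient_noise} yields the uniform pointwise bound $\lVert h\rVert_\infty\le\sigma^2$. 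Expressing $\E[qP^{Kt}]{\cdot}$ and $\E[\pi]{\cdot}$ as integrals of $h$ against the respective laws of $x_t^0$, and using the elementary estimate $\bigl\lvert\int h\,d\mu-\int h\,d\nu\bigr\rvert\le 2\lVert h\rVert_\infty\normtv{\mu-\nu}$ (immediate from Definition~\ref{def:total_variation_def}), gives
\[
    \E[qP^{Kt}]{\norm{\bar g_t - \nabla F(w_t)}} \le \E[\pi]{\norm{\bar g_t - \nabla F(w_t)}} + 2\sigma^2\normtv{qP^{Kt}-\pi}.
\]

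Finally I would insert the stationary bound $\E[\pi]{\norm{\bar g_t - \nabla F(w_t)}}\le \frac{4\sigma^2}{\nu_{ps}MK}$, which is precisely inequality~\eqref{eq:concentration_ineq_2} established inside the proof of Lemma~\ref{lem:concentration_ineq}, and chain the three displays to conclude. The main obstacle is the middle step: one must verify that the change of measure acts on the single coordinate $x_t^0$ only---so that $C_{q,t}$, defined through a one-state Radon--Nikodym derivative, does control it---and that the integrand admits the uniform bound $\sigma^2$ needed to convert the gap between $qP^{Kt}$ and $\pi$ into a total-variation term; everything else is a direct adaptation of the stationary argument.
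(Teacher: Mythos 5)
Your proposal is correct and follows essentially the same route as the paper's proof: a change of measure to $qP^{Kt}$ via H\"older's inequality yielding the factor $C_{q,t}$, then a total-variation transfer from $qP^{Kt}$ to $\pi$ using the uniform bound $\sigma^2$ on the integrand (Jensen plus Assumption~\ref{assump:bounded_gradient_noise}), and finally the stationary bound \eqref{eq:concentration_ineq_2} from Lemma~\ref{lem:concentration_ineq}. Your phrasing of the total-variation step through the conditional expectation $h$ of the integrand given $x_t^0$ is just a slightly more explicit rendering of the same argument the paper writes with a common dominating measure $\rho$.
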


\begin{proof}
    Let $\rho$ be a distribution such that $\mu P^{Kt} \ll \rho$ and $\pi \ll \rho$. We then have:
    \begin{align}
            &\E[t]{ \norm{ \frac{1}{MK} \sum_{m, k} \nabla f_m \left( w_t; x_t^{(m, k)} \right) - \nabla F (w_t) } } \\
            &= \E[qP^{kt}]{ \frac{dP\left( x_{t-1}^{K-1}, . \right)}{d(qP^{kt})} \norm{ \frac{1}{MK} \sum_{m, k} \nabla f_m \left( w_t; x_t^{(m, k)} \right) - \nabla F(w_t) } } \\
            & \leq \left\lVert \frac{dP(x_{t-1}^{K-1}, .)}{d \left(qP^{Kt} \right) } \right\rVert _{\pi, \infty} \E[qP^{Kt}]{ \norm{ \frac{1}{MK} \sum_{m, k} \nabla f_m \left( w_t; x_t^{(m, k)} \right) - \nabla F(w_t) } } \quad \textrm{(Holder's inequality)} \\ 
            & \leq C_{q, t} \E[qP^{Kt}]{ \norm{ \frac{1}{MK} \sum_{m, k} \nabla f_m \left( w_t; x_t^{(m, k)} \right) - \nabla F(w_t) } } \quad (\textrm{By using \eqref{eq:kernel_constant}}) \\
            & \leq C_{q, t} \E[\pi]{ \norm{ \frac{1}{MK} \sum_{m, k} \nabla f_m \left( w_t; x_t^{(m, k)} \right) - \nabla F(w_t) } } \\
            & + C_{q, t} \int_{\Omega} \norm{ \frac{1}{MK} \sum_{m, k} \nabla f_m \left( w_t; x_t^{(m, k)} \right) - \nabla F(w_t) } \left\lvert \frac{\mathrm{d}\left(qP^{Kt}\right)}{\mathrm{d}\rho} - \frac{\mathrm{d}\pi}{\mathrm{d}\rho} \right\rvert \mathrm{d}\rho \\
            & \leq \frac{4 C_{q, t}\sigma^2}{\nu_{ps}MK} + 2 C_{q, t} \sigma^2 \normtv{ qP^{Kt} - \pi }
    \end{align}

    In the last step, we follow \eqref{eq:concentration_ineq_2} to bound the squared error of the gradient estimator at the stationary measure and Assumption~\ref{assump:bounded_gradient_noise} together with the definition of the total variation norm (Appendix~\ref{appendix:markov},Definition~\ref{def:total_variation_def}).
\end{proof}

Observe that in the stationary case, for every $t\in\mathbb{N}$ the additional term $\|qP^{Kt} - \pi\|_{\mathrm{TV}}$ is zero, and $C_{q, t} = C_{\infty}$, and we recover the bound in Lemma~\ref{lem:concentration_ineq}. On the other hand, in the non-stationary case, while the total variation norm can be bound using \cite[Proposition 3.4]{pauline_concentration_ineq}, obtaining a tighter bound than the one in Lemma~\ref{lem:concentration_ineq} is challenging due to the term $C_{q,t}$.

\newpage
\section{Additional Experiments}\label{appendix:additional_experiments}

\subsection{Experimental Setting}\label{appendix:experiment_setting}
\textbf{Computing environment:} All the experiments performed in this paper are run entirely on many different CPU clusters provided by the Grid'5000 testbed, with different types of CPU (e.g., Intel Xeon E5-2698, Intel Xeon E5-2620, Intel Xeon E5-2630). All the software packages and datasets used for experiments in this paper are open-sourced, with the exact version provided. For the main experiments, we use 10 different random seeds, and report the average together with the $95\%$ confidence interval. Further detailed instructions to run the experiments are provided in the supplementary material.

In Section~\ref{sec:numerical}, we evaluated the performance of Local SGD, Minibatch SGD, and Local SGD with Momentum on the Beijing Multi-Site Air-Quality dataset. In this section, we provide details on the data preprocessing and the learning problem. 

The Beijing Multi-Site Air-Quality dataset consists of hourly measurements from 12 weather stations across China, collected between March 2013 and February 2017. In particular, the dataset consists of measurements of 6 pollutants (PM2.5, PM10, S02, CO2, CO, and O3) and 6 meteorological indicators (temperature, pressure, dew point, rainfall, wind speed, and wind direction). In total, there are more than 400,000 measurements of each variable.

\begin{figure}[htbp]
    \centering
    \begin{subfigure}{.31\linewidth}
        \begin{center}
            \centerline{\includegraphics[width=\columnwidth]{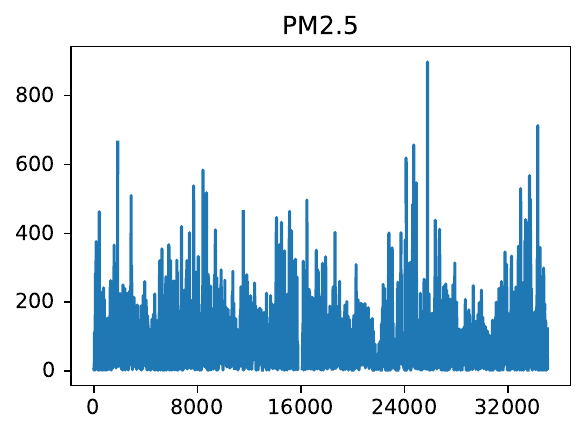}}
        \end{center}
    \end{subfigure}
    \begin{subfigure}{.31\linewidth}
        \begin{center}
            \centerline{\includegraphics[width=\columnwidth]{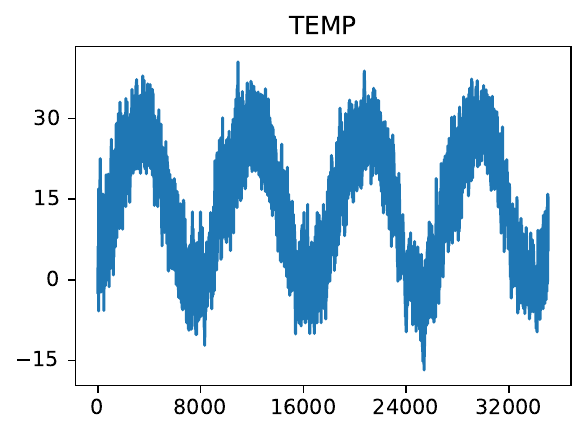}}
        \end{center}
    \end{subfigure}
    \begin{subfigure}{.31\linewidth}
        \begin{center}
            \centerline{\includegraphics[width=\columnwidth]{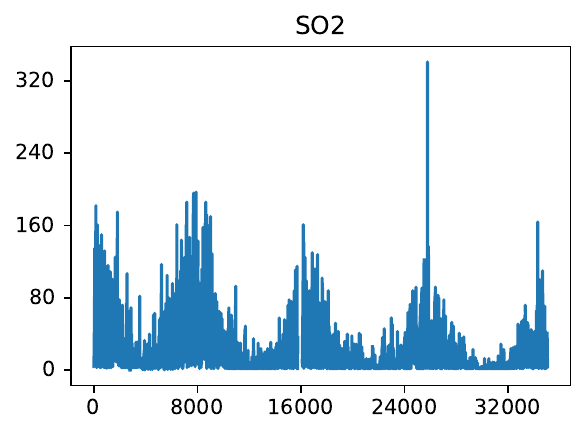}}
        \end{center}
    \end{subfigure}
    \caption{Original time series for PM2.5 pollution, temperature, and SO2 pollution. Observe the periodicity in the data indicating seasonality.}\label{fig:seasonality}
\end{figure}

\begin{figure}[htbp]
    \centering
    \begin{subfigure}{.31\linewidth}
        \begin{center}
            \centerline{\includegraphics[width=\columnwidth]{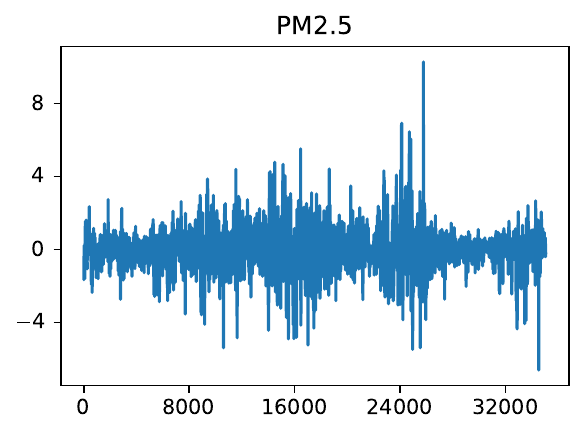}}
        \end{center}
    \end{subfigure}
    \begin{subfigure}{.31\linewidth}
        \begin{center}
            \centerline{\includegraphics[width=\columnwidth]{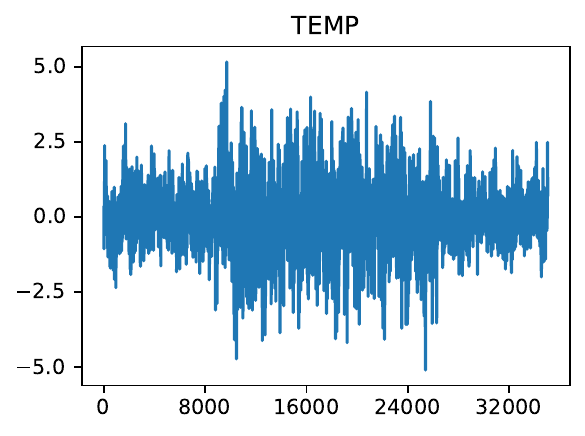}}
        \end{center}
    \end{subfigure}
    \begin{subfigure}{.31\linewidth}
        \begin{center}
            \centerline{\includegraphics[width=\columnwidth]{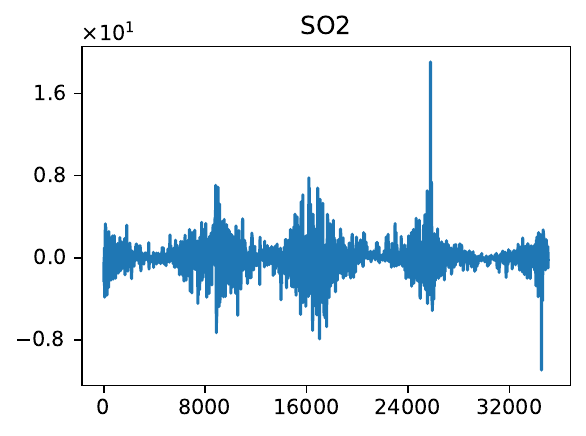}}
        \end{center}
    \end{subfigure}
    \caption{Time series of PM2.5, temperature, and SO2 with seasonality removed, which is utilized (along with the other pollution and meteorological variables) for the experiments.}\label{fig:seasonality_adjusted}
\end{figure}

\begin{figure}
    \centering
    \begin{subfigure}{.31\linewidth}
        \begin{center}
            \centerline{\includegraphics[width=\columnwidth]{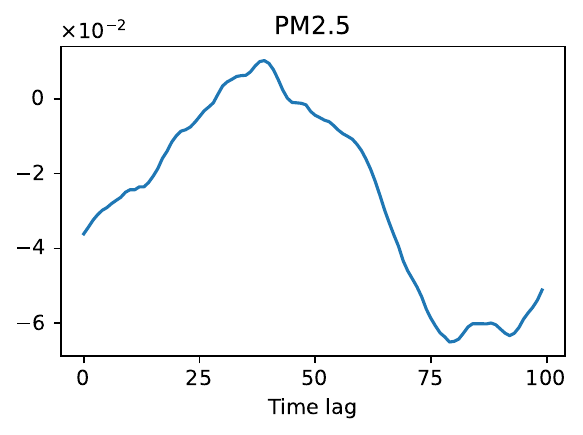}}
        \end{center}
    \end{subfigure}
    \begin{subfigure}{.31\linewidth}
        \begin{center}
            \centerline{\includegraphics[width=\columnwidth]{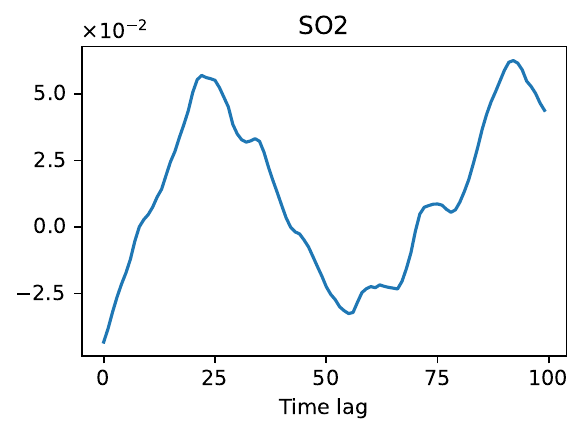}}
        \end{center}
    \end{subfigure}
    \begin{subfigure}{.31\linewidth}
        \begin{center}
            \centerline{\includegraphics[width=\columnwidth]{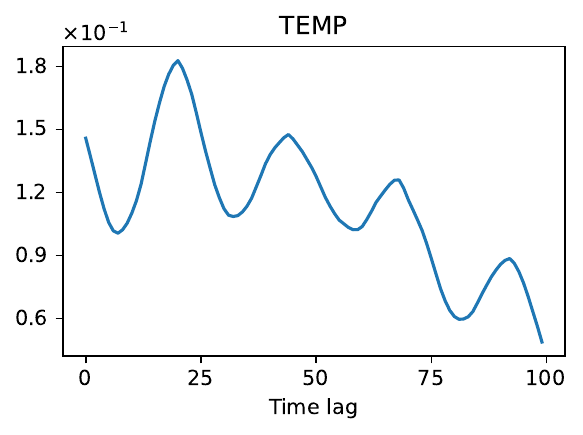}}
        \end{center}
    \end{subfigure}
    \caption{Auto-correlation of PM2.5, SO2, and temperature time series with seasonality removed. Observe that, particularly for the temperature, there are high levels of correlation at small lags, indicating dependence between adjacent samples.}\label{fig:auto-correlation}
\end{figure}

\begin{figure}
    \centering
    \begin{subfigure}{.3\linewidth}
        \begin{center}
            \centerline{\includegraphics[width=\columnwidth]{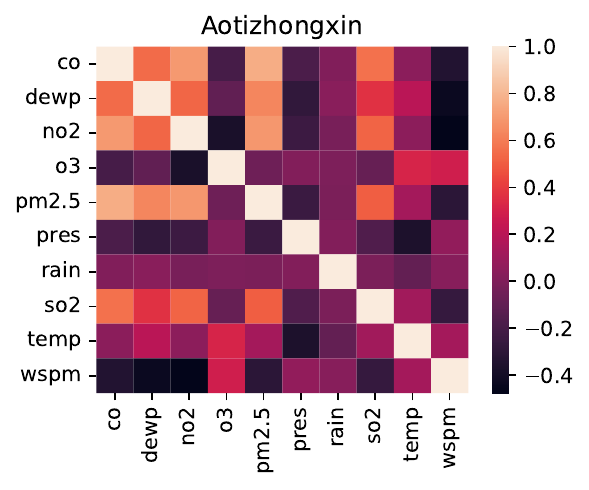}}
        \end{center}
    \end{subfigure}
    \begin{subfigure}{.3\linewidth}
        \begin{center}
            \centerline{\includegraphics[width=\columnwidth]{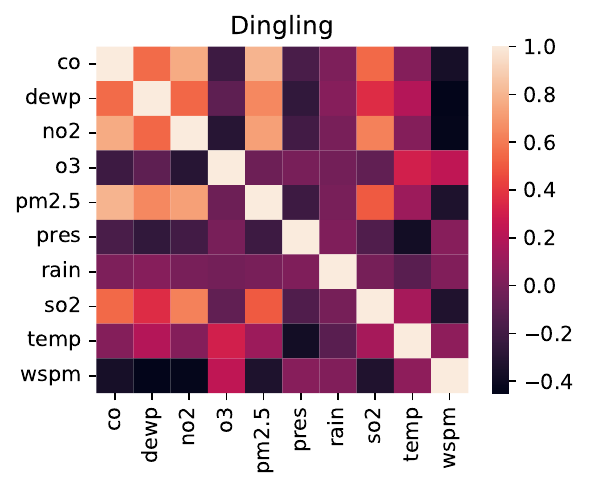}}
        \end{center}
    \end{subfigure}
    \begin{subfigure}{.3\linewidth}
        \begin{center}
            \centerline{\includegraphics[width=\columnwidth]{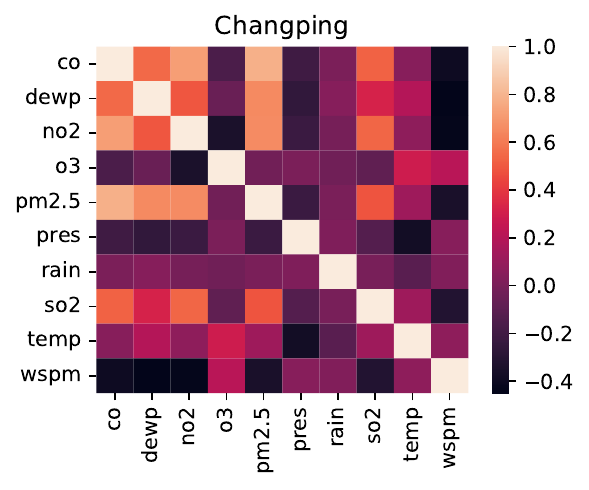}}
        \end{center}
    \end{subfigure}
    \caption{Correlation matrix of the data at three stations.}\label{fig:correlation_matrix}
\end{figure}

As illustrated in Figure~\ref{fig:seasonality}, the dataset has a strong seasonality effect. As we seek to experimentally validate our theoretical analysis, which requires convergence to a stationary distribution. As such, we estimate and remove the seasonality via the \textsc{statsmodel} python package \citep{statsmodels}. We then drop the PM10 and WD (wind direction) features, fill in the missing values with their average values during the last month, and normalize the data. In Figure~\ref{fig:seasonality_adjusted}, we plot some measures after these steps of preprocessing.

The focus of our theoretical analysis is on Markovian data. Pollution and meteorological data are commonly modelled via Markov chains. To verify that a Markovian model is relevant to the dataset, it is necessary to verify that there is dependence between samples. To this end, in Fig.~\ref{fig:auto-correlation} we plot the autocorrelation of the PM2.5 concentration time series. Observe that while the autocorrelation decreases as the lag increases, which indicates mixing. On the other hand, the autocorrelation with a small lag remains high, indicating the presence of dependence and the need for non-independent and, in particular, Markovian models for the evolution of the PM2.5 concentration.

The learning problem we consider is linear regression with the PM2.5 measurements as the response, and the meteorological variables and the other pollution variables utilized as covariates. From the correlation matrices in Figure~\ref{fig:correlation_matrix}, we can see that there is a strong correlation between PM2.5 and CO, DEWP (dew point), and NO2, while other features are less relevant. Hence, we consider the following loss function:
\begin{align}
    f\left(w; (x, y)\right) &= \left(w^Tx - y\right)^2 + \lambda r(w),\\
    r(w) &= \frac{1}{2} \sum_{i=1}^{d=9} \frac{w_i^2}{1+w_i^2}.
\end{align}
where $y \in \mathbb{R}$ is the PM2.5 measure, $x\in\mathbb{R}^9$ is the vector of other variables measured at the same hour, and $w \in \mathbb{R}^d$ is the model parameter. The non-convex regularizer encourages small weights for irrelevant features.

To allocate the data over different clients, for each client we randomly sample $n \in \{6, 12\}$ consecutive months from the first $36$ months from all the sites with seasonality removal. The remaining $12$ months are reserved for testing.

\subsection{Step Size Scaling with the Number of Local Steps}

We remind here the upper bounds for Local SGD in Theorem~\ref{theorem:local_sgd}:
\begin{equation}
    \begin{split}
        \mathbb{E} \left[ \left\lVert \nabla F(\hat{w}_T) \right\rVert^2 \right]
        \leq \mathcal{O} & \left( \frac{\Delta_0}{\eta K T}
        + \frac{C_{\infty} \sigma^2}{\nu_{ps}MK} \right.
         \left.+ \frac{L K \eta\left(\theta^2+\sigma^2\right)}{\delta^2}  \right).
    \end{split}
\end{equation}

In Section~\ref{sec:numerical}, we perform experiments with a fixed local step size $\eta$ to better illustrate the heterogeneity effect on Local SGD when $K$ increases. In this section, we perform the same experiments as in Figure~\ref{fig:grad_norm}, but with the local step size $\eta$ scaled as $b/K$, where $b \in \left\{0.1, 0.01\right\}$. This replaces the fixed local step size $\eta=0.01$ in the original experiments. We report the results after $10$ different seeds together with the $95\%$ confidence interval in Table~\ref{tab:scaling_step_size}.

From the above upper bound, we note that, for moderate values of the initial suboptimality gap and heterogeneity:
\begin{itemize}
    \item If $K$ is large, then the bound is dominated by the heterogeneity term.
    \item If $K$ is small, then the bound is dominated by the variance term.
\end{itemize}

Choosing the step size as $\eta \propto 1/K$ leaves the first and third error terms unchanged with  $K$, while the second term decays with increasing $K$. Hence, when this second term dominates for small $K$, the gradient norm $||\nabla F(w)||^2$ should decrease before flattening out at a positive plateau. The patterns shown in each row align with this qualitative prediction.
    
The comparison of the two rows also supports the fact that the "variance" term cannot be controlled by the step size, which was discussed in Appendix~\ref{appendix:comparison}. Indeed, for a fixed $K$, reducing the step size by a factor of $10$ does not improve the performance. On the contrary, the gradient norm appears to increase for smaller $\eta$. This can be explained by the first term in the bound, associated with the suboptimality gap, which scales inversely with the step size.

\begin{table}[t]
    \centering
    \caption{Mean gradient norm of Local SGD}
    \label{tab:scaling_step_size}
    \begin{tabular}{c|c|c|c}
        \toprule
        $\eta$ & $K=10$ & $K=50$ & $K=100$ \\
        \midrule
        $0.1/K$  & $0.00506 \pm 0.00076$ & $0.00307 \pm 0.00041$ & $0.00233 \pm 0.00030$ \\
        $0.01/K$ & $0.01248 \pm 0.00279$ & $0.00921 \pm 0.00210$ & $0.01190 \pm 0.00245$ \\
        \bottomrule
    \end{tabular}
\end{table}

\subsection{Additional Experiments on Synthetic Data}\label{appendix:additional_experiments_1}

\begin{figure}
    \centering
    \begin{subfigure}{.32\linewidth}
        \begin{center}
            \centerline{\includegraphics[width=\columnwidth]{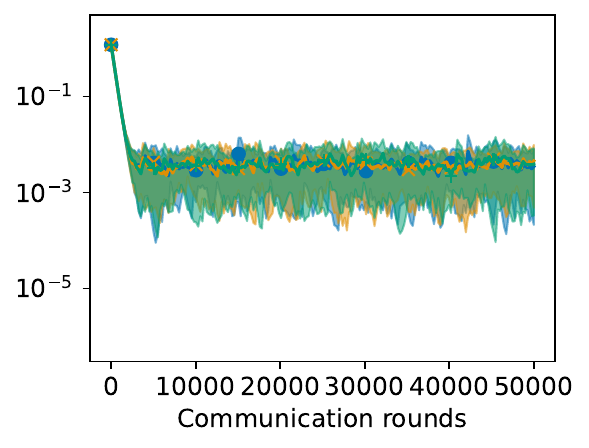}}
            \caption{$ K = 10 $}\label{fig:grad_norm_M=10_K=10}
        \end{center}
    \end{subfigure}
    \hfill
    \begin{subfigure}{.32\linewidth}
        \begin{center}
            \centerline{\includegraphics[width=\columnwidth]{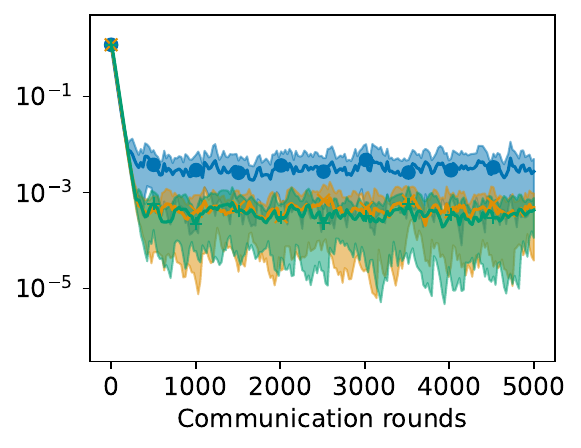}}
            \caption{$ K = 100 $}\label{fig:grad_norm_M=10_K=100}
        \end{center}
    \end{subfigure}
    \begin{subfigure}{.32\linewidth}
        \begin{center}
            \centerline{\includegraphics[width=\columnwidth]{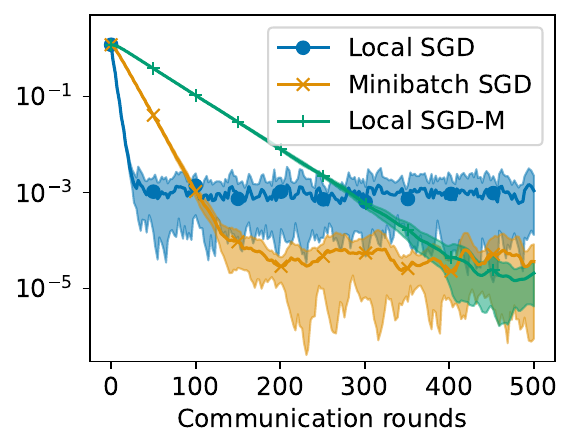}}
            \caption{$ K = 1000 $}\label{fig:grad_norm_M=10_K=1000}
        \end{center}
    \end{subfigure}
    \caption{Gradient norm as a function of the number of communication rounds
    for Local SGD, Minibatch SGD, and Local SGD-M, with 10 clients, $ \gamma = 0.01, \eta =
    0.001$, $ \beta = 0.1 $, and the mixing time for every client $\tau_m = 100$.}
    \label{fig:grad_norm}
\end{figure}
\begin{figure}
    \centering
    \begin{subfigure}{.3\linewidth}
        \begin{center}
            \centerline{\includegraphics[width=\columnwidth]{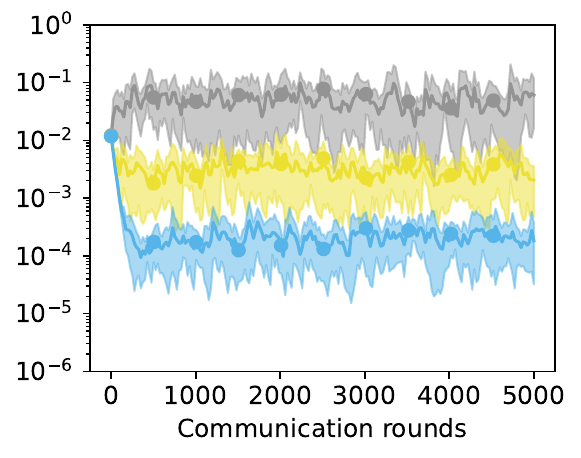}}
            \caption{Local SGD}
        \end{center}
    \end{subfigure}
    \begin{subfigure}{.3\linewidth}
        \begin{center}
            \centerline{\includegraphics[width=\columnwidth]{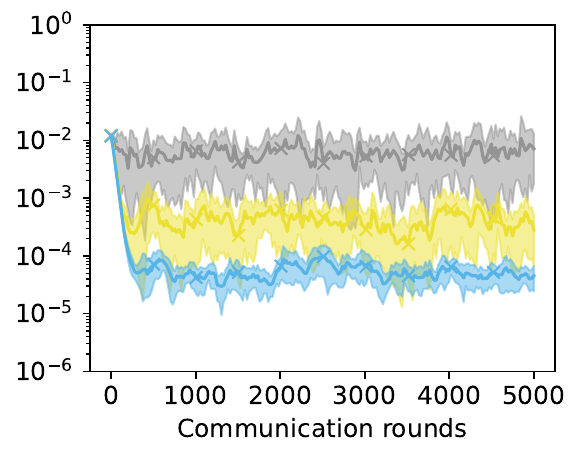}}
            \caption{Minibatch SGD}
        \end{center}
    \end{subfigure}
    \begin{subfigure}{.3\linewidth}
        \begin{center}
            \centerline{\includegraphics[width=\columnwidth]{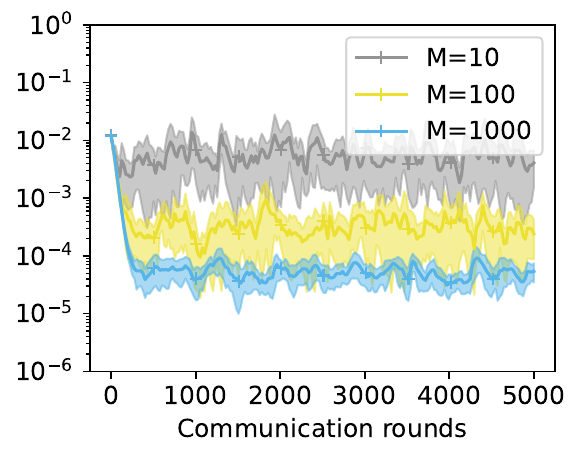}}
            \caption{Local SGD-M}
        \end{center}
    \end{subfigure}
    \caption{Gradient norm as a function of the number of communication rounds
    for Local SGD, Minibatch SGD \& Local SGD-M, with $ \gamma = 0.01, \eta = 0.001$, $ \beta = 0.1 $, $\lambda = 0.01$, $\tau_m = 100$ for all $m$ and $ K = 100 $.}
    \label{fig:speed_up_synthetic}
\end{figure}

\begin{figure}
    \begin{subfigure}{.3\linewidth}
        \begin{center}
            \centerline{\includegraphics[width=\columnwidth]{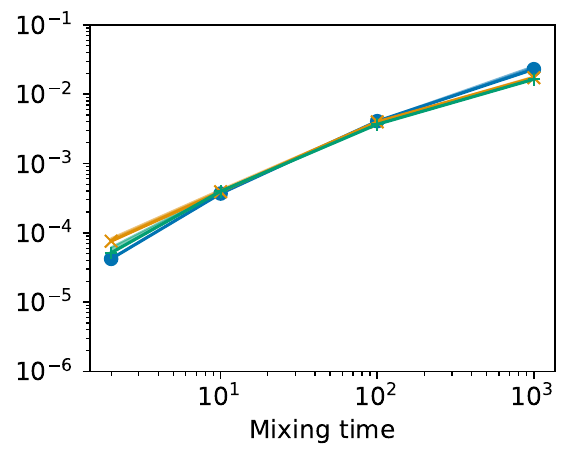}}
            \caption{$K = 10 $}
        \end{center}
    \end{subfigure}
    \begin{subfigure}{.3\linewidth}

        \begin{center}
            \centerline{\includegraphics[width=\columnwidth]{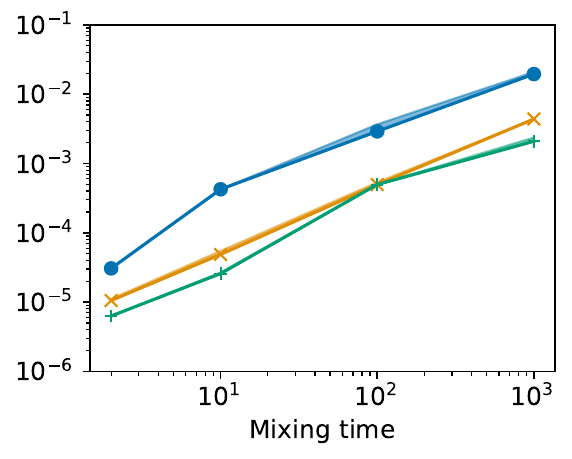}}
            \caption{$ K = 100 $}
        \end{center}
    \end{subfigure}
    \begin{subfigure}{.3\linewidth}
        \begin{center}
            \centerline{\includegraphics[width=\columnwidth]{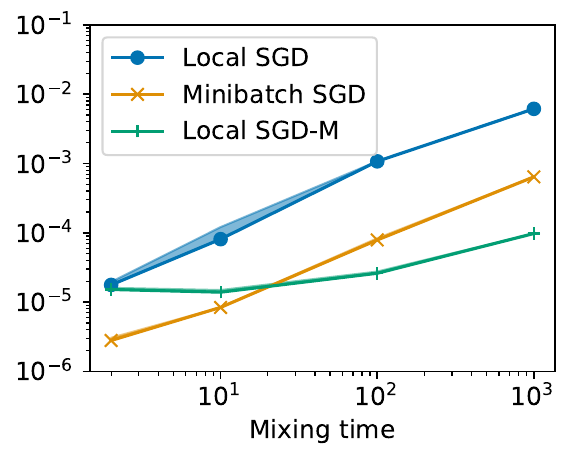}}
            \caption{$ K = 1000 $}
        \end{center}
    \end{subfigure}
    \caption{The gradient norm of the last iterate of 3 algorithms as a function of the
    mixing time with $ \gamma = 0.01, \eta = 0.001$, $ \beta = 0.1 $, $\lambda = 0.01$, $100$ clients and different numbers of local steps.}
    \label{fig:effect_mixing_time_appendix}
\end{figure}

In this section, we experimentally evaluate the performance of Minibatch SGD, local SGD, and local SGD-M on the
linear regression problem with non-convex regularization with synthetic Markovian data.
In the following experiments, we study
the average performance over $10$ random seeds in a setting similar to \cite{pmlr-v162-dorfman22a}.

Let $ \mathcal{U} \left( a, b \right) $ denote the uniform probability distribution over $[a,b]$.
The data stream for each client $m$ is generated by a two-state
Markov chain $(i_t^{(m,k)},~k \in [K],t \in \mathbb{N})$, where the probability of jumping from one state to another is $ p \in
\left( 0, 1 \right) $, with mixing time $\Theta(1/p)$. We note that for reversible Markov chains, as considered in these experiments, the mixing time is related to the spectral gap via 
\begin{align}
    \frac{1}{\nu_{ps}} \leq 2\tau_{mix}
\end{align}
Associated with each state $i \in \{0,1\}$ are the vectors $V_{m,i} \in \mathbb{R}^{10}$ which are drawn randomly for each seed according to $ \mathcal{U} \left( 0, 1 \right) $. For each seed and $i \in \{0,1\}$, half of the optimal parameters $w_{m,i} \in \mathbb{R}^{10}$ take the value $w_{i}^1 \sim \mathcal{U}(0,1)$, while the other half take the value $w_i^2 \sim \mathcal{U}(1,2)$.

The samples associated with client $m$ correspond to the observations
\begin{align}
    x_t^{(m,k)} = w_{m,i_t^{(m,k)}}^TV_{m,i_t^{(m,k)}} + \epsilon_t^{(m,k)},
\end{align}
where $\epsilon_{t}^{(m,k)} \sim \mathcal{U}(0,0.01),~t \in \mathbb{N}, m \in [M], k \in [K]$.

Since the Markov chain $(i_t^{(m,k)},~k \in [K],t \in \mathbb{N})$ is symmetric,
the stationary distribution for every client $m$ is uniform; i.e., $ \pi_m(0) = \pi_m(1) = 1/2$.
The local objective function for each client $m$ is then given by
\begin{align}\label{eq:local_loss}
    F_m(w) =& \frac{1}{2} \left[ \left( w - w_{m, 1} \right)^T V _{ m, 1 }\right]^2 \\ 
        &+ \frac{1}{2} \left[ \left( w - w_{m, 2} \right) ^T V _{ m, 2 }\right]^2 \notag
        + \lambda r (w),
\end{align}

where $ r(w) = \frac{1}{2} \sum_{d=1}^{10}
\frac{w_{[d]}^2}{1 + w_{[d]}^2}$ is a non-convex regularizer often used in robust non-convex and smooth stochastic optimization \citep{pmlr-v139-tran21b, NEURIPS2023_momentum_ef}.

\textbf{Impact of Heterogeneity and Number of Clients}: Figure~\ref{fig:grad_norm} and~\ref{fig:speed_up_synthetic} confirm that synthetic data experiments replicate the trends observed on real-world data: Local SGD’s performance degrades under heterogeneity, while all three algorithms improve with more clients—consistently supporting our theoretical analysis.

\textbf{Effect of mixing time}: In Figure~\ref{fig:effect_mixing_time_appendix}, we plot the gradient
norm of the last iterate of Minibatch SGD, Local SGD, and Local SGD-M as a function of the mixing time. As 
predicted by our analysis, the performance of Minibatch SGD, Local SGD, and Local SGD-M degrades when the mixing time increases.

\newpage
\section{Limitations}\label{appendix:limitations}

\textbf{Heterogeneity and Smoothness Assumptions:} In our work, we have imposed the bounded gradient dissimilarity (BGD) assumption. As noted in Section~\ref{sec:assumptions}, this assumption was first introduced in \citep{karimireddy_scaffold_2021} and has since become ubiquitous in the analysis of local SGD \citep{yun2022reshuffling}. We highlight that the BGD assumption includes the assumptions in \cite{NEURIPS2020_woodworth,Li2020On}. While weaker heterogeneity assumptions exist in the literature, these are not straightforward to apply for non-convex objectives where the BGD assumption is standard \cite{pmlr-v119-koloskova20a}. 

\textbf{Statistical Assumptions:} The focus of this work is on time series data with temporal dependence modeled via Markov chains. These models are widely used in many areas of statistical modeling, ranging from physical and biological systems to queuing systems. On the other hand, Markovian models are not the only common time series models, and, for example, do not include all autoregressive models. 

We also assumed that the stochastic gradient estimates have bounded errors. Nevertheless, this is a common assumption in stochastic optimization with bounded noise. %

\textbf{Optimality:} At present, tight lower bounds for Markovian data are not known. As such, we were not able to establish the optimality of our algorithm. Instead, we have compared with the lower bound in the i.i.d. setting. We are only able to assert that in this setting, Minibatch SGD and Local SGD with momentum can at least match the lower bound on the communication complexity, but at the cost of performing more local computation (larger $K$). Nevertheless, we note that the $1/\epsilon^3$ lower bound in the i.i.d. setting is achieved by more complex algorithms than those considered in this paper. For example, Algorithm 2 in \cite{cheng2024momentum} utilizes updates in round $t$ that require evaluation of the gradient at the global model in round $t-1$.

\textbf{Experimental Validation:} A challenge in numerically understanding the performance of FL algorithms with Markovian sampling is the strong impact of the mixing time of the data-generating process. In real data sets, such as the pollution data in Section~\ref{sec:numerical}, the mixing time cannot be easily controlled, which means that a comparison of the algorithms in different regimes is more challenging. To study the impact of the pseudo spectral gap or the mixing time, we instead relied on synthetic experiments in Section~\ref{appendix:additional_experiments_1}.

\newpage
\section*{NeurIPS Paper Checklist}

The checklist is designed to encourage best practices for responsible machine learning research, addressing issues of reproducibility, transparency, research ethics, and societal impact. Do not remove the checklist: {\bf The papers not including the checklist will be desk rejected.} The checklist should follow the references and follow the (optional) supplemental material.  The checklist does NOT count towards the page
limit. 

Please read the checklist guidelines carefully for information on how to answer these questions. For each question in the checklist:
\begin{itemize}
    \item You should answer \answerYes{}, \answerNo{}, or \answerNA{}.
    \item \answerNA{} means either that the question is Not Applicable for that particular paper or the relevant information is Not Available.
    \item Please provide a short (1–2 sentence) justification right after your answer (even for NA). 
\end{itemize}

{\bf The checklist answers are an integral part of your paper submission.} They are visible to the reviewers, area chairs, senior area chairs, and ethics reviewers. You will be asked to also include it (after eventual revisions) with the final version of your paper, and its final version will be published with the paper.

The reviewers of your paper will be asked to use the checklist as one of the factors in their evaluation. While "\answerYes{}" is generally preferable to "\answerNo{}", it is perfectly acceptable to answer "\answerNo{}" provided a proper justification is given (e.g., "error bars are not reported because it would be too computationally expensive" or "we were unable to find the license for the dataset we used"). In general, answering "\answerNo{}" or "\answerNA{}" is not grounds for rejection. While the questions are phrased in a binary way, we acknowledge that the true answer is often more nuanced, so please just use your best judgment and write a justification to elaborate. All supporting evidence can appear either in the main paper or the supplemental material, provided in appendix. If you answer \answerYes{} to a question, in the justification please point to the section(s) where related material for the question can be found.

\begin{enumerate}

\item {\bf Claims}
    \item[] Question: Do the main claims made in the abstract and introduction accurately reflect the paper's contributions and scope?
    \item[] Answer: \answerYes{} %
    \item[] Justification: We provide theoretical proofs for the claims made in the abstract in Section~\ref{sec:results}. In Section~\ref{sec:numerical}, we experimentally validate our analysis on real-world time-series data. 
    \item[] Guidelines:
    \begin{itemize}
        \item The answer NA means that the abstract and introduction do not include the claims made in the paper.
        \item The abstract and/or introduction should clearly state the claims made, including the contributions made in the paper and important assumptions and limitations. A No or NA answer to this question will not be perceived well by the reviewers. 
        \item The claims made should match theoretical and experimental results, and reflect how much the results can be expected to generalize to other settings. 
        \item It is fine to include aspirational goals as motivation as long as it is clear that these goals are not attained by the paper. 
    \end{itemize}

\item {\bf Limitations}
    \item[] Question: Does the paper discuss the limitations of the work performed by the authors?
    \item[] Answer: \answerYes{} %
    \item[] Justification: In Section~\ref{sec:setup} and Section~\ref{sec:assumptions}, we provide discussions about our assumptions, with regard to other assumptions that have been used in the literature. In Section~\ref{sec:discussion}, from our theoretical analysis, we point out some limitations of FL algorithms when applied to Markovian data. A separate section about limitations of our work is given in Appendix~\ref{appendix:limitations}.
    \item[] Guidelines:
    \begin{itemize}
        \item The answer NA means that the paper has no limitation while the answer No means that the paper has limitations, but those are not discussed in the paper. 
        \item The authors are encouraged to create a separate "Limitations" section in their paper.
        \item The paper should point out any strong assumptions and how robust the results are to violations of these assumptions (e.g., independence assumptions, noiseless settings, model well-specification, asymptotic approximations only holding locally). The authors should reflect on how these assumptions might be violated in practice and what the implications would be.
        \item The authors should reflect on the scope of the claims made, e.g., if the approach was only tested on a few datasets or with a few runs. In general, empirical results often depend on implicit assumptions, which should be articulated.
        \item The authors should reflect on the factors that influence the performance of the approach. For example, a facial recognition algorithm may perform poorly when image resolution is low or images are taken in low lighting. Or a speech-to-text system might not be used reliably to provide closed captions for online lectures because it fails to handle technical jargon.
        \item The authors should discuss the computational efficiency of the proposed algorithms and how they scale with dataset size.
        \item If applicable, the authors should discuss possible limitations of their approach to address problems of privacy and fairness.
        \item While the authors might fear that complete honesty about limitations might be used by reviewers as grounds for rejection, a worse outcome might be that reviewers discover limitations that aren't acknowledged in the paper. The authors should use their best judgment and recognize that individual actions in favor of transparency play an important role in developing norms that preserve the integrity of the community. Reviewers will be specifically instructed to not penalize honesty concerning limitations.
    \end{itemize}

\item {\bf Theory assumptions and proofs}
    \item[] Question: For each theoretical result, does the paper provide the full set of assumptions and a complete (and correct) proof?
    \item[] Answer: \answerYes{} %
    \item[] Justification: We clearly state our set of assumptions in Section~\ref{sec:markovian_data_streams} and Section~\ref{sec:assumptions}. Our main theoretical results are presented in Section~\ref{sec:results}, with explicit references to the relevant assumptions and detailed proofs in the Appendix.
    \item[] Guidelines:
    \begin{itemize}
        \item The answer NA means that the paper does not include theoretical results. 
        \item All the theorems, formulas, and proofs in the paper should be numbered and cross-referenced.
        \item All assumptions should be clearly stated or referenced in the statement of any theorems.
        \item The proofs can either appear in the main paper or the supplemental material, but if they appear in the supplemental material, the authors are encouraged to provide a short proof sketch to provide intuition. 
        \item Inversely, any informal proof provided in the core of the paper should be complemented by formal proofs provided in appendix or supplemental material.
        \item Theorems and Lemmas that the proof relies upon should be properly referenced. 
    \end{itemize}

    \item {\bf Experimental result reproducibility}
    \item[] Question: Does the paper fully disclose all the information needed to reproduce the main experimental results of the paper to the extent that it affects the main claims and/or conclusions of the paper (regardless of whether the code and data are provided or not)?
    \item[] Answer: \answerYes{} %
    \item[] Justification: Details about computing environment and experimental settings are given in Appendix~\ref{appendix:experiment_setting}. Code is provided in the supplemental material with detailed instructions.
    \item[] Guidelines:
    \begin{itemize}
        \item The answer NA means that the paper does not include experiments.
        \item If the paper includes experiments, a No answer to this question will not be perceived well by the reviewers: Making the paper reproducible is important, regardless of whether the code and data are provided or not.
        \item If the contribution is a dataset and/or model, the authors should describe the steps taken to make their results reproducible or verifiable. 
        \item Depending on the contribution, reproducibility can be accomplished in various ways. For example, if the contribution is a novel architecture, describing the architecture fully might suffice, or if the contribution is a specific model and empirical evaluation, it may be necessary to either make it possible for others to replicate the model with the same dataset, or provide access to the model. In general. releasing code and data is often one good way to accomplish this, but reproducibility can also be provided via detailed instructions for how to replicate the results, access to a hosted model (e.g., in the case of a large language model), releasing of a model checkpoint, or other means that are appropriate to the research performed.
        \item While NeurIPS does not require releasing code, the conference does require all submissions to provide some reasonable avenue for reproducibility, which may depend on the nature of the contribution. For example
        \begin{enumerate}
            \item If the contribution is primarily a new algorithm, the paper should make it clear how to reproduce that algorithm.
            \item If the contribution is primarily a new model architecture, the paper should describe the architecture clearly and fully.
            \item If the contribution is a new model (e.g., a large language model), then there should either be a way to access this model for reproducing the results or a way to reproduce the model (e.g., with an open-source dataset or instructions for how to construct the dataset).
            \item We recognize that reproducibility may be tricky in some cases, in which case authors are welcome to describe the particular way they provide for reproducibility. In the case of closed-source models, it may be that access to the model is limited in some way (e.g., to registered users), but it should be possible for other researchers to have some path to reproducing or verifying the results.
        \end{enumerate}
    \end{itemize}

\item {\bf Open access to data and code}
    \item[] Question: Does the paper provide open access to the data and code, with sufficient instructions to faithfully reproduce the main experimental results, as described in supplemental material?
    \item[] Answer: \answerYes{} %
    \item[] Justification: The code is provided in the supplemental material together with sufficient instructions to reproduce the main results of the paper. The computing environment is described in Appendix~\ref{appendix:experiment_setting}. We use an open-source dataset distributed under the CC BY 4.0 license, cited in line 302.
    \item[] Guidelines:
    \begin{itemize}
        \item The answer NA means that paper does not include experiments requiring code.
        \item Please see the NeurIPS code and data submission guidelines (\url{https://nips.cc/public/guides/CodeSubmissionPolicy}) for more details.
        \item While we encourage the release of code and data, we understand that this might not be possible, so “No” is an acceptable answer. Papers cannot be rejected simply for not including code, unless this is central to the contribution (e.g., for a new open-source benchmark).
        \item The instructions should contain the exact command and environment needed to run to reproduce the results. See the NeurIPS code and data submission guidelines (\url{https://nips.cc/public/guides/CodeSubmissionPolicy}) for more details.
        \item The authors should provide instructions on data access and preparation, including how to access the raw data, preprocessed data, intermediate data, and generated data, etc.
        \item The authors should provide scripts to reproduce all experimental results for the new proposed method and baselines. If only a subset of experiments are reproducible, they should state which ones are omitted from the script and why.
        \item At submission time, to preserve anonymity, the authors should release anonymized versions (if applicable).
        \item Providing as much information as possible in supplemental material (appended to the paper) is recommended, but including URLs to data and code is permitted.
    \end{itemize}

\item {\bf Experimental setting/details}
    \item[] Question: Does the paper specify all the training and test details (e.g., data splits, hyperparameters, how they were chosen, type of optimizer, etc.) necessary to understand the results?
    \item[] Answer: \answerYes{} %
    \item[] Justification: We provide all the experimental settings in Appendix~\ref{appendix:experiment_setting}.
    \item[] Guidelines:
    \begin{itemize}
        \item The answer NA means that the paper does not include experiments.
        \item The experimental setting should be presented in the core of the paper to a level of detail that is necessary to appreciate the results and make sense of them.
        \item The full details can be provided either with the code, in appendix, or as supplemental material.
    \end{itemize}

\item {\bf Experiment statistical significance}
    \item[] Question: Does the paper report error bars suitably and correctly defined or other appropriate information about the statistical significance of the experiments?
    \item[] Answer: \answerYes{} %
    \item[] Justification: We run each experiment with 10 different random seeds and report the average values together with the confidence intervals.
    \item[] Guidelines:
    \begin{itemize}
        \item The answer NA means that the paper does not include experiments.
        \item The authors should answer "Yes" if the results are accompanied by error bars, confidence intervals, or statistical significance tests, at least for the experiments that support the main claims of the paper.
        \item The factors of variability that the error bars are capturing should be clearly stated (for example, train/test split, initialization, random drawing of some parameter, or overall run with given experimental conditions).
        \item The method for calculating the error bars should be explained (closed form formula, call to a library function, bootstrap, etc.)
        \item The assumptions made should be given (e.g., Normally distributed errors).
        \item It should be clear whether the error bar is the standard deviation or the standard error of the mean.
        \item It is OK to report 1-sigma error bars, but one should state it. The authors should preferably report a 2-sigma error bar than state that they have a 96\% CI, if the hypothesis of Normality of errors is not verified.
        \item For asymmetric distributions, the authors should be careful not to show in tables or figures symmetric error bars that would yield results that are out of range (e.g. negative error rates).
        \item If error bars are reported in tables or plots, The authors should explain in the text how they were calculated and reference the corresponding figures or tables in the text.
    \end{itemize}

\item {\bf Experiments compute resources}
    \item[] Question: For each experiment, does the paper provide sufficient information on the computer resources (type of compute workers, memory, time of execution) needed to reproduce the experiments?
    \item[] Answer: \answerYes{} %
    \item[] Justification: We describe the computing environment and all the details about the experimental setup in Appendix~\ref{appendix:experiment_setting}.
    \item[] Guidelines:
    \begin{itemize}
        \item The answer NA means that the paper does not include experiments.
        \item The paper should indicate the type of compute workers CPU or GPU, internal cluster, or cloud provider, including relevant memory and storage.
        \item The paper should provide the amount of compute required for each of the individual experimental runs as well as estimate the total compute. 
        \item The paper should disclose whether the full research project required more compute than the experiments reported in the paper (e.g., preliminary or failed experiments that didn't make it into the paper). 
    \end{itemize}
    
\item {\bf Code of ethics}
    \item[] Question: Does the research conducted in the paper conform, in every respect, with the NeurIPS Code of Ethics \url{https://neurips.cc/public/EthicsGuidelines}?
    \item[] Answer: \answerYes{} %
    \item[] Justification: Our paper is theoretical and analyzes convergence properties of Federated Learning algorithms. It does not involve human subjects, personal data, or application domains with known ethical concerns. Therefore, the work complies with the NeurIPS Code of Ethics.
    \item[] Guidelines:
    \begin{itemize}
        \item The answer NA means that the authors have not reviewed the NeurIPS Code of Ethics.
        \item If the authors answer No, they should explain the special circumstances that require a deviation from the Code of Ethics.
        \item The authors should make sure to preserve anonymity (e.g., if there is a special consideration due to laws or regulations in their jurisdiction).
    \end{itemize}

\item {\bf Broader impacts}
    \item[] Question: Does the paper discuss both potential positive societal impacts and negative societal impacts of the work performed?
    \item[] Answer: \answerNA{} %
    \item[] Justification: Our paper focuses on the convergence analysis of Federated Learning algorithms. It does not propose or implement any specific application, dataset, or deployment scenario. As such, the work does not have direct societal implications or foreseeable negative societal impacts at this stage. Any potential societal effects would depend on downstream applications, which are outside the scope of this study.
    \item[] Guidelines:
    \begin{itemize}
        \item The answer NA means that there is no societal impact of the work performed.
        \item If the authors answer NA or No, they should explain why their work has no societal impact or why the paper does not address societal impact.
        \item Examples of negative societal impacts include potential malicious or unintended uses (e.g., disinformation, generating fake profiles, surveillance), fairness considerations (e.g., deployment of technologies that could make decisions that unfairly impact specific groups), privacy considerations, and security considerations.
        \item The conference expects that many papers will be foundational research and not tied to particular applications, let alone deployments. However, if there is a direct path to any negative applications, the authors should point it out. For example, it is legitimate to point out that an improvement in the quality of generative models could be used to generate deepfakes for disinformation. On the other hand, it is not needed to point out that a generic algorithm for optimizing neural networks could enable people to train models that generate Deepfakes faster.
        \item The authors should consider possible harms that could arise when the technology is being used as intended and functioning correctly, harms that could arise when the technology is being used as intended but gives incorrect results, and harms following from (intentional or unintentional) misuse of the technology.
        \item If there are negative societal impacts, the authors could also discuss possible mitigation strategies (e.g., gated release of models, providing defenses in addition to attacks, mechanisms for monitoring misuse, mechanisms to monitor how a system learns from feedback over time, improving the efficiency and accessibility of ML).
    \end{itemize}
    
\item {\bf Safeguards}
    \item[] Question: Does the paper describe safeguards that have been put in place for responsible release of data or models that have a high risk for misuse (e.g., pretrained language models, image generators, or scraped datasets)?
    \item[] Answer: \answerNA{} %
    \item[] Justification: Our paper provides a theoretical analysis of convergence in Federated Learning algorithms and does not introduce new models, datasets, or implementation methods that could be directly applied in practice. The results are abstract and foundational, with no foreseeable misuse potential. Therefore, no specific safeguards are necessary.
    \item[] Guidelines:
    \begin{itemize}
        \item The answer NA means that the paper poses no such risks.
        \item Released models that have a high risk for misuse or dual-use should be released with necessary safeguards to allow for controlled use of the model, for example by requiring that users adhere to usage guidelines or restrictions to access the model or implementing safety filters. 
        \item Datasets that have been scraped from the Internet could pose safety risks. The authors should describe how they avoided releasing unsafe images.
        \item We recognize that providing effective safeguards is challenging, and many papers do not require this, but we encourage authors to take this into account and make a best faith effort.
    \end{itemize}

\item {\bf Licenses for existing assets}
    \item[] Question: Are the creators or original owners of assets (e.g., code, data, models), used in the paper, properly credited and are the license and terms of use explicitly mentioned and properly respected?
    \item[] Answer: \answerYes{} %
    \item[] Justification: The experiments are carried out using open-source dataset distributed under the CC BY 4.0 license. We cite the original paper that produced the dataset in line 302.
    \item[] Guidelines:
    \begin{itemize}
        \item The answer NA means that the paper does not use existing assets.
        \item The authors should cite the original paper that produced the code package or dataset.
        \item The authors should state which version of the asset is used and, if possible, include a URL.
        \item The name of the license (e.g., CC-BY 4.0) should be included for each asset.
        \item For scraped data from a particular source (e.g., website), the copyright and terms of service of that source should be provided.
        \item If assets are released, the license, copyright information, and terms of use in the package should be provided. For popular datasets, \url{paperswithcode.com/datasets} has curated licenses for some datasets. Their licensing guide can help determine the license of a dataset.
        \item For existing datasets that are re-packaged, both the original license and the license of the derived asset (if it has changed) should be provided.
        \item If this information is not available online, the authors are encouraged to reach out to the asset's creators.
    \end{itemize}

\item {\bf New assets}
    \item[] Question: Are new assets introduced in the paper well documented and is the documentation provided alongside the assets?
    \item[] Answer: \answerNA{} %
    \item[] Justification: Our paper is theoretical and does not release any new assets. The code is provided in the supplemental material under the MIT license with fully detailed instruction.
    \item[] Guidelines:
    \begin{itemize}
        \item The answer NA means that the paper does not release new assets.
        \item Researchers should communicate the details of the dataset/code/model as part of their submissions via structured templates. This includes details about training, license, limitations, etc. 
        \item The paper should discuss whether and how consent was obtained from people whose asset is used.
        \item At submission time, remember to anonymize your assets (if applicable). You can either create an anonymized URL or include an anonymized zip file.
    \end{itemize}

\item {\bf Crowdsourcing and research with human subjects}
    \item[] Question: For crowdsourcing experiments and research with human subjects, does the paper include the full text of instructions given to participants and screenshots, if applicable, as well as details about compensation (if any)? 
    \item[] Answer: \answerNA{} %
    \item[] Justification: Our paper is theoretical, hence it does not involve any crowdsourcing nor research with human subjects.
    \item[] Guidelines:
    \begin{itemize}
        \item The answer NA means that the paper does not involve crowdsourcing nor research with human subjects.
        \item Including this information in the supplemental material is fine, but if the main contribution of the paper involves human subjects, then as much detail as possible should be included in the main paper. 
        \item According to the NeurIPS Code of Ethics, workers involved in data collection, curation, or other labor should be paid at least the minimum wage in the country of the data collector. 
    \end{itemize}

\item {\bf Institutional review board (IRB) approvals or equivalent for research with human subjects}
    \item[] Question: Does the paper describe potential risks incurred by study participants, whether such risks were disclosed to the subjects, and whether Institutional Review Board (IRB) approvals (or an equivalent approval/review based on the requirements of your country or institution) were obtained?
    \item[] Answer: \answerNA{} %
    \item[] Justification: Our paper is theoretical, hence it does not involve any crowdsourcing nor research with human subjects.
    \item[] Guidelines:
    \begin{itemize}
        \item The answer NA means that the paper does not involve crowdsourcing nor research with human subjects.
        \item Depending on the country in which research is conducted, IRB approval (or equivalent) may be required for any human subjects research. If you obtained IRB approval, you should clearly state this in the paper. 
        \item We recognize that the procedures for this may vary significantly between institutions and locations, and we expect authors to adhere to the NeurIPS Code of Ethics and the guidelines for their institution. 
        \item For initial submissions, do not include any information that would break anonymity (if applicable), such as the institution conducting the review.
    \end{itemize}

\item {\bf Declaration of LLM usage}
    \item[] Question: Does the paper describe the usage of LLMs if it is an important, original, or non-standard component of the core methods in this research? Note that if the LLM is used only for writing, editing, or formatting purposes and does not impact the core methodology, scientific rigorousness, or originality of the research, declaration is not required.
    \item[] Answer: \answerNA{} %
    \item[] Justification:
    \item[] Guidelines:
    \begin{itemize}
        \item The answer NA means that the core method development in this research does not involve LLMs as any important, original, or non-standard components.
        \item Please refer to our LLM policy (\url{https://neurips.cc/Conferences/2025/LLM}) for what should or should not be described.
    \end{itemize}

\end{enumerate}

\end{document}